\DeclareMathOperator*{\argmin}{argmin}
\DeclareFontFamily{U}{mathx}{\hyphenchar\font45}
\DeclareFontShape{U}{mathx}{m}{n}{<-> mathx10}{}
\DeclareSymbolFont{mathx}{U}{mathx}{m}{n}
\newcommand{\tabincell}[2]{\begin{tabular}{@{}#1@{}}#2\end{tabular}}
\definecolor{green}{rgb}{0.0,0.55,0.13}
\begin{document}

\title{Computing Approximate Graph Edit Distance \\ via Optimal Transport}

\author{Qihao Cheng}
\authornote{The first three authors contributed equally to this work.}
\affiliation{%
  \institution{Tsinghua University}
  \country{} 
}
\email{cqh22@mails.tsinghua.edu.cn}

\author{Da Yan}
\authornotemark[1]
\affiliation{%
  \institution{Indiana University Bloomington}
  \country{} 
}
\email{yanda@iu.edu}

\author{Tianhao Wu}
\authornotemark[1]
\affiliation{%
  \institution{Tsinghua University}
  \country{} 
}
\email{wuth20@mails.tsinghua.edu.cn}

\author{Zhongyi Huang}
\affiliation{%
  \institution{Tsinghua University}
  \country{} 
}
\email{zhongyih@tsinghua.edu.cn}

\author{Qin Zhang}
\affiliation{%
  \institution{Indiana University Bloomington}
  \country{} 
}
\email{qzhangcs@iu.edu}


\begin{abstract}
Given a graph pair $(G^1, G^2)$, graph edit distance (GED) is defined as the minimum number of edit operations converting $G^1$ to $G^2$. GED is a fundamental operation widely used in many applications, but its exact computation is NP-hard, so the approximation of GED has gained a lot of attention. Data-driven learning-based methods have been found to provide superior results compared to classical approximate algorithms, but they directly fit the coupling relationship between a pair of vertices from their vertex features. 
We argue that while pairwise vertex features can capture the coupling cost (discrepancy) of a pair of vertices, the vertex coupling matrix should be derived from the vertex-pair cost matrix through a more well-established method that is aware of the global context of the graph pair, such as optimal transport. 
In this paper, we propose an ensemble approach that integrates a supervised learning-based method and an unsupervised method, both based on optimal transport. Our learning method, GEDIOT, is based on inverse optimal transport that leverages a learnable Sinkhorn algorithm to generate the coupling matrix. Our unsupervised method, GEDGW, models  GED computation as a linear combination of optimal transport and its variant, Gromov-Wasserstein discrepancy, for node and edge operations, respectively, which can be solved efficiently without needing the ground truth. 
Our ensemble method, GEDHOT, combines GEDIOT and GEDGW to further boost the performance. Extensive experiments demonstrate that our methods significantly outperform the existing methods in terms of the performance of GED computation, edit path generation, and model generalizability.
\end{abstract}


\begin{CCSXML}
<ccs2012>
   <concept>
       <concept_id>10002950.10003624.10003633.10010917</concept_id>
       <concept_desc>Mathematics of computing~Graph algorithms</concept_desc>
       <concept_significance>500</concept_significance>
       </concept>
   <concept>
       <concept_id>10002951.10003227</concept_id>
       <concept_desc>Information systems~Information systems applications</concept_desc>
       <concept_significance>500</concept_significance>
       </concept>
 </ccs2012>
\end{CCSXML}

\ccsdesc[500]{Mathematics of computing~Graph algorithms}
\ccsdesc[500]{Information systems~Information systems applications}


\keywords{Graph edit distance, Optimal transport, Graph neural network}

\received{20 February 2007}
\received[revised]{12 March 2009}
\received[accepted]{5 June 2009}

\maketitle

\begin{figure}[t]
  \centering
  \includegraphics[width=0.8\columnwidth]{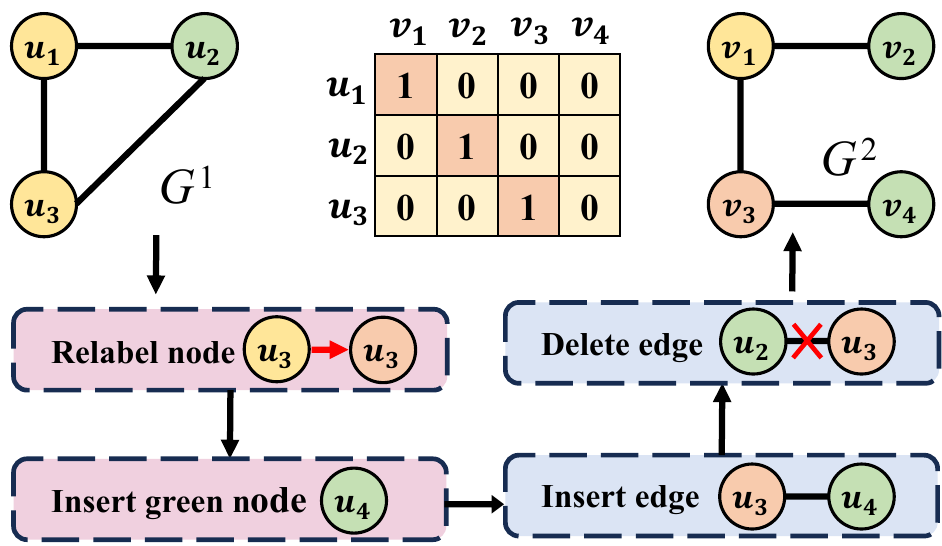}
  \vspace{-2mm}
  \caption{A toy example of graph pair $(G^1,G^2)$}
  \label{fig:matching}
  \vspace{-4mm}
\end{figure}

\begin{figure*}[t]
    \centering
    \includegraphics[width=1\linewidth]{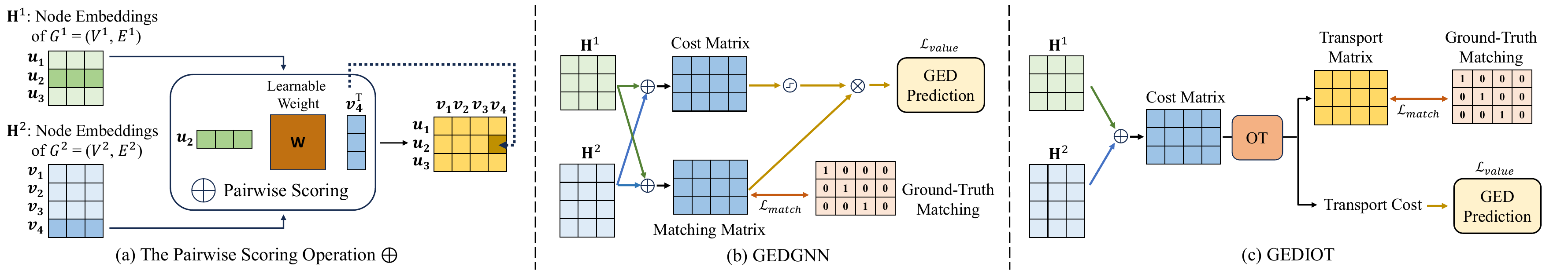}
    \vspace{-7mm}
    \caption{OT Motivation and Learning-based Model Comparison}
    \vspace{-4mm}
    \label{fig:difference}
\end{figure*}

\section{Introduction}
Graph edit distance (GED) is one of the most widely used graph similarity metrics, which is defined as the minimum number of edit operations that transform one graph to the other. {GED has wide applications, such as graph similarity search~\cite{DBLP:conf/icde/LiangZ17,DBLP:conf/icde/WangDTYJ12,zeng2009comparing,DBLP:journals/pvldb/ZhaoXLLZ13,DBLP:conf/cikm/ZhengZLWZ13}, graph classification~\cite{DBLP:conf/sspr/RiesenB08,riesen2009Hungarian}, and inexact graph matching~\cite{bunke1983inexact}. Scenarios include handwriting recognition~\cite{DBLP:conf/gbrpr/FischerSFRB13}, image indexing~\cite{DBLP:journals/imst/XiaoGTL08}, semantic image matching~\cite{wang2021combinatorial}, and investigations of antiviral drugs~\cite{zhao2012efficient}, etc. }
The lower part of Figure~\ref{fig:matching} illustrates the edit path (i.e., sequence of edit operations) of a graph pair $(G^1, G^2)$ with GED $=4$. The edit path with the minimum length is called Graph Edit Path (GEP), so the length of a GEP is exactly the GED. 

Existing methods for GED computation can be categorized into the following three types. {\bf (1) Exact Algorithms.} GED can be computed exactly by the A* algorithm~\cite{riesen2013novel}, but due to being NP-hard~\cite{zeng2009comparing}, it is time-consuming even for a pair of 6-node graphs~\cite{blumenthal2020exact}. {\bf (2) Approximate Algorithms.} To make computation tractable, approximate algorithms are proposed based on discrete optimization or combinatorial search such as A*-Beam~\cite{neuhaus2006A*beam}, Hungarian~\cite{riesen2009Hungarian} and VJ~\cite{fankhauser2011VJ}. {A*-beam restricts the search space of A* algorithm, which is still an exponential-time algorithm. Hungarian and VJ convert the GED computation to a linear sum assignment problem and find the optimal node matching between two graphs, which takes $O(n^3)$ time.} Moreover, these heuristic methods lack a theoretical guarantee and generate results of inferior quality. {\bf (3) Learning-based Methods.} Recent studies turn to data-driven methods based on graph neural networks (GNN) to achieve better performance~\cite{bai2019simgnn,yang2021noah,piao2023gedgnn,bai2021tagsim,ranjan2022greed}. {Differing from the approximate algorithms, learning-based methods extract intra-graph and inter-graph information by generating node and graph embeddings, which are then used to predict GEDs with smaller errors within $O(n^2)$ time in the worst case.} The two most recent works, Noah~\cite{yang2021noah} and GEDGNN~\cite{piao2023gedgnn}, further support generating the edit path based on A*-beam search and $k$-best matching, respectively, to ensure the feasibility of the predicted GED.


However, a key issue remains with these learning-based methods. Specifically, they compute a pairwise vertex discrepancy matrix $\mathbf{A}$ where each element $\mathbf{A}_{i,j}$ corresponds to the coupling cost (discrepancy) of matching vertex~$i$ in $G^1$ to vertex~$j$ in $G^2$, and $\mathbf{A}_{i,j}$ is computed only from their vertex features. {As Figure~\ref{fig:difference}(a) shows, a shared operation of all existing learning-based methods (including our GEDIOT) is pairwise scoring, which given two node embedding matrices obtained from $G^1$ and $G^2$ (via a graph neural network), returns a matrix $\mathbf{A}$ where element $\mathbf{A}_{i,j}$ is the pairwise score computed from the embeddings of node~$u_i$ in $G^1$ and node~$v_j$ in $G^2$. 
Here, we use $\oplus$ to denote the pairwise scoring operation.} Existing learning-based models directly treat $\mathbf{A}$ as the vertex coupling matrix to fit the ground-truth vertex coupling relationship, but we argue that the coupling matrix should be derived from the pairwise discrepancy matrix $\mathbf{A}$ through a more well-established method that is aware of the global context of the graph pair, such as optimal transport~\cite{kolouri2017optimal}. 
{As illustrated by the bottom branch of Figure~\ref{fig:difference}(b) for GEDGNN~\cite{piao2023gedgnn}, they fit $\mathbf{A}$ directly to the 0-1 ground-truth node-matching matrix for GED. Note that the optimal node matching is a global decision: node~$u_i$ in $G^1$ is matched to node~$v_j$ in $G^2$ in the GED solution not only because they have similar labels and neighborhood structures, but also because, for example, node~$u_i$ in $G^1$ is not as similar to the other nodes (e.g., node~$v_k$, $k \neq j$) in $G^2$. However, $\mathbf{A}_{i,j}$ is computed only based on the embeddings of nodes~$u_i$ and $v_j$. }

To fundamentally address this drawback, we propose solutions based on the foundation of the Optimal Transport~(OT) theory. OT is a mathematical framework that focuses on finding the most efficient way to move and transform one distribution of mass into another, which has been successfully applied in various fields~\cite{kolouri2017optimal,xu2021vocabulary,courty2016optimal}. Laid upon rigid mathematical theory~\cite{villani2009optimal,peyre2019computational}, OT provides strong theoretical guarantees and well-understood properties. 
With the development of numerical algorithms, such as the Sinkhorn algorithm~\cite{cuturi2013sinkhorn}, it is particularly effective and efficient when embedding sentences or graph vertices as probabilistic distributions in the Wasserstein space derived from optimal transport~\cite{xu2021vocabulary, DBLP:conf/nips/XuLC19}. 

In this paper, we propose an ensemble approach that integrates a supervised learning-based method and an unsupervised method, both based on OT. Our learning-based method, GEDIOT, is based on inverse optimal transport (IOT)~\cite{stuart2020inverse,chiu2022discrete} that leverages a learnable Sinkhorn algorithm to generate the coupling matrix. {As Figure~\ref{fig:difference}(c) shows, our GEDIOT model takes the cost matrix computed by pairwise scoring, and passes it through an OT module to minimize the cost of transporting masses from nodes of $G^1$ to nodes of $G^2$, which returns the learned transport matrix that considers the global cost matrix when fitting the ground-truth node-matching matrix for GED. As our experiments have shown, adding OT after the pairwise-scoring-induced cost matrix brings significant performance improvement in both GED and GEP predictions. }

Based on optimal transport, we also propose an unsupervised method, GEDGW, that models GED computation as a linear combination of optimal transport and its variant, Gromov-Wasserstein~(GW) discrepancy, for node and edge operations, respectively, which can be solved efficiently without the ground truth. 
Our ensemble method, GEDHOT, combines GEDIOT and GEDGW to further boost the performance. 
Our contributions are listed as follows:
\begin{itemize}[leftmargin=*]
\item We propose a neural network architecture based on inverse optimal transport (where the cost matrix is learnable) that formulates the GED learning task as a bi-level optimization problem, named GEDIOT (GED with IOT), {which introduces the OT component to capture the global context effectively. } 
\item {To make OT applicable, GEDIOT extends the learned cost matrix with a dummy row and utilizes the Sinkhorn algorithm with a learnable regularization coefficient to integrate OT with neural networks for GED computation, improving the model performance and stability.} Since the coupling matrix can represent the confidence of node matching, we can also generate the edit path from it using the $k$-best matching algorithm of~\cite{piao2023gedgnn}.
\item We separate the edit operations into two types: vertex edit operations and edge edit operations. We then model the GED computation as an optimization problem combining optimal transport (for vertex edits) and its variant Gromov-Wasserstein discrepancy (for edge edits), leading to our unsupervised solution named GEDGW (Graph Edit Distance with Gromov-Wasserstein discrepancy). 
\item We combine GEDIOT and GEDGW into an ensemble method named GEDHOT (Graph Edit Distance with Hybrid Optimal Transport) for more accurate GED computation.
\item Extensive experiments show the superior performance of proposed methods. Compared with the state-of-the-art existing method GEDGNN~\cite{piao2023gedgnn}, the Mean Absolute Error~(MAE) on GED computation decreases by $20.5 \%$--$63.8\%$ with GEDIOT. Furthermore, the hybrid method GEDHOT achieves the best performance, where the MAE decreases by $31.2\%$--$72.3\%$ compared with GEDGNN. We also conduct experiments to verify the high-quality edit path generation and superior generalizability of our methods. 
\end{itemize}

The rest of this paper is organized as follows. Section~\ref{sec:related} reviews the related work, and Section~\ref{sec:preliminaries} defines our problem and presents the background of OT. Then, Section~\ref{sec:gediot} describes the proposed learning-based method GEDIOT, and Section~\ref{sec:gedgw} further proposes the unsupervised method GEDGW and the ensemble method GEDHOT, and analyzes the time complexity of our methods. Finally, Section~\ref{sec:experiment} reports our experiments, and Section~\ref{sec:conclusion} concludes this paper.

\vspace{-1mm}
\section{Related Work}
\label{sec:related}

\vspace{1mm}
\noindent\textbf{GED Computation. } Classical exact algorithms~\cite{blumenthal2020exact, chang2020speeding} seek the exact graph edit distance for each graph pair. Due to the NP-hardness of GED computation, they fail to generate solutions in a limited time when the graph size increases. {To make computation tractable, plenty of heuristic algorithms are proposed, including A*-Beam~\cite{neuhaus2006A*beam}, Hungarian~\cite{riesen2009Hungarian} and VJ~\cite{fankhauser2011VJ}, all of which provide an approximate GED in polynomial time. }
Recently, graph neural networks (GNN) have become popular since the extracted node and graph embeddings can greatly help the performance in various tasks~\cite{xiao2022graph,zhou2020graph,zhang2018link,zhang2022graph,zhao2021learned,wang2024neural,li2023coclep}. 
A number of GNN-based methods, such as SimGNN~\cite{bai2019simgnn}, TaGSim~\cite{bai2021tagsim}, Noah~\cite{yang2021noah}, MATA*~\cite{liu2023mata} and GEDGNN~\cite{piao2023gedgnn}, have also been proposed to generate embeddings for GED computation with adequate training data, which achieve the best performance in approximate GED computation. For a more detailed review of heuristic and GNN-based methods, please see Appendix~\ref{app:review}. 

\vspace{1mm} 
{\noindent\textbf{Graph Similarity Search. }Given a query graph and a threshold, graph similarity search retrieves all graphs from a database with GED to the query graph within the given threshold. An important step in this task is to verify whether the GEDs of graph pairs are smaller than the threshold. A series of works~\cite{kim2019inves,kim2021boosting,zhao2018efficient,liang2017similarity,gouda2016csi_ged,chang2022accelerating,chang2020speeding} are proposed to speed up the GED verification process between the database and the query graph. It is related to, but also distinct from, GED computation. They focus on the filtering technique of search space based on the threshold, while GED computation seeks the difference between a pair of graphs and has no threshold available for filtering. However, when setting the similarity threshold to infinity, the verification step can also be extended for GED computation~\cite{chang2020speeding,chang2022accelerating}.} 

\vspace{1mm}
\noindent\textbf{Optimal Transport. } The goal of optimal transport (OT)~\cite{peyre2019computational} is to minimize the cost of transporting mass from one distribution to another. It has been applied in various fields, including image and signal processing~\cite{kolouri2017optimal}, natural language processing~\cite{xu2021vocabulary}, and domain adaptation~\cite{courty2016optimal}. 
Inverse optimal transport (IOT)~\cite{stuart2020inverse,chiu2022discrete} is an inverse process to the classical optimal transport, which calculates the cost matrix from the coupling matrix. 
Recent studies~\cite{shi2023understanding, shiot} interpret classical contrastive learning as inverse optimal transport. DB-OT~\cite{shi2024double} applies inverse optimal transport to long-tailed classification. Legal case matching algorithms are proposed in~\cite{yu2022explainable} via inverse optimal transport. They all use the general inverse optimal transport with the cross-entropy loss to build an OT-assisted neural network model, and the relation between inverse optimal transport and graphs remains rarely studied as it requires careful design for different graph problems. While our proposed GEDIOT model is also based on IOT, as Section~\ref{sec:learnot} will describe, in order for the Sinkhorn algorithm to be applicable to GED prediction, we need to modify the OT constraints by incorporating a dummy supernode.


A few works have also applied OT and its variants to other graph problems (but not GED)~\cite{petric2019got,dong2020copt,wang2022neural,vincent2021semi,chapel2020partial}. 
One of the most important variants is Gromov-Wasserstein discrepancy~(GW)~\cite{DBLP:journals/focm/Memoli11,peyre2016gromov}, a measure used to compare two metric spaces, capturing the differences in their intrinsic geometric structures. GW has been applied for graph partitioning and graph matching~\cite{DBLP:conf/nips/XuLC19}. 
Fused GW~\cite{vayer2020fused} is a combination of optimal transport and GW, which has been successfully applied in graph classification and clustering. 
However, the optimization objective of Fused GW does not consider the edit costs of unmatched vertices in GED computation, but the size of $G^1$ and $G^2$ may not match for a given graph pair $(G^1, G^2)$, so as Section~\ref{sec:gedgw} will describe, our proposed GEDGW model first needs to add dummy nodes to incorporate such costs into the objective.

\vspace{-1mm}
\section{Preliminaries}
\label{sec:preliminaries}
This section introduces Graph Edit Distance~(GED), Graph Edit Path~(GEP), and the fundamental concepts of Optimal Transport~(OT) on graphs. All vectors default to column vectors unless otherwise specified. Table~\ref{Table:notations} summarizes important notations for quick lookup.

\begin{table}[t]
    \begin{center}
    \caption{Notations}
    \vspace{-3mm}
    \label{Table:notations}
    \resizebox{0.84\columnwidth}{!}{
        \begin{tabular}{c|c}
    	\hline
    	\textbf{Notation} & \textbf{Description} \\
    	\hline
            $G$ &a labeled undirected graph\\ \hline
            $V, \ E, \ L$ & the node, edge and label sets of $G$\\ \hline
            $(G^1,G^2)$ & the graph pair for GED computation\\ \hline
            $\mathbf{M}$ & node label matching matrix of $(G^1,G^2)$\\ \hline
            $\mathbf{A}^1, \  \mathbf{A}^2$ & adjacency matrices of $G^1$ and $G^2$  \\ \hline
            $\mathbf{H}^1, \  \mathbf{H}^2$ & final node embeddings of $G^1$ and $G^2$  \\ \hline            
            $GED(G^1,G^2)$ & the GED of graph pair $(G^1,G^2)$\\ \hline  
            $GEP(G^1,G^2)$ & the GEP of graph pair $(G^1,G^2)$\\ \hline
    	$\bm{\pi}$ &the coupling matrix between $G^1$ and $G^2$\\ \hline
            $\bm{\pi}^{*},\ GED^{*}(G^1,G^2)$ & ground truths of the graph pair $(G^1,G^2)$\\ \hline
            $\bm{1}_n,\ \bm{0}_n$ & the $n$-dimensional vectors full of $1$ and $0$\\ \hline
            $\cdot\|\cdot$ & the concatenation operator \\ \hline
            $\cdot \oslash \cdot$ & the element-wise division \\ \hline
            $\left<\mathbf{P}~,\mathbf{Q}\right>$ &the Frobenius dot-product $\sum_{i}\sum_{j}(P_{i,j}Q_{i,j})$\\ \hline
            $\mathcal{L}(\mathbf{C}^1,\mathbf{C}^2)$ &the 4-th order tensor $\left((\mathbf{C}^1_{i,j}-\mathbf{C}^2_{k,l})^2\right)_{i,j,k,l}$ \\ \hline
            $\mathcal{L}\otimes \mathbf{B}$ &the matrix $\left(\sum_{j,l}\mathcal{L}_{i,j,k,l}\mathbf{B}_{j,l}\right)_{i,k}$\\ \hline
        \end{tabular}}
    \end{center}
    \vspace{-3mm}
\end{table}

\subsection{Problem Statement}\label{ssec:def}
We consider two tasks: GED computation and GEP generation between two node-labeled undirected graphs $G^1=(V^1, E^1, L^1)$ and $G^2=(V^2, E^2, L^2)$. {We discuss GED computation of edge-labeled graphs in Appendix~\ref{app:edge-labeled}.} We denote $|V^1|=n_1, |E^1|=m_1$ and $|V^2|=n_2, |E^2|=m_2$. We assume that $n_1\le n_2$ as otherwise, we can swap $G^1$ and $G^2$. 

\vspace{1mm}
\noindent\textbf{Graph Edit Distance (GED).}
    Given the graph pair $(G^1, G^2)$, graph edit distance $GED(G^1, G^2)$ is the minimum number of edit operations that transform $G^1$ to $G^2$. An edit operation is an insertion/deletion of a node/edge or the relabeling of a node.

\vspace{1mm}
\noindent\textbf{Graph Edit Path (GEP).}
    The edit path of the graph pair $(G^1, G^2)$ is a sequence of edit operations that transform $G^1$ to $G^2$. The graph edit path $GEP(G^1, G^2)$ is the shortest one with length  $GED(G^1, G^2)$. 

Figure~\ref{fig:matching} shows a GEP of the graph pair $(G^1, G^2)$, where different colors denote different vertex labels and $GED(G^1,G^2)=4$.

\vspace{1mm}
\noindent {\textbf{Node Matching. }}
The node matching (hereafter we use the terms ``node'' and ``vertex'' interchangeably) of $(G^1,G^2)$ is an $n_1\times n_2$ binary matrix, denoted by  $\bm{\pi}\in\{0,1\}^{n_1\times n_2}$, where $\bm{\pi}_{i,k}=1$ if the node $u_i\in V^1$ matches $v_k\in V^2$, and $\bm{\pi}_{i,k}=0$ otherwise. Since we assume that $n_1\le n_2$, $\bm{\pi}$ satisfies the following constraints
\begin{equation}
\label{eq:constraint}
        \bm{\pi}\bm{1}_{n_2}=\bm{1}_{n_1}, \ \
        \bm{\pi}^\top\bm{1}_{n_1}\le\bm{1}_{n_2}, \ \
        \bm{1}_{n_1}^\top\bm{\pi}\bm{1}_{n_2} = n_1, 
\end{equation}
where $\bm{1}_{n}$ is the $n$-dimensional vector full of $1$, and $\mathbf{a} \le \mathbf{b}$ denotes that $\mathbf{a}_i\leq\mathbf{b}_i$ for the $i^\textsuperscript{th}$ elements of $\mathbf{a}$ and $\mathbf{b}$ for all $i$. 
Intuitively, the constraints ensure that each of the $n_1$ vertices of $G^1$ is matched to a vertex in $G^2$ {since Eq.~\eqref{eq:constraint} allows exactly one $``1"$ in each row and at most one $``1"$ in each column. As illustrated in the 0-1 matrix in Figure~\ref{fig:matching}, nodes $u_1$, $u_2$ and $u_3$ in $G^1$ are matched to $v_1$, $v_2$ and $v_3$ in $G^2$, respectively, and $v_4$ in $G^2$ is not matched.}

With a given node matching between $G^1$ and $G^2$, the edit path can be generated by traversing and comparing the differences between the labels and edges of all matching nodes in $G^1$ and $G^2$. {Specifically, (i)~we first check each node in $G^2$ to see if it is matched and if it has the same label as that of its matched node in $G^1$, if not, we add one to the Edit Distance~(ED). This takes $O(n_2)$ time. In Figure~\ref{fig:matching}, since $v_3$ and $u_3$ have different labels (hints a node relabeling), and $v_4$ (hints a node insertion) is not matched, we add 2 to ED. (ii)~We then check each edge in $G^1$ to see if the corresponding edge (decided by the two matched end-nodes) exists in $G^2$, and vice versa; if not, we add one to ED. This takes $O(m_1+m_2)$ time. In Figure~\ref{fig:matching} edge $(u_2, u_3)$ exists in $G^1$ but the corresponding $(v_2, v_3)$ based on node matching does not exist in $G^2$ (hints an edge deletion), and edge $(v_3, v_4)$ exists in $G^2$ but there is no corresponding edge in $G^1$ (hints an edge insertion), so we add 2 to ED. Overall, the number of edit operations is $4$. Note that the time complexity is linear (i.e., $O(n_2 + m_1 + m_2)$). The pseudo-code is shown in Algorithm~\ref{algo:path} in Appendix~\ref{app:kbest}. }

By relaxing the binary constraints of $\bm{\pi}\in\{0,1\}^{n_1\times n_2}$ to $\bm{\pi}\in[0,1]^{n_1\times n_2}$,  node matching can be connected with the optimal transport theory to be introduced as follows. 

\begin{figure}[t]
    \centering
    \includegraphics[width=1.0\linewidth]{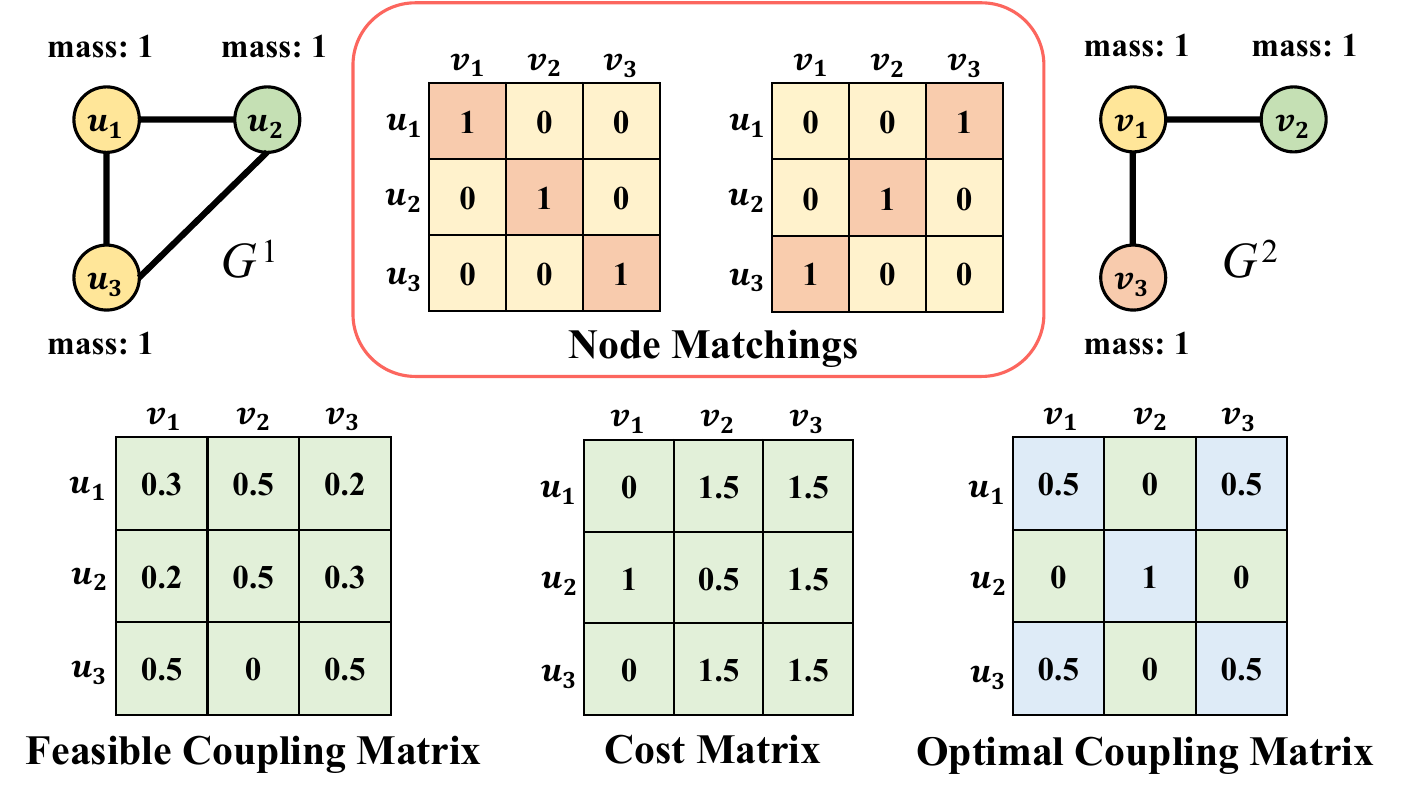}
    \vspace{-8mm}
    \caption{Example of Cost Matrix and Coupling Matrices}
    \vspace{-4mm}
    \label{fig:cost-coupling}
\end{figure}

\subsection{Background of Optimal Transport}

\vspace{1mm}
\noindent\textbf{Optimal Transport (OT). } The optimal transport problem seeks the most efficient way of transporting one distribution of mass into another. Given a graph pair $(G^1,G^2)$, where $G^1=(V^1,E^1,L^1)$ and $G^2=(V^2,E^2,L^2)$, we assume there are two pre-defined mass distributions $\bm\mu=\{\bm{\mu}_i\}_{i=1}^{n_1}$ and $\bm\nu=\{\bm{\nu}_j\}_{j=1}^{n_2}$ on nodes of $G^1$ and $G^2$, respectively. 
{For instance, when $n_1 = n_2$,  for all $u_i$ in $G^1$ and $v_j$ in $G^2$, we can set their masses as $\bm{\mu}_i = 1$ and $\bm{\nu}_j = 1$, which puts the same importance weight on every node. Figure~\ref{fig:cost-coupling} shows our mass distributions on $G^1$ and $G^2$ where every node has mass 1. } 
{Coupling matrix $\bm{\pi}\in\mathbb{R}^{n_1\times n_2}$ is a node-to-node mass transport matrix between $G^1$ and $G^2$, where each element $\bm{\pi}_{i,k}$ denotes the amount of mass transported from node $u_i\in V^1$ to $v_k\in V^2$. In our work, $\bm{\pi}_{i,k}$ is in the range $[0,1]$ reflecting the confidence that $u_i$ matches $v_k$. }
The feasible set of coupling matrices of $(G^1,G^2)$ is denoted by:
$$
\Pi(\bm\mu,\bm\nu)=\left\{\bm{\pi}\in\mathbb{R}^{n_1\times n_2}\,|\, \bm{\pi}\bm{1}_{n_2}=\bm\mu,~\bm{\pi}^\top\bm{1}_{n_1}=\bm\nu,~\bm{\pi}\ge 0\right\}. 
$$
{The lower-left corner of Figure~\ref{fig:cost-coupling} shows an example of a feasible $\bm\pi$. The feasible set of $\bm\pi$ basically relaxes Eq.~\eqref{eq:constraint} to allow values in $[0, 1]$, but still requires that elements in a row (resp.\ column) sum up to 1 (if $n_1\neq n_2$, dummy nodes need to be added as we will describe in Section~\ref{sec:learnot}. So we are basically generalizing Eq.~\eqref{eq:constraint} for the case when $G^1$ and $G^2$ have the same size, where $\bm{\pi}^\top\bm{1}_{n_1}=\bm{1}_{n_2}$). } 

With a given inter-graph node-to-node cost matrix $\mathbf{C}\in\mathbb{R}^{n_1\times n_2}$, where {$\mathbf{C}_{i,j}$ denotes the cost of transporting a unit of mass from $u_i\in V^1$ to $v_j\in V^2$, } OT finds 
{the optimal coupling matrix} 
$\bm{\pi}$ between $G^1$ and $G^2$ as follows:
\begin{equation}
    \label{eq:ot}
    \min_{\bm{\pi}\in\Pi(\bm\mu,\bm\nu)} \left<\mathbf{C},\bm{\pi}\right>,
\end{equation}
where $\left<\mathbf{C},\bm{\pi}\right> = \sum_i \sum_j \mathbf{C}_{i,j} \bm{\pi}_{i,j}$ is the Frobenius dot-product of $\mathbf{C}$ and $\mathbf{\bm{\pi}}$, and the optimal value is the so-called Wasserstein Distance or Earth Mover's Distance. 
{The lower center of Figure~\ref{fig:cost-coupling} shows a simple hand-crafted cost matrix $\mathbf{C}$ between graphs $G^1$ and $G^2$, defined as follows. Initially, we assume matrix $\mathbf{C}$ is all-zero. If the labels of $u_i$ in $G^1$ and $v_j$ in $G^2$ are different, we increase $\mathbf{C}_{i,j}$ by $1$. Let $d_i$ and $d_j$ be the degrees of $u_i$ and $v_j$. We further increase $\mathbf{C}_{i,j}$ by $|d_i - d_j| / 2$, since the difference between degrees is associated with the number of edge insertions/deletions. The constant $1/2$ is used to avoid double-counting of an edge on its two endpoints. Solving OT over this cost matrix gives the coupling matrix shown in the lower-right corner of Figure~\ref{fig:cost-coupling}, which indicates that $u_2$ is mapped to $v_2$, but $u_1$ can be mapped to either $v_1$ or $v_3$ with 50\% probability each. This directly corresponds to the two optimal node matchings illustrated in Figure~\ref{fig:cost-coupling}, which give a GED value of 2. }
A simple yet efficient method to solve Eq.~\eqref{eq:ot} is by introducing an entropy regularization term into the optimization objective~\cite{wilson1969use}:
\begin{equation}
    \label{eq:eot}
    \min_{\bm{\pi}\in\Pi(\bm\mu,\bm\nu)} \left<\mathbf{C},\bm{\pi}\right>+\varepsilon H(\bm\pi),
\end{equation}
where
$H(\bm\pi)= \sum_i \sum_j \bm{\pi}_{i,j} \left( \log\bm{\pi}_{i,j}-1\right) = \left<\bm\pi,\log(\bm\pi)-1\right>$ is the entropy function and $\varepsilon>0$ is the regularization coefficient. Leveraging the duality theory~\cite{boyd2004convex} and strict convexity of Eq.~\eqref{eq:eot}, the unique solution can be solved by the Sinkhorn algorithm as shown in Algorithm~\ref{algo:sinkhorn}~\cite{cuturi2013sinkhorn}, which alternately updates the dual variables $\bm\psi$ and $\bm\varphi$ to fit the specified mass distribution $\bm\mu$ and $\bm\nu$. 
For more details, please see Appendix~\ref{app:sinkhorn}. 

\begin{algorithm}[t]
\DontPrintSemicolon
    \KwIn{cost matrix $\mathbf{C}$, mass distributions $\bm\mu,~\bm\nu$, regularization coefficient $\varepsilon$, maximum iteration $\textit{maxiter}$}
    $\mathbf{K}\leftarrow\exp(-\mathbf{C}/\varepsilon)$, $\bm\varphi\leftarrow\bm{1}_{n_1}$\;\label{alg2:line1}
    \For{$m=1\text{ to }\textit{maxiter}$}{\label{alg2:line2}
        $\bm\psi\leftarrow\bm\nu\oslash(\mathbf{K}^\top\bm\varphi)$\;\label{alg2:line3}
        $\bm\varphi\leftarrow\bm\mu\oslash(\mathbf{K}~\bm\psi)$\;\label{alg2:line4}
    }
    $\bm{\pi}\leftarrow \text{diag}(\bm\varphi)\,\mathbf{K}\,\text{diag}(\bm\psi)$\;\label{alg2:line5}
    $w\leftarrow\left<\mathbf{C},\bm{\pi}\right>$\;\label{alg2:line6}
    \Return $\bm{\pi}$, $w$\;\label{alg2:line7}
\caption{Sinkhorn algorithm}\label{algo:sinkhorn}
\end{algorithm}
\setlength{\textfloatsep}{5pt}

\vspace{1mm}
\noindent\textbf{Gromov-Wasserstein Discrepancy (GW). } 
In practice, it is challenging to define a reasonable node-to-node cost matrix $\mathbf{C} \in \mathbb{R}^{n_1 \times n_2}$ without specified node embeddings for the two graphs $G^1$ and $G^2$. To address this issue, Gromov-Wasserstein discrepancy (GW)~\cite{DBLP:journals/focm/Memoli11, GW-JML} is introduced for graph alignment tasks~\cite{DBLP:conf/icml/VayerCTCF19, DBLP:conf/nips/XuLC19} as an extension of optimal transport. GW only requires the distances between nodes in the same graph, not inter-graph node distances. Specifically, GW is the optimal value of the following optimization objective:
\begin{equation}\label{eq:gw0}
    \min_{\bm{\pi}\in\Pi(\bm\mu,\bm\nu)}\sum_{i,j,k,l}(\mathbf{C}^1_{i,j}-\mathbf{C}^2_{k,l})^2\bm{\pi}_{i,k}\bm{\pi}_{j,l},
\end{equation}
where $\mathbf{C}^1$ and $\mathbf{C}^2$ are the pre-defined cost matrices (e.g., adjacency matrices, all-pair shortest paths) of graphs $G^1$ and $G^2$, respectively. 
Here, we choose the typical option of $(\mathbf{C}^1_{i,j}-\mathbf{C}^2_{k,l})^2$ to measure the mismatch between two edges $(i, j)\in E^1$ and $(k, l)\in E^2$, but more choices can be found in~\cite{peyre2016gromov}. 
Intuitively, $\bm{\pi}_{i,k}$ (resp.\ $\bm{\pi}_{j,l}$) represents the probability of matching nodes $u_i\in V^1$ and $v_k\in V^2$ (resp.\ $u_j\in V^1$ and $v_l\in V^2$), and Eq.~\eqref{eq:gw0} computes the expectation of edge-pair mismatch.


Let $\mathcal{L}(\mathbf{C}^1,\mathbf{C}^2)$ be the 4-th order tensor $\left((\mathbf{C}^1_{i,j}-\mathbf{C}^2_{k,l})^2\right)_{i,j,k,l}$ and $\mathcal{L}\otimes \bm{\pi}$ denotes the matrix $\left(\sum_{j,l}\mathcal{L}_{i,j,k,l}\bm{\pi}_{j,l}\right)_{i,k}$. Then the objective function can be rewritten into the following simple form:  
\begin{equation}
    \label{eq:gw}
    \min_{\bm{\pi}\in\Pi(\bm\mu,\bm\nu)}\left<\bm\pi,\mathcal{L}(\mathbf{C}^1,\mathbf{C}^2)\otimes \bm\pi\right>,
\end{equation}
which can be solved with the conditional gradient algorithm~\cite{vincent2021semi, braun2022conditional}. 


\section{Learning-Based Method: GEDIOT}
\label{sec:gediot}

In this section, we introduce \textbf{GEDIOT}, our neural network for GED computation based on inverse optimal transport. The training is an inverse process of OT to find (i.e., fit) the cost matrix given the ground-truth node coupling matrix of $(G^1, G^2)$, $\bm{\pi}^{*}$, as supervision.

\vspace{1mm}
\noindent\textbf{Motivation of introducing OT.} 
Recall that a node matching satisfies the constraints in Eq.~\eqref{eq:constraint}. Let us denote its feasible set by
\begin{equation}
    \label{eq:U}
U(\bm{1}_{n_1},\bm{1}_{n_2})=\left\{\bm{\pi}\ge0\,|\, \bm{\pi}\bm{1}_{n_2}=\bm{1}_{n_1},~\bm{\pi}^\top\bm{1}_{n_1}\le\bm{1}_{n_2},~\bm{1}_{n_1}^\top\bm{\pi}\bm{1}_{n_2} = n_1\right\}.      
\end{equation}
Previous learning-based models predict GED and node matching via the interaction information of node/graph embeddings~\cite{yang2021noah,bai2019simgnn,bai2021tagsim,piao2023gedgnn}, but they directly fit the predicted node matching with the ground-truth node coupling using binary cross-entropy loss, without considering all the constraints in $U(\bm{1}_{n_1},\bm{1}_{n_2})$ during the training process.

We propose a novel neural architecture, GEDIOT, for GED computation and GEP generation, which predicts only the node-to-node cost matrix $\mathbf{C}$ from the interaction information of node/graph embeddings, and relies on OT to obtain the node matching from $\mathbf{C}$ so that all the constraints in $U(\bm{1}_{n_1},\bm{1}_{n_2})$ are taken into consideration.

The training process is constructed as a bi-level optimization as formulated in Eq.~\eqref{eq:iot-ged}, 
where the inner minimization computes the coupling matrix $\widehat{\bm{\pi}}$ satisfying the constraints in $U(\bm{1}_{n_1},\bm{1}_{n_2})$ by solving an entropy-regularized OT problem that can be evaluated with our learnable Sinkhorn module, and the outer minimization calculates the difference between the coupling matrix and the ground truth to update the cost matrix $\widehat{\mathbf{C}}$ via backpropagation. 

\begin{align}
    \label{eq:iot-ged}
&\min_{\widehat{\mathbf{C}}}~\mathcal{L}_m\left(\bm{\pi}^{*},\widehat{\bm{\pi}}\right)+\mathcal{L}_v\left(\textit{GED}^{*},\widehat{\textit{GED}}\right),\\
    &\text {where }\widehat{\bm{\pi}}=\argmin_{\bm{\pi}\in U(\bm{1}_{n_1},\bm{1}_{n_2})}\left<\widehat{\mathbf{C}},\bm{\pi}\right>+\varepsilon H(\bm{\pi}),\notag \\
&\quad\qquad\widehat{\textit{GED}}=\left<\widehat{\mathbf{C}},~\widehat{\bm{\pi}}\right>.  \notag
\end{align}
Here, $\bm{\pi}^{*}$ and $GED^*$ are the ground truth coupling matrix and GED of graph pair ($G^1$, $G^2$), respectively. Note that $\bm{\pi}^{*} \in \{ 0, 1 \}^{n_1 \times n_2}$ is a one-to-one mapping and there are $(n_2 - n_1)$ full-zero columns. 
During test, computing GED is simply to solve the (inner) entropy-regularized OT problem, which is thus effective and interpretable.



In Eq.~\eqref{eq:iot-ged}, $\widehat{\mathbf{C}}\in\mathbb{R}^{n_1\times n_2}$ is a learnable cost matrix that encodes the cost of matching each vertex pair across $G^1$ and $G^2$, and $\widehat{\bm{\pi}}\in\mathbb{R}^{n_1\times n_2}$ denotes the coupling matrix induced from $\widehat{\mathbf{C}}$ by minimizing the inner optimization problem. 
Recall from Section~\ref{ssec:def} that when relaxing the binary constraints of $\bm{\pi}\in\{0,1\}^{n_1\times n_2}$ to $\bm{\pi}\in[0,1]^{n_1\times n_2}$,  Eq.~\eqref{eq:constraint} basically defines $\bm{\pi}_{i,j}$ to be the probability mass transported from $u_i\in V^1$ to $v_j\in V^2$, and the row $\bm{\pi}_{i}$ defines the distribution of transported probability mass from $u_i$ to vertices of $V^2$. 
In Eq.~\eqref{eq:iot-ged}, the GED value is approximated with $\left<\widehat{\mathbf{C}},~\widehat{\bm{\pi}}\right>=\sum_{i=1}^{n_1}\sum_{j=1}^{n_2}\widehat{\mathbf{C}}_{i,j}\widehat{\bm{\pi}}_{i,j}$ since $\sum_{j=1}^{n_2}\widehat{\mathbf{C}}_{i,j}\widehat{\bm{\pi}}_{i,j}$ is the expected cost to transport mass from $u_i$, so $\left<\widehat{\mathbf{C}},~\widehat{\bm{\pi}}\right>$ is the expected cost to transport all mass from $G^1$. In the special case when $\bm{\pi}\in\{0,1\}^{n_1\times n_2}$, $\left<\widehat{\mathbf{C}},~\widehat{\bm{\pi}}\right>$ basically adds up the costs of transporting mass from $u_i$ to its matched target in $V^2$ for all $u_i\in V^1$; and the first term in the outer minimization encourages a sparse $\widehat{\bm{\pi}}$ since the ground-truth $\bm{\pi}^{*}$ is sparse.

The objective of the outer optimization contains two terms designed for our two tasks: GED computation and GEP generation. Specifically, $\mathcal{L}_m$ is the matching loss for GEP generation, which we use Binary Cross-Entropy~(BCE) loss between the ground truth $\bm{\pi}^{*}$ and the learned coupling matrix $\widehat{\bm{\pi}}$ {that is then fed into the $k$-best matching framework~\cite{chegireddy1987algorithms,piao2023gedgnn} as described in Section~\ref{sec:gep}. }
$\mathcal{L}_v$ is the value loss for GED computation, which we adopt Mean-Squared Error (MSE) between the ground truth GED and the learned one {obtained from both node and graph embeddings}. 
The inner entropy-regularized OT of Eq.~\eqref{eq:iot-ged} provides an optimal coupling matrix with the current cost matrix $\widehat{\mathbf{C}}$. Then, the outer minimization fits the learned coupling matrix and GED to the ground truths to optimize the neural parameters in the model. Notably, we will formulate $\widehat{\mathbf{C}}$ further using the node features extracted from $(G^1, G^2)$ by GNN (see Figure~\ref{fig:difference}), so ``$\min_{\widehat{\mathbf{C}}}$'' in the outer optimization actually optimizes on the parameters of feature extraction network. More analysis of the process of Eq.~\eqref{eq:iot-ged} can be found in Appendix~\ref{app:error}, where we delve into the gap between the learned and ground truth. 

\begin{figure*}
  \centering
  \includegraphics[width=\textwidth]{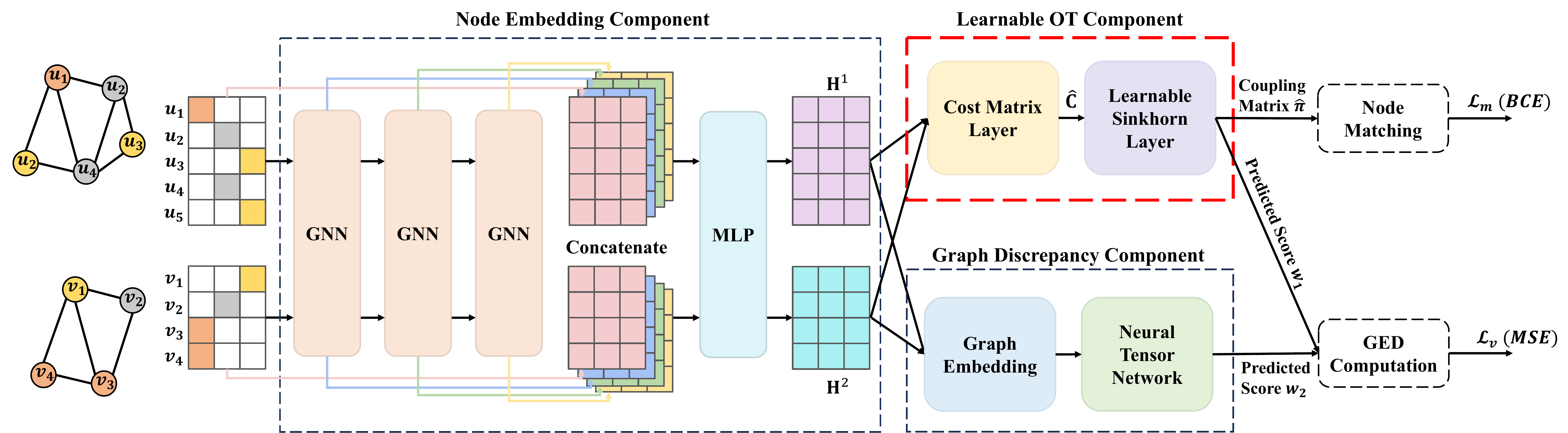}
  \vspace{-2mm}
  \caption{The architecture of GEDIOT}
  \label{fig:architecture}
  \vspace{-2mm}
\end{figure*}

\vspace{1mm}
\noindent\textbf{Model Overview.} Figure~\ref{fig:architecture} illustrates the network architecture of GEDIOT, including three main components: (1)~node embedding component, (2)~learnable OT component, and (3)~graph discrepancy component. {We highlight our new OT module with a red dashed frame in Figure~\ref{fig:architecture}. }
In the node embedding component, 
a GNN is employed to generate node embeddings with multiple graph convolution layers. For both graphs ($G^1$ and $G^2$), the node embeddings outputted by all layers are concatenated to aggregate information from neighbors of different hops (instead of only the last-hop neighbors), to alleviate the GNN over-smoothing issue~\cite{rusch2023survey,qureshi2023limits}. 
The concatenated embeddings are then passed through an MLP to derive the final node embeddings of the desired dimension $d$, {denoted by $\mathbf{H}^1\in\mathbb{R}^{n_1\times d}$ (for $G^1$) and $\mathbf{H}^2\in\mathbb{R}^{n_2\times d}$ (for $G^2$).} Subsequently, the learnable OT component extracts the node-matching matrix from the node embedding matrices through the OT process. This component includes a cost matrix layer that utilizes $\mathbf{H}^1$ and $\mathbf{H}^2$ to measure the node-to-node cost matrix $\widehat{\mathbf{C}}$, and a learnable Sinkhorn layer to read out the learned coupling matrix $\widehat{\bm{\pi}}$ via the Sinkhorn algorithm with a learnable regularization coefficient. This component also provide a GED score $w_1=\left<\widehat{\mathbf{C}},\widehat{\bm{\pi}}\right>$.
Additionally, a graph discrepancy component is employed to measure the edit operations of unmatched nodes (e.g., the $(n_2-n_1)$ nodes in $G^2$) from the graph-to-graph level, which outputs another score $w_2$ for GED prediction. This component includes a neural network to generate the graph embeddings and a neural tensor network (NTN)~\cite{bai2019simgnn} to calculate the predicted score $w_2$. 
Finally, scores $w_1$ and $w_2$ are combined to compute $GED(G^1, G^2)$.
{Note that all the sizes of parameters are user-defined (e.g., embedding dimension $d$) and independent of graph sizes (see more details in Appendix~\ref{app:parameters-sizes}). Figure~\ref{fig:architecture} marks the learnable parts ($\mathbf{H}^1, \mathbf{H}^2, \widehat{\mathbf{C}}, \widehat{\bm{\pi}}$) in GEDIOT for ease of understanding. }

\vspace{-2mm}
\subsection{Node Embedding Component}\label{ssec:NEC}
In this component, a GNN and an MLP are employed to capture the graph topology information and generate the final node embedding. 

\vspace{1mm}
\noindent\textbf{GNN Module.} We adopt a siamese GNN to generate node embeddings by graph convolution operations, following previous graph similarity learning models~\cite{piao2023gedgnn,liu2023mata,ranjan2022greed}. 
Given the graph pair $(G^1, G^2)$, nodes in both $G^1$ and $G^2$ are embedded with the shared network through node feature propagation and aggregation.

Concretely, Graph Isomorphism Network~(GIN)~\cite{xu2018powerful} is adopted to capture the graph topology, since GIN has been shown to be as powerful as the Weisfeiler-Lehman (WL) graph isomorphism test in differentiating different graph structures~\cite{shervashidze2011weisfeiler}. For a graph $G=(V,E,L)$, we initialize the node embedding $\mathbf{h}^{(0)}(u)$ for $u\in V$ as the one-hot encoding of its label. 
If graphs are unlabeled, we set each initial node embedding as a constant number following previous works~\cite{piao2023gedgnn,bai2019simgnn}. 
In the $i\textsuperscript{th}$ layer, the embedding of node $u$, denoted by $\mathbf{h}^{(i)}(u)$, is updated from itself and its neighbors as
\begin{equation}
    \label{eq:app1}
    \mathbf{h}^{(i)}(u)=\text{MLP}\left(\left(1+\delta^{(i)}\right)\mathbf{h}^{(i-1)}(u)+\sum_{v\in\mathcal{N}(u)}\mathbf{h}^{(i-1)}(v)\right)
\end{equation}
where $\delta^{(i)}$ is a learnable parameter of each layer and $\mathcal{N}(u)$ is the set of neighbors of $u$.

\vspace{1mm}
\noindent\textbf{MLP Module.} As the features propagate via GIN, higher-order graph structural information is fused into node embeddings, which may cause over-smoothed node embeddings at the last layer. Note that various GIN layers contain different orders of topological information: $\mathbf{h}^{(0)}(u)$ represents the features of $u$ itself whereas $\mathbf{h}^{(i)}(u)$ contains the feature information from its $i\textsuperscript{th}$-hop neighbors. To obtain sufficiently rich node embeddings for more accurate GED computation, we concatenate the node embeddings from all GIN layers:
$\mathbf{h}=\left[\mathbf{h}^{(0)}\|\mathbf{h}^{(1)}\|\cdots\|\mathbf{h}^{(k)}\right]$.
The concatenated embedding $\mathbf{h}$ is then fed to an MLP to produce the final node embedding $\mathbf{H}\in\mathbb{R}^{n\times d}$:
\begin{equation}
\label{eq:app2}
\mathbf{H}=\text{MLP}\left(\mathbf{h}\right)=\text{MLP}\left(\left[\mathbf{h}^{(0)}\|\mathbf{h}^{(1)}\|\cdots\|\mathbf{h}^{(k)}\right]\right). 
\end{equation}

Suppose that the size of input $\mathbf{h}$ is $n\times D$, then we use an MLP with three dense layers of $D\times2D$, $2D\times D$ and $D\times d$, respectively, to reduce the input $\mathbf{h}$ to the final node embeddings $\mathbf{H}\in\mathbb{R}^{n\times d}$. 

\subsection{Learnable OT Component}\label{sec:learnot}
This component includes a cost matrix layer to extract the cost matrix from node embeddings $\mathbf{H}^1$ and $\mathbf{H}^2$ extracted by the node embedding component introduced in Section~\ref{ssec:NEC}, and a learnable Sinkhorn layer to implement the inner entropy-regularized OT of Eq.~\eqref{eq:iot-ged} to generate the node matching from the cost matrix.

\vspace{1mm}
\noindent\textbf{Cost Matrix Layer.} This layer measures the node-to-node cost matrix $\widehat{\mathbf{C}}\in\mathbb{R}^{n_1\times n_2}$ for the graph pair $(G^1,G^2)$,
by multiplying the final node embeddings $\mathbf{H}^1$, $\mathbf{H}^2$ with a trainable parameter matrix: 
\begin{equation*}
    \widehat{\mathbf{C}}= f\left(\mathbf{H}^1\mathbf{W}(\mathbf{H}^2)^\top\right),
\end{equation*}
where $\widehat{\mathbf{C}}_{i,j}=f(\mathbf{H}^1_i\mathbf{W}(\mathbf{H}^2_{j})^T)=\sum_{k=1}^d\sum_{l=1}^df(\mathbf{H}^1_{i,k}\mathbf{W}_{k,l}\mathbf{H}^2_{l,j})$, $\mathbf{W}\in\mathbb{R}^{d\times d}$ is a learnable interaction matrix, and $f$ is an element-wise activation function. $\mathbf{W}_{k,l}$ can be regarded as a correlation weight for the $k\textsuperscript{th}$ dimension in embedding $\mathbf{H}^1$ and the $l\textsuperscript{th}$ dimension in embedding $\mathbf{H}^2$. 
In this work, we use tanh as the activation function:
\begin{equation}
    \label{eq:app3}
    \widehat{\mathbf{C}}=\tanh\left(\mathbf{H}^1\mathbf{W}(\mathbf{H}^2)^\top\right).
\end{equation}

\vspace{1mm}
\noindent\textbf{Learnable Sinkhorn Layer.} This layer is designed to solve the entropy-regularized OT numerically with the Sinkhorn algorithm in Algorithm~\ref{algo:sinkhorn}.
It takes the learned cost matrix $\mathbf{C}$ as input to generate the coupling matrix $\widehat{\bm{\pi}}$ and the predicted score $w_1$.

{Recall that the core process of the Sinkhorn algorithm is the alternate update of dual variables as shown in Lines~\ref{alg2:line3} and~\ref{alg2:line4} in Algorithm~\ref{algo:sinkhorn}:
\begin{equation*}
    \bm\psi\leftarrow\bm\nu\oslash(\mathbf{K}^\top\bm\varphi), \ \ \bm\varphi\leftarrow\bm\mu\oslash(\mathbf{K}~\bm\psi),
\end{equation*}
where $\mathbf{K} = \exp(-\mathbf{C}/\varepsilon)$ is related to the learned cost matrix $\mathbf{C}$ and regularization coefficient $\varepsilon$, $\bm\varphi$ and $\bm\psi$ are the dual variables, and $\bm\mu$ and $\bm\nu$ are the pre-defined mass distributions (e.g., all-1 vectors).
} However, the constraint set $U(\bm{1}_{n_1},\bm{1}_{n_2})$ in Eq.~\eqref{eq:U} has an inequality constraint $\bm{\pi}^\top\bm{1}_{n_1}\le \bm{1}_{n_2}$, 
which hinders applying the Sinkhorn algorithm directly, since the derivation of Sinkhorn as detailed in Appendix~\ref{app:sinkhorn} only allows equality constraints (with inequality constraints, the dual formulation would introduce additional conditions that require the Lagrangian multipliers to be non-negative for $\bm{\pi}^\top\bm{1}_{n_1}\le \bm{1}_{n_2}$). 
{To address this issue, we reconstruct an equivalent standard-form OT without the inequality constraint by extending the cost matrix $\mathbf{C}$ with a dummy row filled with $0$ and redefining mass distributions as $\widetilde{\bm\mu}, \widetilde{\bm\nu}$ as follows:
\begin{equation*}
            \widetilde{\mathbf{C}}=\begin{bmatrix}
         \widehat{\mathbf{C}} \\
         \bm{0}_{n_2}^\top
        \end{bmatrix},\ \ \ 
         \widetilde{\bm\mu}=[\bm{1}^\top_{n_1},\, n_2-n_1]^\top,\ \ \widetilde{\bm\nu}=\bm{1}_{n_2}. 
\end{equation*}

Accordingly, we denote the new constraint set by
$$\Pi(\widetilde{\bm\mu},\widetilde{\bm\nu}) = \left\{ \bm{\pi}\in\mathbb{R}^{(n_1+1)\times n_2} \ | \ \bm{\pi} \bm{1}_{n_2} = \widetilde{\bm{\mu}},\ \ \bm{\pi}^\top \bm{1}_{n_1+1} = \widetilde{\bm{\nu}},\ \ \bm{\pi}\ge0\right\},$$
and the standard-form OT is formulated as follows:
\begin{equation}
    \label{eq:temp_ot}
    \min_{\bm{\pi}\in\Pi(\widetilde{\bm\mu},\widetilde{\bm\nu})} \left<\widetilde{\mathbf{C}},\bm{\pi}\right>.
\end{equation}
}

\begin{figure}[t]
  \centering
  \includegraphics[width=0.45\columnwidth]{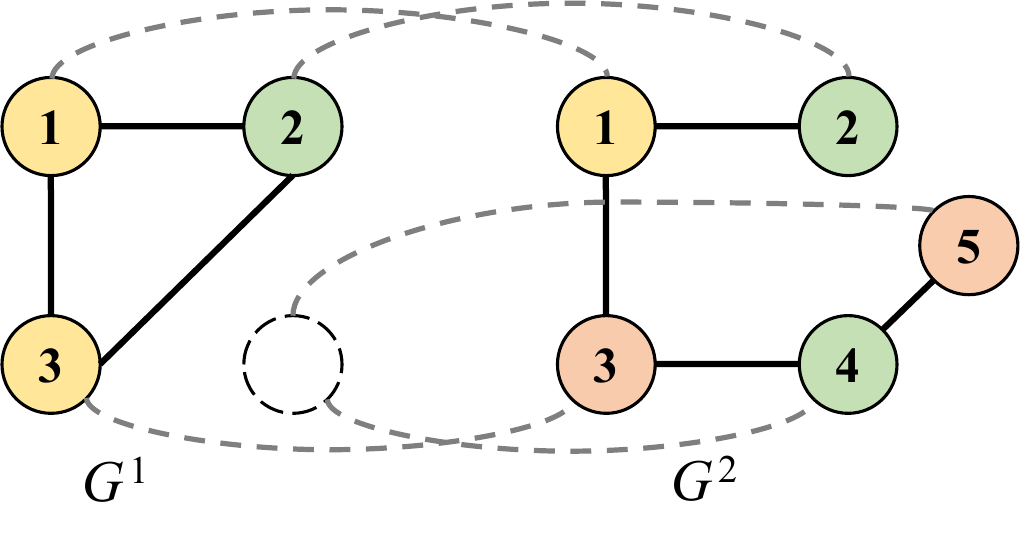}
  \vspace{-2mm}
  \caption{Illustration of the Dummy Supernode}
  \label{fig:dummy_GEDIOT}
\end{figure}
 
{Intuitively, the dummy row in $\widetilde{\mathbf{C}}$ basically adds a dummy supernode in $G^1$ to match $(n_2-n_1)$ nodes in $G^2$, as Figure~\ref{fig:dummy_GEDIOT} illustrates. We set the cost of matching the dummy supernode as 0 since our learnable OT component only accounts for the edit operations related to node matching (i.e., matching each of the $n_1$ node in $G^1$ towards $G^2$); while the edit cost induced by these $(n_2-n_1)$ nodes in $G^2$ will be handled by the graph discrepancy component (see Section~\ref{sec:ntn}).}   

By adding entropy regularization $\varepsilon H(\bm{\pi})$ to Eq.~\eqref{eq:temp_ot}, we can solve for $\bm{\pi}$ by the Sinkhorn algorithm. In each iteration, we update the dual variables $\widetilde{\bm\psi} \in \mathbb{R}^{n_2}$ and $\widetilde{\bm\varphi} \in \mathbb{R}^{n_1+1}$ alternately via:
\begin{equation}\label{eq:epsSinkhorn}    \widetilde{\bm\psi}\leftarrow\widetilde{\bm{\nu}}\oslash\left(\widetilde{\mathbf{K}}^\top\widetilde{\bm\varphi}\right), \ \ \widetilde{\bm\varphi}\leftarrow\widetilde{\bm{\mu}}\oslash\left(\widetilde{\mathbf{K}}\,\widetilde{\bm\psi}\right),
\end{equation}
where $\widetilde{\mathbf{K}}\leftarrow\exp(-\widetilde{\mathbf{C}}/\varepsilon)$ is the element-wise exponential of $-\widetilde{\mathbf{C}}/\varepsilon$. We stack the two operations as feedforward layers to implement the iterations. 
When the iterative updates converge, $\widetilde{\bm{\pi}}=\text{diag}(\widetilde{\bm\varphi})\,\widetilde{\mathbf{K}}\,\text{diag}(\widetilde{\bm\psi})$, and the learned coupling matrix $\widehat{\bm\pi}$ is exactly $\widetilde{\bm{\pi}}$ with the last row removed~\cite{chapel2020partial}. The predicted score $w_1$ is $\left<\widehat{\mathbf{C}},\widehat{\bm{\pi}}\right>$, which estimates the optimal cost of edit operations induced by node matching.

A question remains: how to set a proper regularization coefficient $\varepsilon$? 
While a smaller $\varepsilon$ leads to a closer approximation of the exact OT solution (without regularization). However, it also introduces a greater risk of numerical instability, which may lead to a divide-by-zero error. A straightforward approach is to set different $\varepsilon$ for different datasets manually to achieve satisfactory performance. Nevertheless, the selection of an appropriate $\varepsilon$ is costly.

Rather than fixing $\varepsilon$ in advance for different datasets, we treat it as a learnable parameter and optimize it by gradient descent during training. The regularization coefficient $\varepsilon$ is tuned for different datasets adaptively towards the optimal value, avoiding time-consuming manual adjustments. This is where the term ``learnable'' in the layer name originated (as Eq.~\eqref{eq:epsSinkhorn} is parameter-free).

{We also provide a concrete example from real-world datasets in  Appendix~\ref{app:case-study} to further illustrate our method. }

\vspace{-1mm}
\subsection{Graph Discrepancy Component}\label{sec:ntn}
{Recall that before the learnable Sinkhorn layer, we add a dummy supernode to $G^1$; when the layer completes and outputs $\widehat{\mathbf{\pi}}$, we remove the last row that corresponds to the dummy supernode. The learnable OT component captures only the edit operations induced by the node matching (from the node-to-node level), and some edit operations are not accounted for since $n_1\le n_2$. }
{We thus adopt another graph discrepancy component to supplement the unencoded information from the embedding of unmatched $(n_2-n_1)$ nodes in $G^2$ from the graph-to-graph level.} It includes a graph embedding layer to learn the embeddings of $G^1$ and $G^2$, and a neural tensor network (NTN)~\cite{bai2019simgnn} that reads out the graph discrepancy information from the graph embeddings to enhance GED prediction. 

Specifically, we first generate the graph-level embeddings with the node attentive mechanism~\cite{bai2019simgnn}. 
Given a graph $G$ (can be either $G^1$ or $G^2$) with node embedding matrix $\mathbf{H}\in\mathbb{R}^{n\times d}$ (can be either $\mathbf{H}^1$ or $\mathbf{H}^2$) extracted by our node embedding component, we first calculate the global graph context vector
\begin{equation}
\label{eq:app4}
\mathbf{h}_c=\tanh\left(\mathbf{W}_1\left(\frac{1}{n}\left(\sum_{i=1}^n\mathbf{H}_i\right)^\top\right)\right),
\end{equation}
which averages node features for all nodes of $G$ followed by a non-linear transformation, where $\mathbf{W}_1\in\mathbb{R}^{d\times d}$ is a learnable weight matrix and $\mathbf{H}_i$ is the $i\textsuperscript{th}$ row of $\mathbf{H}\in\mathbb{R}^{n\times d}$. Then, the attention weight of each node $v_i$ is computed as the inner product between $\mathbf{h}_c$ and $\mathbf{H}_i$ and normalized to the range $(0, 1)$, giving the node weight vector: $\mathbf{a}=\sigma(\mathbf{H}\mathbf{h}_c)\in\mathbb{R}^{n}$, where $\sigma$ is the sigmoid function. Finally, the graph embedding $\mathbf{h}_G\in\mathbb{R}^d$ is computed as the weighted sum of node embeddings: $\mathbf{h}_G=\sum_{i=1}^n\mathbf{a}_i\mathbf{H}_i$.

Now that we have obtained graph embeddings for $G^1$ and $G^2$, we use an NTN to calculate the graph-to-graph interaction vector $\mathbf{s}(G^1,G^2)\in\mathbb{R}^L$ where $L$ denotes the output dimension of NTN.
\begin{equation}
\label{eq:app5}
    \mathbf{s}(G^1,G^2)=\text{ReLU}\left(\mathbf{h}_{G^1}^\top\mathbf{W}_2^{[1:L]}\mathbf{h}_{G^2}+\mathbf{W}_3
        [\mathbf{h}_{G^1}^\top\|
        \mathbf{h}_{G^2}^\top]^\top+\mathbf{b}\right),
\end{equation}
where $\mathbf{W}_2^{[1:L]}\in\mathbb{R}^{L\times d\times d}$, $\mathbf{W}_3\in\mathbb{R}^{L\times 2d}$ and $\mathbf{b}\in\mathbb{R}^L$ are learnable, and $\mathbf{h}_{G^1}^\top\mathbf{W}_2^{[1:L]}\mathbf{h}_{G^2}$ denotes the following $L$-dimensional vector: 
 \begin{equation*}
\left[\mathbf{h}_{G^1}^\top\mathbf{W}_2^{(1)}\mathbf{h}_{G^2},\ \ \ \mathbf{h}_{G^1}^\top\mathbf{W}_2^{(2)}\mathbf{h}_{G^2},\ \ \ \ldots,\ \ \ \mathbf{h}_{G^1}^\top\mathbf{W}_2^{(L)}\mathbf{h}_{G^2}\right]^\top,
 \end{equation*}
 where $\mathbf{W}_2^{(i)}$ is the $i\textsuperscript{th}$ learnable weight matrix of $\mathbf{W}_2^{[1:L]}$.

Finally, we apply an MLP to progressively reduce the dimension of $\mathbf{s}(G^1,G^2)$ to a scalar, which outputs the predicted score $w_2$ to measure the edit operations of the unmatched nodes.

\vspace{-1mm}
\subsection{Model Training}\label{sec:train}
GEDIOT is supervised by the ground-truth $GED^{*}(G^1,G^2)$ and the corresponding coupling matrix $\bm{\pi}^{*}$ for node matching between two graphs $G^1$ and $G^2$ during the training process. As shown in Eq.~\eqref{eq:iot-ged}, the loss function consists of two parts: a value loss $\mathcal{L}_v$ to predict the GED and a matching loss $\mathcal{L}_m$ to predict the coupling matrix. The final loss function of GEDIOT is defined as
\begin{equation}
\label{eq:loss}
    \mathcal{L}=\lambda\mathcal{L}_v+(1-\lambda)\mathcal{L}_m,
\end{equation}
where we use a hyperparameter $\lambda$ to balance $\mathcal{L}_v$ and $\mathcal{L}_m$.

Since the range of $GED(G^1, G^2)$ is too large to train a neural network effectively, we normalize the ground-truth GED to the range $[0, 1]$, and the normalized ground-truth GED is given by:
\begin{equation*}
    \text{n}GED^{*}(G^1,G^2)=\dfrac{GED^{*}(G^1,G^2)}{\max(n_1,n_2)+\max(m_1,m_2)}, 
\end{equation*}
where the denominator on the right is the maximum number of edit operations that modify all nodes and edges to 
transform $G^1$ to $G^2$. 
To predict this normalized GED, we define the function:
\begin{equation*}
    \text{score}(G^1,G^2) = \sigma(w_1+w_2),
\end{equation*}
where $ w_1 = \left<\widehat{\mathbf{C}},\widehat{\bm{\pi}}\right>$ is the predicted score from the learnable OT component, and $w_2$ is the predicted score from NTN~\cite{bai2019simgnn}. Here, $\sigma$ is the sigmoid function to ensure that the prediction is within $(0,1)$.

We use MSE as the loss function for value:
\begin{equation*}
    \mathcal{L}_v =\left(\text{score}(G^1,G^2)-\text{n}GED^{*}(G^1,G^2)\right)^2,
\end{equation*}
and we fit the predicted coupling matrix with the ground-truth 0-1 matrix $\bm{\pi}^{*}$, by minimizing the binary cross-entropy loss (BCE) between the learned coupling matrix $\widehat{\bm{\pi}}$ and ground truth $\bm{\pi}^{*}$ : 
\begin{equation*}
    \begin{aligned}
       \mathcal{L}_m =  \dfrac{1}{n_1n_2}BCE\left(\bm{\pi}^{*}|\widehat{\bm{\pi}}\right), 
    \end{aligned}
\end{equation*}
\vspace{-2mm}
where 
\begin{equation*}
\begin{aligned}
    BCE\left(\bm{\pi}^{*}|\widehat{\bm{\pi}}\right)&= \sum_{i=1}^{n_1} \sum_{j=1}^{n_2} \bm{\pi}^{*}_{i,j} \log \widehat{\bm{\pi}}_{i,j} + \left(1-\bm{\pi}^{*}_{i,j}\right)\log\left(1-\widehat{\bm{\pi}}_{i,j}\right)\\
    &=\left<\bm{\pi}^{*},\ \log(\widehat{\bm{\pi}})\right>+\left<1-\bm{\pi}^{*},\ \log(1-\widehat{\bm{\pi}})\right> . 
\end{aligned}
\end{equation*}

\subsection{GEP Generation}\label{sec:gep}
Although we fit $\widehat{\bm{\pi}}$ to the ground-truth node matching $\bm{\pi}^{*}\in\{0,1\}^{n_1\times n_2}$, in practice when the model is trained, the learned coupling matrix $\widehat{\bm{\pi}}$ outputted by GEDIOT is not perfect but in the range $\bm{\pi}^{*}\in[0,1]^{n_1\times n_2}$ representing the confidence of node-to-node matching. 

\begin{figure}
  \centering
  \includegraphics[width=0.98\columnwidth]{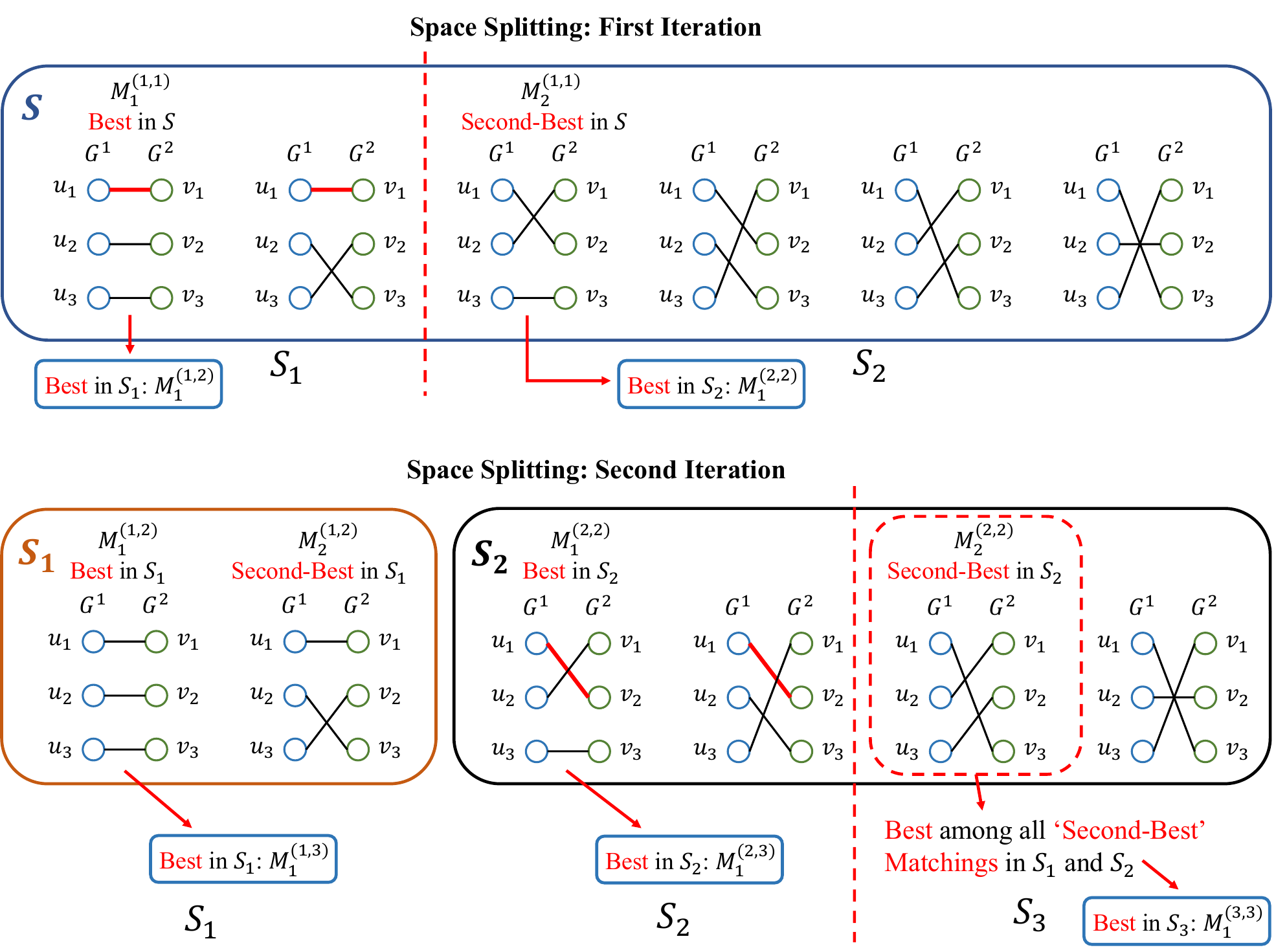}
  \vspace{-3mm}
  \caption{Example of Space Splitting of $k$-Best Matching}
  \label{fig:space-splitting}
\end{figure}

During inference, we adopt the $k$-best matching framework of~\cite{piao2023gedgnn} to generate $\widehat{GEP}(G^1, G^2)$ from the learned coupling matrix $\widehat{\bm{\pi}}$, which utilizes the solution space splitting method~\cite{chegireddy1987algorithms} to obtain a candidate set of $k$-best bipartite node matchings (based on the matching cost specified by the learned coupling matrix $\widehat{\bm{\pi}}$) and searches for the one with the shortest edit path as $\widehat{GEP}(G^1, G^2)$. 
Specifically, let $S$ be a set of node matchings {(Figure~\ref{fig:space-splitting} shows two graphs $G^1$ and $G^2$ each having $3$ nodes and all $6$ possible node matchings in $S$), in which we can find the best and second-best node matchings according to the matching cost from $\widehat{\bm{\pi}}$, denoted by $M_1^{(1,1)}$ and $M_2^{(1,1)}$, respectively, in $O(n^3)$ time~\cite{chegireddy1987algorithms}. 
The first (resp.\ second) ``1'' in the superscript $(1,1)$ means that the two matchings are in the first partition (resp.\ obtained in the first iteration). } 
Let $(u, v)$ be a node pair in $M_1^{(1,1)}$ but not in $M_2^{(1,1)}$ where $u \in V^1$ and $v \in V^2$. We can split $S$ into two subspaces $S_1$ and $S_2$, such that a node matching of $S$ is in $S_1$ if it contains $(u, v)$, and otherwise it is in $S_2$. {As shown in the upper part of  Figure~\ref{fig:space-splitting}, $u_1$ matches $v_1$ in the best matching in $S$, but $u_1$ does not match $v_1$ in the second-best matching in $S$. Then, we split $S$ into $S_1$ and $S_2$ according to whether $u_1$ matches $v_1$ in the first iteration.} Note that $M_1^{(1,1)}$ (resp.\ $M_2^{(1,1)}$) becomes the best node matching in $S_1$ (resp.\ $S_2$) after splitting, which we denote as $M_1^{(1,2)}$ (resp. $M_1^{(2,2)}$). We also search the new second-best node matchings in $S_1$ and $S_2$, denoted by $M_2^{(1,2)}$ and $M_2^{(2,2)}$, respectively. The entire node matching space is partitioned by repeatedly selecting a partition to split in this manner. Assuming that there are $t$ partitions and each has its best and second-best node matching $M^{(r,t)}_1$ and $M^{(r,t)}_2$, where $r=1,2,..,t$, the $(t+1)\textsuperscript{th}$ best node matching is $M^{(t^*,t)}_2$ of the partition $t^*$ with the best `second-best' node matching, so partition $t^*$ is selected for splitting. {Consider the lower part of Figure~\ref{fig:space-splitting}, where we assume the second-best matching $M^{(2,2)}_2$ in $S_2$ is better than the second-best matching $M^{(1,2)}_2$ in $S_1$. Since the best and second-best matchings in $S_2$ differ based on whether $u_1$ is matched to $v_2$, we further split $S_2$ accordingly. After splitting, the second-best matching $M^{(2,2)}_2$ in the original $S_2$ becomes the best matching in $S_3$, which we denote as $M^{(3,3)}_1$.} This process is repeated until $k$ partitions are reached, and GED lower-bound-based pruning~\cite{chang2020speeding,gouda2015improved} is integrated to prune the unfruitful branches. Finally, $2k$ node matchings (2 from each partition) are collected as the candidate set to find the shortest edit path. {More details can be found in Appendix~\ref{app:kbest}.}

\vspace{-2mm}
\section{Unsupervised Method: GEDGW}
\label{sec:gedgw}
Currently, learning-based methods~\cite{bai2019simgnn,yang2021noah,piao2023gedgnn,bai2021tagsim} show the best performance of approximate GED computation, but they need ground truth for training set. This section presents our unsupervised optimization approach, \textbf{GEDGW}, that is able to achieve performance comparable to learning-based methods. GEDGW is based on the Gromov-Wasserstein discrepancy, which bridges GED computation and optimal transport from an optimization perspective. 

\begin{figure}[t]
  \centering
  \includegraphics[width=0.45\columnwidth]{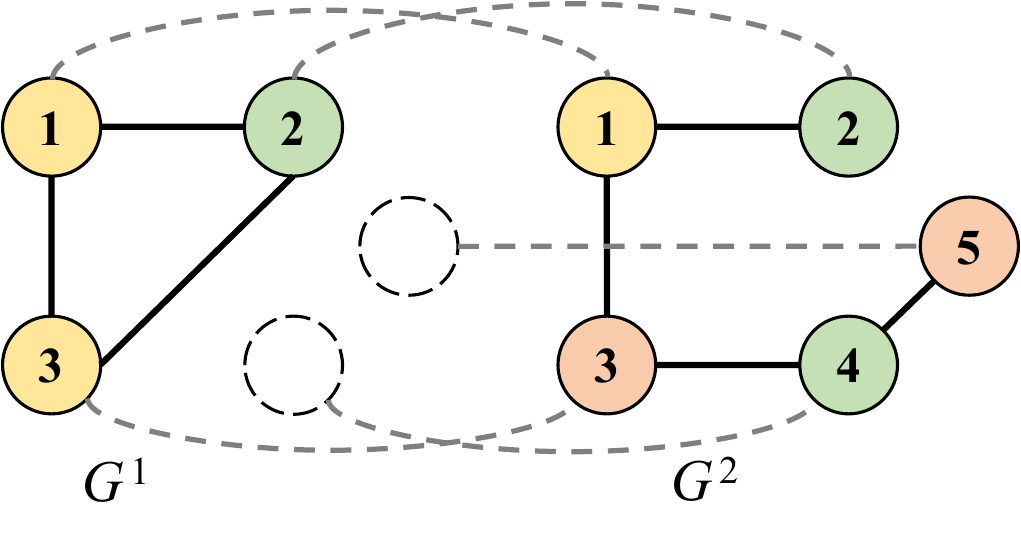}
  \vspace{-2mm}
  \caption{Illustration of Adding Dummy Nodes in $G^1$}
  \label{fig:GEDGW_dummy}
\end{figure}

\vspace{-1mm}
\subsection{Formulation of GEDGW}
\label{sec:eq_for_gedgw}
Recall that the total edit operations that transform $G^1$ to $G^2$ can be determined with a given node matching between $G^1$ and $G^2$, {where GED is the smallest one.} 
Consequently, the GED computation of the graph pair $(G^1, G^2)$ can be formulated as an optimization problem {related to} node matching.

Since there can be $(n_2-n_1)$ nodes in $G^2$ that do not match any nodes in $G^1$, we add $(n_2-n_1)$ dummy nodes in $G^1$ without any labels and edges following previous works~\cite{justice2006binary,riesen2013novel}, as Figure~\ref{fig:GEDGW_dummy} illustrates.
This ensures that the two graphs have the same number of nodes without affecting the GED computation. For simplicity, we abuse the notations to still denote the graph after adding dummy nodes by $G^1$ and let $n = n_2=\max \{n_1, n_2\}$ in this section. 

Given a node matching, we can derive its induced edit operations into those on nodes and edges. Accordingly, GED computation can be derived by solving the following quadratic programming problem where the first (resp.\ second) term in the objective models the cost of node (resp.\ edge) edit operations. Appendix~\ref{app:diagram_gedgw} provides a detailed illustration of the GEDGW formulation.  
\begin{align}\label{eq:ged-match}
\min_{\bm{\pi}}~ \sum_{i,k}\mathbf{M}_{i,k}&\bm{\pi}_{i,k}+\frac{1}{2}\sum_{i,j,k,l} (\mathbf{A}^1_{i,j}-\mathbf{A}^2_{k,l})^2 \bm{\pi}_{i, k} \bm{\pi}_{j, l}, \\
\text { s.t. } & \bm{\pi}\bm{1}_{n}=\bm{1}_{n},\ \ \bm{\pi}^\top\bm{1}_{n}=\bm{1}_{n},\ \ \bm{\pi} \in\{0,1\}^{n\times n}. \notag
\end{align}
Here, $\mathbf{M}\in\{0,1\}^{n\times n}$ is the node label matching matrix between nodes of $G^1$ and $G^2$, where $\mathbf{M}_{i,k}=1$ if nodes $u_i\in V^1$ and $v_k\in V^2$ have the same label; otherwise $\mathbf{M}_{i,k}=0$. Matrices $\mathbf{A}^1\in\{0,1\}^{n\times n}$ and $\mathbf{A}^2\in\{0,1\}^{n\times n}$ are the adjacency matrices of $G^1$ and $G^2$, respectively. The factor $\frac{1}{2}$ in the second term is to avoid the double counting of $\bm{\pi}_{i, k} \bm{\pi}_{j, l}$ and $\bm{\pi}_{j, l} \bm{\pi}_{i, k}$ since the graphs are undirected. 

More concretely, the first linear term of Eq.~\eqref{eq:ged-match} measures the cost of the node edit operations, including (1)~The insertion/deletion of a node as indicated by matching a node in $G^2$ and a dummy node in $G^1$, and (2)~the relabeling operation as represented by matching a node in $G^2$ to an original node in $G^1$ whose labels are different.

The second quadratic term measures the cost of edge insertion/deletion since each element $(\mathbf{A}^1_{i,j}-\mathbf{A}^2_{k,l})^2\bm{\pi}_{i,k}\bm{\pi}_{j,l}$ in the sum measures whether edge $(u_i, u_j)\in E^1$ and edge $(v_k, v_l)\in E^2$ exist simultaneously when $u_i$ matches $v_k$ and $u_j$ matches $v_l$. 

After relaxing the binary constraint to allow elements of $\bm\pi$ to take values in $[0,1]$, the solution $\bm\pi$ represents the confidence of node-to-node matching between $G^1$ and $G^2$. 
Note that Eq.~\eqref{eq:ged-match} with relaxation on binary variables can be regarded as a linear combination of optimal transport (OT) and Gromov-Wasserstein Discrepancy (GW), {where the first linear term models the edit operations on nodes as an OT problem (the right part of Figure~\ref{fig:diagram_gedgw} in Appendix~\ref{app:diagram_gedgw}) and the second quadratic term models the edit operations on edges as a GW problem (the left part of Figure~\ref{fig:diagram_gedgw} in Appendix~\ref{app:diagram_gedgw}).} So we call this method GEDGW. The optimization problem of GEDGW is reformulated as follows: 
\begin{equation}
\label{eq:ged-fgw}
   \min_{\bm{\pi}\in\Pi(\bm{1}_{n},\bm{1}_{n})}~ \left<\bm{\pi},\mathbf{M} \right>+\dfrac{1}{2}\left<\bm{\pi},\mathcal{L}(\mathbf{A}^1,\mathbf{A}^2)\otimes\bm{\pi}\right> 
\end{equation}
where $\Pi(\bm{1}_{n},\bm{1}_{n})=\left\{\bm{\pi}\in\mathbb{R}^{n\times n}\,|\, \bm{\pi}\bm{1}_{n}=\bm{1}_{n},~\bm{\pi}^\top\bm{1}_{n}=\bm{1}_{n}, \bm{\pi}\ge0\right\}$ is the feasible set of coupling matrices. 
We exploit the Conditional Gradient (CG) method~\cite{braun2022conditional,vincent2021semi} to solve GEDGW, which is presented in detail in Appendix~\ref{app:cg}. {An example in Appendix~\ref{app:case-study} further illustrates our GEDGW method. } 

\vspace{-1mm}
\subsection{Further Improvement by Ensembling}
Recall that GEDGW and GEDIOT model the GED computation from two different perspectives via optimal transport.
To achieve better performance, we combine these two OT-based methods into an ensemble \textbf{GEDHOT} (GED with Hybrid Optimal Transport), which combines the results from GEDGW and GEDIOT to enhance the performance of GED computation and GEP generation during test.
 
Specifically, given an input of graph pair $(G^1, G^2)$, we run GEDGW and GEDIOT to get the GEDs and coupling matrices denoted by $\widehat{GED}_{\text{GW}}(G^1, G^2)$ and $\widehat{\bm{\pi}}_{\text{GW}}$, $\widehat{GED}_{\text{IOT}}(G^1, G^2)$ and $\widehat{\bm{\pi}}_{\text{IOT}}$, respectively. Since GED is the minimum number of edit operations, we choose the smaller of $\widehat{GED}_{\text{GW}}(G^1, G^2)$ and $\widehat{GED}_{\text{IOT}}(G^1, G^2)$ as $\widehat{GED}(G^1,G^2)$. 
\begin{equation*}
    \widehat{GED}(G^1,G^2)=\min \left\{ \widehat{GED}_{\text{GW}}(G^1,G^2), \widehat{GED}_{\text{IOT}}(G^1,G^2) \right\}.
\end{equation*}
For GEP generation, we generate the best edit paths via the $k$-best matching framework~\cite{piao2023gedgnn} from $\widehat{\bm{\pi}}_{\text{GW}}$ and $\widehat{\bm{\pi}}_{\text{IOT}}$, respectively, and then choose the shorter one.

\vspace{-1mm}
\subsection{Time Complexity Analysis}
Due to space limitation, we provide a comprehensive analysis of the time complexity of our proposed methods in Appendix~\ref{app:time2}.

In a nutshell, for GEDIOT, since the model training can be done offline given a graph dataset, we consider the computation cost of its forward propagation, the time complexity of which is given by
$$O\left(N(md+nd^2+nN^2d^2)+Ld^2+nd^2+n^2d+Mn^2\right)\approx O(n^2),$$
where we assume that the number of GNN layers is $N$, the dimension of hidden layers of GNN and MLP is $d$, the output dimension of NTN is $L$, $n=n_2$, $m=\max(m_1,m_2)$ and $M$ denotes the number of iterations of the Sinkhorn algorithm. {As the hyperparameters are regarded as constants, the time complexity can be simplified to $O(n^2)$, which is the same as previous learning-based methods in the worst case.}
For GEP generation via the $k$-best matching framework, the time complexity is $O(kn^3)$.

For GEDGW, we use the CG method~\cite{braun2022conditional,vincent2021semi} (see Appendix~\ref{app:cg} for details) to solve Eq.~\eqref{eq:ged-fgw}. The time complexity of CG is bounded by $O(Kn^3)$, where $K$ is the number of iterations. For GEDHOT, the time complexity is $O(n^2+Kn^3)=O(Kn^3)$, and the time complexity to generate GEP using the $k$-best matching framework is $O(kn^3)$. {Note that both GEDGW and GEDHOT have the same time complexity as the classical heuristic algorithms (e.g. Hungarian and VJ).}

\section{Experiment}
\label{sec:experiment}

This section evaluates the performance of our proposed methods and compares with existing approximate GED computing methods. Our code is released at \url{https://github.com/chengqihao/GED-via-Optimal-Transport}. 

\begin{table}[!t]
    \begin{center}
    \caption{Statistics of Graph Datasets}
    \label{Table:graph}
    \vspace{-2mm}
    \resizebox{0.8\columnwidth}{!}{
        \begin{tabular}{c|cccccc}
    \hline
    $\mathcal{D}$  & $|\mathcal{D}|$ & $|V|_{avg}$ & $|E|_{avg}$ & $|V|_{max}$ & $|E|_{max}$ & $|L|$\\
    \hline
    AIDS &700&8.9 &8.8 &10 &14 &29\\ 
            LINUX &1000 &7.6 &6.9 &10 &13 &1\\ 
            IMDB &1500 &13 &65.9 &89 &1467 &1\\ \hline
        \end{tabular}}
    \end{center}
\end{table}

\vspace{-1mm}
\subsection{Datasets}\label{sec:dataset}
We use three real-world graph datasets: AIDS, Linux, and IMDB. Table~\ref{Table:graph} summarizes their statistics including the number of graphs ($|\mathcal{D}|$), the average number of nodes $(|V|_{avg})$ and edges ($|E|_{avg}$), the maximum number of nodes $(|V|_{max})$ and edges ($|E|_{max}$), and the number of labels ($|L|$). For graph pairs with no more than 10 nodes, we use the A* algorithm~\cite{riesen2013novel} to generate the exact ground truth, and for the remaining graphs with more than 10 nodes, we use the ground-truth generation technique in~\cite{piao2023gedgnn,bai2021tagsim} to generate 100 synthetic graphs for each graph. 
For each dataset, we sample 60\% graphs and pair every two of them to create graph pairs of the training set. As for the test set, we sample 20\% graphs; for each selected graph, 100 graphs are randomly chosen from the training graphs to generate 100 graph pairs for the test set. The validation set is formed in the same manner as the test set. Appendix~\ref{app:data} describes the datasets, data preprocessing, and dataset partitions in detail.

\vspace{-2mm}
\subsection{Compared Methods}
\label{subsec:baseline}
Recall that GEDGW is a non-learning approximation algorithm, GEDIOT is a learning-based method, and GEDHOT is a combination of both. We compare them with the classical approximation algorithms and learning-based methods.

\vspace{1mm}
\noindent\textbf{Classical Algorithms. }We select three representative classical approximate algorithms for GED computation. (1) {\bf Hungarian}~\cite{riesen2009Hungarian} is based on the Hungarian method for weighted graph matching which takes cubic time. (2) {\bf VJ}~\cite{fankhauser2011VJ} is based on bipartite graph matching which takes cubic time. (3)~\textbf{Classic} runs both Hungarian and VJ to find the GEPs, and takes the better GEP. 
We do not include the heuristic A*-beam algorithm~\cite{neuhaus2006A*beam} since Noah in the paragraph below is an optimized version of A*-beam with better performance.

\vspace{1mm}
\noindent\textbf{Learning-based Methods. }We choose four state-of-the-art learning-based methods for GED computation. (1) {\bf SimGNN}~\cite{bai2019simgnn} is the very first learning method applying GNN for GED computation. (2) {\bf Noah} and {\bf GPN}~\cite{yang2021noah}. Noah employs the well-designed graph path network (GPN) to optimize the search direction of the A*-Beam algorithm~\cite{neuhaus2006A*beam} to find GEP. Additionally, GPN can also be utilized independently for GED computation only. (3) {\bf TaGSim}~\cite{bai2021tagsim} categories edit operations to four different types, and learns the number of edit operations in each type to achieve competitive GED approximation. (4) {\bf GEDGNN}~\cite{piao2023gedgnn} is the latest method for both GED computation and GEP generation. See Section~\ref{sec:related} for a detailed review. 

\vspace{1mm}
\noindent\textbf{Our Methods. } We propose \textbf{GEDIOT}, \textbf{GEDGW}, and \textbf{GEDHOT} for comparison. The detailed setup can be found in Appendix~\ref{app:setup}. 

\subsection{Evaluation metrics}
We consider four kinds of metrics to evaluate the performance, which have been widely used~\cite{bai2019simgnn,bai2021tagsim,piao2023gedgnn,yang2021noah}. 

\vspace{1mm}
\noindent\textbf{Metrics for GED Computation. }(1) {\bf Mean Absolute Error} (MAE) measures the average absolute error between ground-truth GEDs and approximate GEDs. For a graph pair $(G^1,G^2)$, it is formulated as $|GED^{*}(G^1,G^2)-\widehat{GED}(G^1,G^2)|$. 
(2) {\bf Accuracy} measures the ratio of approximate GEDs that equal the ground-truth GEDs after rounding to the nearest integer. 
(3) {\bf Feasibility} measures the ratio that the approximate GEDs are no less than the ground-truth GEDs, so that a GEP of this length is feasible (i.e., can be found).

\vspace{1mm}
\noindent\textbf{Metrics for Ranking. } These metrics measure the matching ratio between the ranking results of the approximate GED and the ground truth. 
They include 
(4) {\bf Spearman’s Rank Correlation Coefficient} ($\rho$). (5) {\bf Kendall’s Rank Correlation Coefficient} ($\tau$). (6) {\bf Precision at $k$} ($p@k$). 
The first two metrics focus on global ranks while the last focuses on top $k$. We use $p@10$ and $p@20$. 

\vspace{1mm}
\noindent\textbf{Metrics for Path. } These metrics measure how well the generated edit path $GEP$ matches the ground-truth $GEP^{*}$. They include (7)~$Recall=\frac{|GEP\cap GEP^{*}|}{|GEP^{*}|}$, (8)~$Precision=\frac{|GEP\cap GEP^{*}|}{|GEP|}$, and (9)~F1 score defined as $F1=2\cdot\frac{Recall\cdot Precision}{Recall+Precision}$.

\vspace{1mm}
\noindent\textbf{Metrics for Efficiency. }(10) Running Time ($sec/100p$), where $p=$ ``pairs''. 
It records the time for every 100 graph pairs during test.

\begin{table*}[t]
    \centering
	\caption{Performance Evaluations of GED Computation.}
    \vspace{-2mm}
    \resizebox{1.38\columnwidth}{!}{
    \begin{tabular}{c|c|c c|c c c c|c|c}
    \hline
    \multirow{2}{*}{\tabincell{c}{ Datasets}} &\multirow{2}{*}{\tabincell{c}{ Methods}} & \multicolumn{2}{c|}{Value} & \multicolumn{4}{c|}{Ranking} & \multirow{2}{*}{\tabincell{c}{Feasibility $\uparrow$}} & \multirow{2}{*}{\tabincell{c}{Time $\downarrow$\\ ($sec/100p$)}}\\
    \cline{3-8}
      & &MAE $\downarrow$ &Accuracy $\uparrow$ &$\rho$ $\uparrow$ &$\tau$ $\uparrow$ &$p@10$ $\uparrow$ &$p@20$ $\uparrow$ & \\ \hline
      \multirow{9}{*}{\tabincell{c}{AIDS}} &SimGNN &0.880 &34.7\% &0.841 &0.704 &0.632 &0.741 &61.5\% &0.279 \\
                                           &GPN    &0.924 &35.6\% &0.816 &0.680 &0.606 &0.713 &66.5\% &\underline{0.245} \\
                                           &TaGSim &0.807 &37.4\% &0.862 &0.730 &0.669 &0.754 &66.2\% &\bf{0.087} \\
                                           &GEDGNN &0.763 &40.4\% &0.870 &0.742 &0.716 &0.774 &72.1\% &0.307 \\ 
                                           &GEDIOT&\underline{0.581} &\underline{49.7\%} &\underline{0.922} &\underline{0.813} &\underline{0.814} &\underline{0.853} &\underline{73.9\%} &0.318 \\ \cline{2-10}
                                           &Classic &6.594 &3.3\% &0.529 &0.418 &0.545 &0.614 &\bf{100\%} &1.463 \\
                                           &GEDGW &1.247 &41.2\% &0.789 &0.670 &0.752 &0.765 &\bf{100\%} &0.430 \\ \cline{2-10}
                                           &Noah &3.164 &5.6\% &0.704 &0.585 &0.681 &0.721 &\bf{100\%} &161.023 \\
                                           &GEDHOT &\bf{0.484} &\bf{59.3\%} &\bf{0.936} &\bf{0.838} &\bf{0.863} &\bf{0.885} &\underline{73.9\%} &0.745 \\
                                           \hline \hline                             

     \multirow{9}{*}{\tabincell{c}{Linux}} &SimGNN &0.408 &63.3\% &0.939 &0.856 &0.911 &0.916 &75.6\% &0.278 \\
                                           &GPN    &0.142 &87.1\% &0.959 &0.896 &0.947 &0.974 &90.5\% &\underline{0.265} \\
                                           &TaGSim &0.346 &69.6\% &0.937 &0.859 &0.888 &0.910 &85.9\% &\bf{0.069} \\
                                           &GEDGNN &0.094 &91.6\% &0.961 &0.897 &0.980 &0.976 &95.9\% &0.282 \\ 
                                           &GEDIOT &\underline{0.034} &\underline{97.2\%} &\underline{0.969} &\underline{0.911} &\underline{0.992} &\underline{0.995} &\underline{98.5\%} &0.326 \\ \cline{2-10}
                                           &Classic &2.471 &21.5\% & 0.785 &0.707 &0.762 &0.835 &\bf{100\%} &0.915 \\
                                           &GEDGW &1.198 &48.1\% &0.817 &0.705 &0.827 &0.811 &\bf{100\%} &0.382 \\ \cline{2-10}
                                           &Noah &1.736 &8.4\% &0.870 &0.798 &0.906 &0.936 &\bf{100\%} &71.646 \\
                                           &GEDHOT &\bf{0.026} &\bf{97.9\%} &\bf{0.970} &\bf{0.915} &\bf{0.994} &\bf{0.997} &\underline{98.5\%} &0.754 \\ \hline \hline
    \multirow{9}{*}{\tabincell{c}{IMDB}}   &SimGNN &1.191 &40.4\% &0.735 &0.648 &0.759 &0.799 &68.1\% &0.291 \\
                                           &GPN    &1.614 &28.2\% &0.742 &0.668 &0.669 &0.708 &34.3\% &\underline{0.229} \\
                                           &TaGSim &5.247 &14.8\% &0.496 &0.441 &0.666 &0.699 &47.7\% &\bf{0.095} \\
                                           &GEDGNN &0.735 &59.6\% &0.859 &0.781 &0.838 &0.856 &80.2\% &0.305 \\
                                           &GEDIOT &\underline{0.584} &65.3\% &\underline{0.930} &0.858 &0.902 &0.912 &78.6\% &0.347 \\ \cline{2-10}
                                           &Classic &12.980 &62.8\% &0.764 &0.718 &0.837 &0.831 &\bf{100\%} &3.483 \\
                                           &GEDGW &0.818 &\bf{83.0\%} &0.926 &\underline{0.896} &\underline{0.968} &\underline{0.951} &\underline{93.6\%} &0.247\\ \cline{2-10}
                                           &Noah & 10.467 & 38.4\% & 0.717 & 0.688 & 0.755 & 0.795 & \bf{100\%} & 4816.67 \\
                                           &GEDHOT &\bf{0.506} &\underline{69.9\%} &\bf{0.956} &\bf{0.899} &\bf{0.978} &\bf{0.972} &73.1\% &0.607 \\ \hline
    \end{tabular}
    }
    \begin{tablenotes}
        \item\ \ \ \ \ \ \ \ \ \ \ \ \ \ \ \ $\qquad\quad$ $\uparrow$: higher is better, $\downarrow$: lower is better\ \ $\qquad\quad$ \textbf{Bold}: best, \underline{Underline}: runner-up.
    \end{tablenotes}
    \label{table:ged-prediction}
    \vspace{-2mm}
\end{table*}

\begin{table*}[t]
    \centering
	\caption{Performance Evaluations of GEP Generation.}
    \vspace{-2mm}
    \resizebox{1.44\columnwidth}{!}{
    \begin{tabular}{c|c|c c|c c c c|c c c|c}
    \hline
    \multirow{2}{*}{\tabincell{c}{ Datasets}} &\multirow{2}{*}{\tabincell{c}{ Methods}} & \multicolumn{2}{c|}{Value} & \multicolumn{4}{c|}{Ranking} & \multicolumn{3}{c|}{Path}
    &\multirow{2}{*}{\tabincell{c}{Time $\downarrow$\\ ($sec/100p$)}}\\
    \cline{3-11}
      & &MAE $\downarrow$ &Accuracy $\uparrow$ &$\rho$ $\uparrow$ &$\tau$ $\uparrow$ &$p@10$ $\uparrow$ &$p@20$ $\uparrow$ &$Recall$ $\uparrow$ &$Precision$ $\uparrow$ &$F1$ $\uparrow$ & \\ \hline
      \multirow{6}{*}{\tabincell{c}{AIDS}} 
                                           &Classic  &6.594 &3.3\% &0.529 &0.418 &0.545 &0.614 &0.572 &0.345 &0.423 &\bf1.752 \\
                                           &Noah &3.164 &5.6\% &0.704 &0.585 &0.681 &0.721 &0.609 &0.505 &0.548 &163.153 \\
                                           &GEDGNN    &1.503 &42.2\% &0.795 &0.690 &0.849 &0.838 &0.715 &0.646 &0.675 &\underline{56.439}   \\
                                           &GEDIOT &1.266 &49.9\% &0.814 &0.715 &\underline{0.881} &\underline{0.858} &\underline{0.756} &\underline{0.692} &\underline{0.719} &57.857\\ 
                                           &GEDGW &\underline{0.829} &\underline{53.2\%} &\underline{0.862} &\underline{0.774} &0.842 &\underline{0.858}  &0.715 &0.675 &0.692 &57.102 \\
                                           &GEDHOT &\bf0.440 &\bf71.2\% &\bf0.923 &\bf0.864 &\bf0.951 &\bf0.935 &\bf0.809 &\bf0.786 &\bf0.796 &112.161  \\  \hline \hline                  
     \multirow{6}{*}{\tabincell{c}{Linux}} 
                                           &Classic &2.471 &21.5\% & 0.785 &0.707 &0.762 &0.835 &0.770 &0.541 &0.623 &\bf0.954  \\
                                           &Noah &1.736 &8.4\% &0.870 &0.798 &0.906 &0.936 &0.851 &0.772	&0.802 &73.018 \\
                                           &GEDGNN    &0.156 &93.5\% &0.970 &0.954 &0.987 &0.980 &0.917 &0.904 &0.909 &\underline{19.317} \\
                                           &GEDIOT &\underline{0.114} &\underline{95.4\%} &\underline{0.976} &\underline{0.965} &\underline{0.988} &\underline{0.987} &\underline{0.924} &\underline{0.914} &\underline{0.918} &19.514  \\ 
                                           &GEDGW &0.591 &72.2\% &0.898 &0.836 &0.925 &0.887 &0.837 &0.780 &0.802 &26.788  \\
                                           &GEDHOT &\bf0.033 &\bf98.4\% &\bf0.994 &\bf0.990 &\bf0.992 &\bf0.996 &\bf0.928 &\bf0.924 &\bf0.926 &47.523  \\  \hline \hline
    \multirow{6}{*}{\tabincell{c}{IMDB}}   
                                           &Classic &12.980 &62.8\% &0.764 &0.718 &0.837 &0.831 &0.833 &0.628 &0.654 &\bf3.663\\
                                           &Noah & 10.467 & 38.4\% & 0.717 & 0.688 & 0.755 & 0.795 & 0.845 & 0.670 & 0.682 & 4864.38 \\
                                           &GEDGNN    &3.574 &79.6\% &0.888 &0.859 &0.924 &0.924 &\underline{0.907} &0.808 &0.826 &93.893 \\
                                           &GEDIOT &3.638 &82.0\% &0.903 &0.878 &0.923 &0.928 &\underline{0.907} &\underline{0.816} &\underline{0.831} &93.091  \\ 
                                           &GEDGW &\underline{0.374} &\underline{93.2\%} &\underline{0.969} &\underline{0.955} &\underline{0.988} &\underline{0.983} &0.763 &0.736 &0.744 &\underline{81.948}  \\
                                           &GEDHOT &\bf0.254 &\bf95.0\% &\bf0.983 &\bf0.972 &\bf0.995 &\bf0.993 &\bf0.946 &\bf0.927 &\bf0.933 &170.412 \\  \hline
    \end{tabular}
    }
    \begin{tablenotes}
        \item\ \ \ \ \ \ $\qquad\quad$ $\uparrow$: higher is better, $\downarrow$: lower is better\ \ $\qquad\quad$ \textbf{Bold}: best, \underline{Underline}: runner-up.
    \end{tablenotes}
    \label{table:gep-generation}
    \vspace{-3mm}
\end{table*}

\begin{table*}[t]
    \centering
	\caption{GED Computation of Unseen Graph Pairs.}
    \vspace{-3mm}
    \resizebox{1.36\columnwidth}{!}{
    \begin{tabular}{c|c|c c|c c c c|c|c}
    \hline
    \multirow{2}{*}{\tabincell{c}{ Datasets}} &\multirow{2}{*}{\tabincell{c}{ Methods}} & \multicolumn{2}{c|}{Value} & \multicolumn{4}{c|}{Ranking} & \multirow{2}{*}{\tabincell{c}{Feasibility $\uparrow$}} & \multirow{2}{*}{\tabincell{c}{Time $\downarrow$\\ ($sec/100p$)}}\\
    \cline{3-8}
      & &MAE $\downarrow$ &Accuracy $\uparrow$ &$\rho$ $\uparrow$ &$\tau$ $\uparrow$ &$p@10$ $\uparrow$ &$p@20$ $\uparrow$ & \\ \hline
      \multirow{6}{*}{\tabincell{c}{AIDS}} &SimGNN &0.925 &34.4\% &0.808 &0.668 &0.631 &0.731 &63.6\% &0.284 \\
                                           &GPN    &1.038 &33.4\% &0.771 &0.631 &0.578 &0.683 &64.5\% &\underline{0.235} \\
                                           &TaGSim &0.880 &34.8\% &\underline{0.832} &0.694 &0.674 &0.739 &66.0\% &\bf0.093 \\
                                           &GEDGNN &\underline{0.826} &\underline{38.0\%} &0.831 &\underline{0.696} &\underline{0.702} &\underline{0.750} &\underline{69.4\%} &0.298 \\ 
                                           &GEDIOT &\bf0.684 &\bf44.5\% &\bf0.897 &\bf0.776 &\bf0.791 &\bf0.835 &\bf71.3\% &0.313 \\  \hline \hline
     \multirow{6}{*}{\tabincell{c}{Linux}} &SimGNN &0.399 &63.2\% &0.953 &0.877 &0.934 &0.918 &77.6\% &0.288 \\
                                           &GPN    &0.147 &86.6\% &\underline{0.973} &\underline{0.916} &0.948 &0.967 &90.5\% &\underline{0.279} \\
                                           &TaGSim &0.347 &69.3\% &0.951 &0.877 &0.878 &0.905 &87.4\% &\bf0.079 \\
                                           &GEDGNN &\underline{0.122} &\underline{89.8\%} &0.965 &0.904 &\underline{0.968} &\underline{0.973} &\underline{95.1\%} &0.291 \\ 
                                           &GEDIOT &\bf0.051 &\bf96.1\% &\bf0.976 &\bf0.925 &\bf0.983 &\bf0.990 &\bf97.6\% &0.336 \\  \hline \hline
    \multirow{6}{*}{\tabincell{c}{IMDB}}   &SimGNN &1.236 &39.3\% &0.733 &0.642 &0.755 &0.801 &67.4\% &0.307 \\
                                           &GPN    &1.635 &27.7\% &0.741 &0.664 &0.670 &0.710 &33.9\% &\underline{0.226} \\
                                           &TaGSim &4.811 &15.4\% &0.501 &0.445 &0.665 &0.700 &47.2\% &\bf0.107 \\
                                           &GEDGNN &\underline{0.743} &\underline{59.2\%} &\underline{0.858} &\underline{0.777} &\underline{0.842} &\underline{0.857} &\bf79.8\% &0.294 \\ 
                                           &GEDIOT &\bf0.595 &\bf65.5\% &\bf0.925 &\bf0.850 &\bf0.903 &\bf0.913 &\underline{78.5\%} &0.353 \\  \hline
    \end{tabular}
    }
    \begin{tablenotes}
        \item\ \ \ \ \ \ \ \ \ \ \ $\qquad\quad$ $\uparrow$: higher is better, $\downarrow$: lower is better\ \ $\qquad\quad$ \textbf{Bold}: best, \underline{Underline}: runner-up.
    \end{tablenotes}
    \label{table:ged-learning}
    \vspace{-4mm}
\end{table*}

\vspace{-3mm}
\subsection{Experimental Results}
We evaluate the performance of various methods for both GED computation and GEP generation. 

\vspace{1mm}
\noindent\textbf{Performance of GED Computation. } We first compare our proposed methods (i.e., GEDGW, GEDIOT, and GEDHOT) with the six baselines mentioned in Section~\ref{subsec:baseline} (Hungarian and VJ are dominated by Classic and are hence omitted due to space limit). We categorize the methods into three types: learning-based methods, non-learning methods, and hybrid methods. We count Noah also as a hybrid method since it combines GPN with A*-Beam.

Table~\ref{table:ged-prediction} reports the results. We can see that among the learning-based methods, GEDGNN achieves the best performance on all three datasets for value, ranking, and feasibility metrics.
Meanwhile, GEDIOT significantly outperforms GEDGNN (as well as the other learning-based baselines) in terms of value and ranking metrics with comparable time consumption. 
For instance, compared with the state-of-the-art method GEDGNN, the MAE of our proposed GEDIOT is $23.9\%$, $63.8\%$, $20.5\%$ smaller on AIDS, Linux, and IMDB, respectively; 
also, on AIDS, the accuracy of GEDGNN and our GEDIOT is $40.4\%$ and $49.7\%$, respectively. 
{Note that TaGSim is the most time-efficient (e.g., on AIDS, the training time for an epoch of TaGSim is 151~s, while that of GEDIOT is 581~s) but cannot return high-quality results. We train TaGSim for more epochs so that the total training time of TaGSim is roughly equal to GEDIOT, and the results are similar to that reported in Table~\ref{table:ged-prediction}. On AIDS, Linux, and IMDB, the MAE and accuracy of TaGSim with more training time are $0.816$ and $37.9\%$, $0.316$ and $70.6\%$, $4.962$ and $11.4\%$ respectively, which are still worse than our model. It demonstrates that our experimental setup is sufficient to converge.
}

For the non-learning methods, Classic and GEDGW, it is obvious that GEDGW achieves much better performance on all the value and ranking metrics with up to $14 \times$ faster computational speed. More surprisingly, on AIDS and IMDB, GEDGW even achieves a higher accuracy than the state-of-the-art learning-based method GEDGNN. Note that the training phase for the learning-based methods always takes several hours, while GEDGW does not need that phase and directly outputs results within a second. Moreover, all the learning-based methods need the ground truths of GED and node matching for model training. 
The performance of GEDGW suggests that it is possible to approximate high-quality GEDs in a non-learning way. 

Finally, for the two hybrid methods, Noah and GEDHOT, we can see that compared to Noah, the MAE of our GEDHOT is up to $20 \times$ smaller with hundreds of times smaller computational time (recall that Noah runs the expensive A* algorithm). 
In addition, in Table~\ref{table:ged-prediction}, GEDHOT clearly outperforms all the other methods, followed by the proposed GEDIOT and GEDGW with a consistent second-best performance on all three datasets. For instance, on AIDS, the accuracies of GEDIOT, GEDGW and GEDHOT are $49.7\%$, $41.2\%$, and $59.3\%$ respectively, while that of GEDGNN is only $40.4\%$. This shows that GEDHOT can combine the merits of both GEDIOT and GEDGW to get better results. 

\vspace{1mm}
\noindent\textbf{Performance of GEP Generation. } We next compare the performance of GEP generation of the methods above. Note that among the learning-based baselines, Noah and GEDGNN are the only two that can generate GEP, so we include Noah, GEDGNN, and Classic as baselines in Table~\ref{table:gep-generation} for comparison. 
We can see that Classic takes the shortest computational time, but the MAE is several times larger than other methods. Among the other four methods, similar to GED results in Table~\ref{table:ged-prediction}, GEDHOT achieves the best performance for value and ranking metrics on all the three datasets. For example, on AIDS, the accuracy of GEDGNN and GEDHOT is $42.2\%$ and $71.2\%$ respectively; also, on Linux, GEDHOT obtains $4.7 \times$, $17.9 \times$, $3.5 \times$ smaller MAE compared with GEDGNN, GEDGW, GEDIOT, respectively. Moreover, the second-best is either GEDIOT or GEDGW.

Note that the computational time of GEDHOT is about twice as large as the time of the other three methods except for Classic. 
Even if a smaller time cost is preferred, our proposed GEDGW and GEDIOT are preferred compared to GEDGNN, which is the latest method for GEP generation. It is worth noting that on AIDS and IMDB, the non-learning method GEDGW even achieves $1.8 \times$ and $9.6 \times$ smaller MAE than the learning-based method GEDGNN. 

Regarding path quality metrics, Recall, Precision, and F1 score, Table~\ref{table:gep-generation} shows that GEDHOT consistently performs the best, and GEDIOT is consistently the second-best.


{We also study the contribution of GEDIOT and GEDGW for the ensemble method GEDHOT. 
For example, on AIDS, for GED computation, most graph pairs ($80.8\%$) use the results from GEDIOT instead of GEDGW. For GEP generation, $63.1\%$ of the graph pairs use the results from GEDIOT, and $36.9\%$ of the graph pairs use the results from GEDGW. More results can be found in Appendix~\ref{app:adoption-ratio}. }

{Note that GED is a distance metric, satisfying the triangle inequality. Without loss of generality, we conduct experiments on AIDS and Linux to evaluate the fraction of triangle inequality violations in the predicted GEDs. The results shown in Appendix~\ref{app:adoption-ratio} indicate that our methods satisfy this property in most cases ($>95\%$).} 


\vspace{-2mm}
\subsection{Generalizability}\label{sec:genexp}
Since all the learning-based methods require training data supervision, it is interesting to explore how they generalize beyond the training data distribution, including our GEDIOT model.

\vspace{1mm}
\noindent\textbf{Modeling GED Computation of Unseen Graphs. } Recall that we prepared the test set by sampling 100 training graphs for each test graph, which models the graph similarity search task. To evaluate the generalizability, now we instead sample 100 test graphs (rather than training graphs) for each test graph, so that both graphs in a graph pair of the test set are unseen during training.

Table~\ref{table:ged-learning} shows the results of the five learning-based methods, where GEDIOT still significantly outperforms GEDGNN and the others in terms of value and ranking metrics. For example, on Linux, the MAE of GEDIOT is $2.4 \times$ smaller than GEDGNN, and the accuracy reaches $96.1 \%$  while that of GEDGNN is below $90 \%$. 

Compared with the results in Table~\ref{table:ged-prediction}, the performance of all methods decreases since the test set is more challenging. Nevertheless, the amount of degradation is not significant. For example, the accuracy of GEDIOT decreases by $10.5\%$ and $1.1\%$ on AIDS and Linux, respectively, which demonstrates its generalizability. 

\begin{table*}[t]
    \centering
    \caption{Ablation Study of GEDIOT Components.}
    \vspace{-2mm}
    \resizebox{1.83\columnwidth}{!}{
    \begin{tabular}{c|c c c c c c|c c c c c c}
    \hline
    \multirow{2}{*}{\tabincell{c}{ Method}} &\multicolumn{6}{c|}{AIDS} & \multicolumn{6}{c}{Linux} \\ \cline{2-13}
                  &MAE $\downarrow$ &Accuracy $\uparrow$ &$\rho$ $\uparrow$ &$\tau$ $\uparrow$ &$p@10$ $\uparrow$ &$p@20$ $\uparrow$ &MAE$\downarrow$ &Accuracy $\uparrow$ &$\rho$ $\uparrow$  &$\tau$ $\uparrow$ &$p@10$ $\uparrow$ &$p@20$ $\uparrow$\\ \hline
GEDIOT &0.581 &49.7\%    &0.922        &0.813       &0.814    &0.853        &0.034 &97.2\% &0.969 &0.911 &0.992  &0.995 \\
GEDIOT (w/ GCN) &0.578 &49.1\%   &0.917        &0.805       &0.794    &0.838        &0.064 &93.8\% &0.967 &0.909  &0.980 &0.985\\
GEDIOT (w/o MLP) &0.854    &35.9\%        &0.814       &0.677    &0.599        &0.678 &0.158 &85.9\% &0.958  &0.889 &0.934 &0.956\\
GEDIOT (w/o Cost)  &0.794	&38.4\%	&0.870	&0.741	&0.692	&0.765
&0.132	&87.5\%	 &0.964	&0.901	&0.953	&0.966\\
GEDIOT (w/o learnable $\varepsilon$) &0.767    &38.5\%        &0.906       &0.790    &0.801        &0.831 &0.063 &94.7\%  &0.967 &0.910 &0.988 &0.991\\
\hline
    \end{tabular}
    }
    \label{table:ablation}
    \vspace{-3mm}
\end{table*}

\begin{figure}[t]
    \centering
    \subfigure[IMDB - MAE]{
    \includegraphics[width=0.485\linewidth]{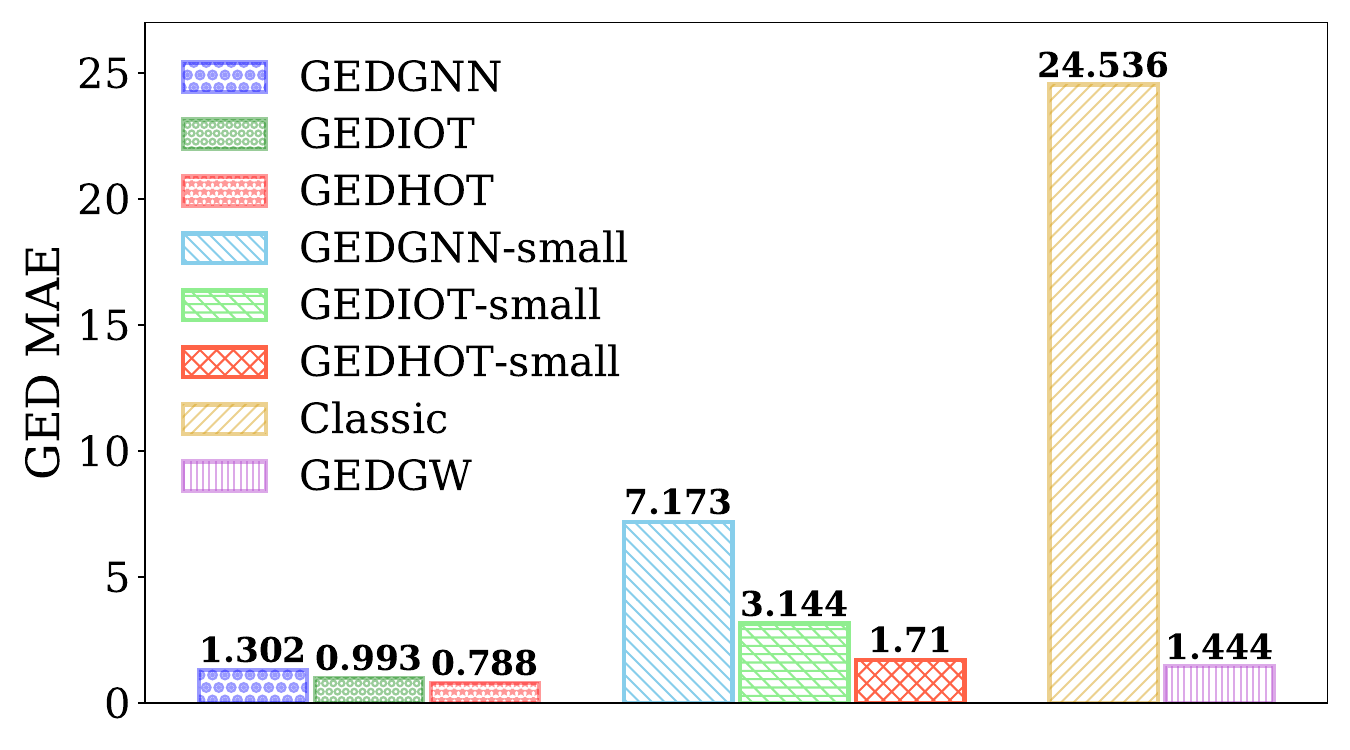}}
    \vspace{-2mm}
    \subfigure[IMDB - Accuracy]{
    \includegraphics[width=0.485\linewidth]{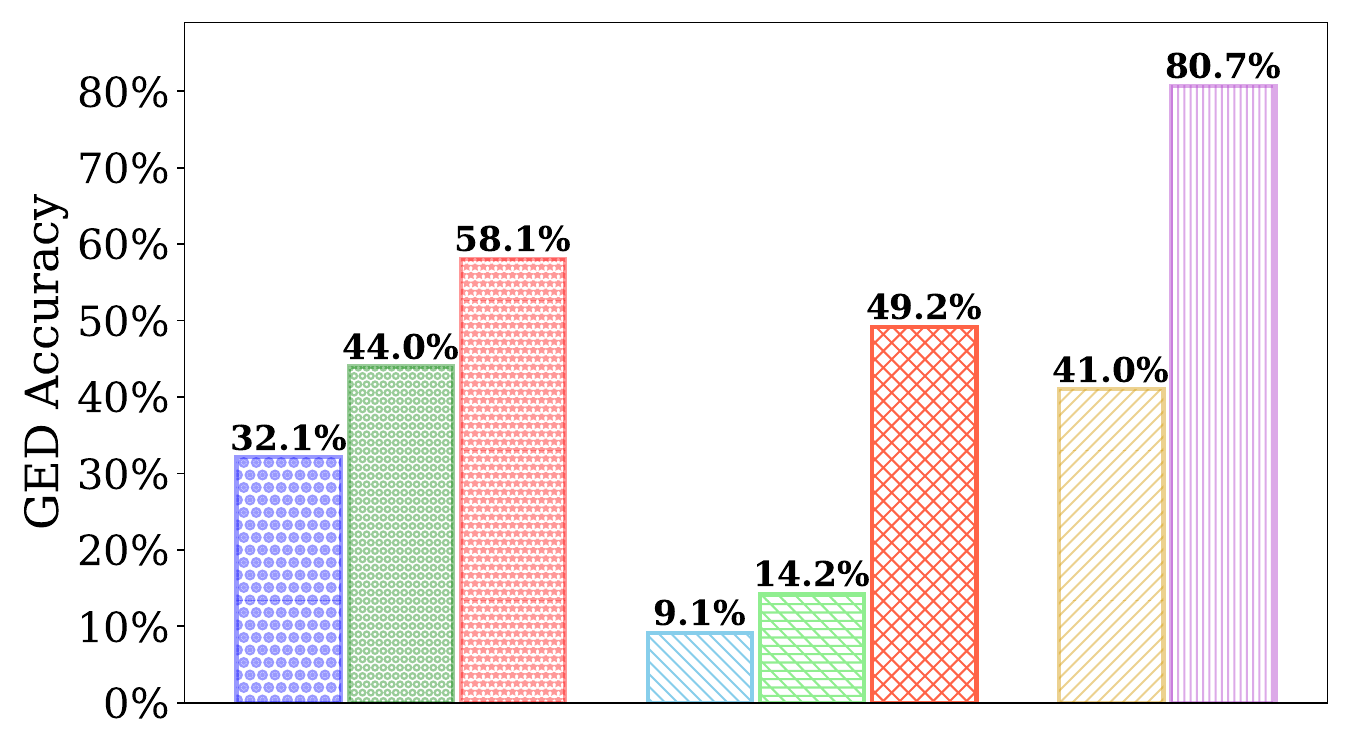}}
    \vspace{-2mm}
    \caption{Generalizability for Large Unseen Graphs on IMDB}
    \label{fig:generalization}
\end{figure}
\setlength{\textfloatsep}{5pt}

\vspace{1mm}
\noindent\textbf{Generalization to Large Unseen Graphs. } Ground truth is crucial for supervised learning-based methods. In GED computation, ground truth is difficult to obtain for large graphs due to the NP-hardness of the problem. For instance, there are plenty of graphs with more than 10 nodes in the IMDB dataset, and it is too expensive to calculate the GEDs of these graph pairs with exact algorithms. 
Therefore, we consider training the model only with small graphs and testing the performance of the learning-based methods on large unseen graphs. More concretely, we select the graph pairs from the training set of IMDB that are formed by the small graphs (at most 10 nodes) to build a new training set. All the methods trained on it are appended with the ``-small'' suffix, i.e., \textbf{GEDGNN-small}, \textbf{GEDIOT-small} and \textbf{GEDHOT-small}. To evaluate generalizability, we also construct a new test set, which consists of the graph pairs from the test set of IMDB that are formed by the large graphs (more than 10 nodes). The results are shown in Figure~\ref{fig:generalization}, where GEDGNN, GEDIOT, and GEDHOT denote the methods trained on the complete training set of IMDB.
%
We can see that models trained on small graphs have an inferior performance compared to training on complete training set. 
However, GEDHOT-small and GEDIOT-small are still significantly better than GEDGNN-small in terms of MAE and accuracy. 
Notably, GEDGW achieves the highest accuracy of 80.7\% since it is unsupervised, demonstrating its robustness as compared to learning-based methods that face generalizability challenges. 

We further discuss how the generalizability is impacted when synthesizing test graph pairs with larger GEDs. Similarly, GEDGW achieves the best performance and our neural model outperforms GEDGNN. Detailed results are shown in Figure~\ref{fig:generalization_small} in Appendix~\ref{app:gen_exp}. 

{We notice that the state-of-the-art methods Nass~\cite{kim2021boosting} and AStar-BMao~\cite{chang2022accelerating} for graph similarity search (introduced in Section~\ref{sec:related}) can be applied for exact GED computation by setting the similarity threshold to infinity. 
As indicated in~\cite{piao2023gedgnn}, exact methods suffer from huge computation costs when the graph size increases. 
We compare our method GEDIOT with Nass and AStar-BMao on two large real-world datasets. The detailed setup and the running time of the three methods can be found in Appendix~\ref{app:exact-methods}. 
We find that the running time of the two exact methods Nass and AStar-BMao is quite sensitive w.r.t. the graph size and the GED value. Our method GEDIOT shows a consistent advantage compared to the two exact algorithms, particularly for larger graphs and GEDs, since the time complexity of GEDIOT is only $O(n^2)$, whereas AStar-BMao and Nass are still exponential-time algorithms. }

{We also generate synthetic power-law graphs of
various sizes (from 50 to 400 nodes). The results are reported in Appendix~\ref{app:power-law}, where we find that the GED relative error of our GEDGW and GEDHOT is nearly $0$ while that of GEDGNN is always almost $2$, and the computational time of learning-based methods is orders of magnitude faster than the exact algorithms.}

\vspace{-1.3mm}
\subsection{Ablation and Parameter Study}
We conduct ablation study to verify the effectiveness of various modules in GEDIOT, and to show the robustness of GEDIOT w.r.t.\ hyperparameters by varying their values. 

\vspace{1mm}
\noindent\textbf{Effect of Modules in GEDIOT. } In this ablation study, we modify GEDIOT into four variants and compare their performance with GEDIOT. Table~\ref{table:ablation} shows the results, where we use ``w/ GCN'' to denote the variant substituting GIN with GCN in GEDIOT, and use ``w/o MLP'', ``w/o Cost'', and ``w/o learnable $\varepsilon$'' to denote GEDIOT that removes the MLP in the node embedding component, that replaces cost matrix module in the learnable OT component with $\mathbf{H}^1 ( \mathbf{H}^2)^\top$ (i.e., to model node interactions with simple inner product of their embeddings), and that fixes the regularization coefficient $\varepsilon$ in the learnable Sinkhorn layer as $\varepsilon_0=0.05$, respectively.  

As Table~\ref{table:ablation} shows, replacing or removing a module in GEDIOT can significantly degrade the performance of both value and ranking metrics, which verifies the effectiveness of our proposed components for GED computation. For instance, on AIDS, if fixing the regularization coefficient $\varepsilon$, the accuracy decreases from $49.7 \%$ to $38.5 \%$ and MAE increases from 0.581 to 0.767. 

\vspace{1mm}
\noindent{\textbf{Varying Parameters in the Sinkhorn Algorithm.} We study how the performance of GEDIOT is impacted as the initial value of the regularization coefficient, denoted by $\varepsilon_0$ and the number of iterations varies in the learnable Sinkhorn layer. The results are presented in Appendix~\ref{app:ablation_exp}.
We find that both MAE and accuracy are stable with various $\varepsilon_0$, which shows the robustness of the learnable regularization method to $\varepsilon_0$. }
Moreover, we observe that the MAE decreases and the accuracy increases as the number of iterations increases, but after $15$ (resp.\ $10$) iterations on AIDS (resp.\ Linux), the MAE and accuracy become fairly stable as the Sinkhorn algorithm converges.  
Note that the computational time also increases when conducting more iterations. Considering the time-accuracy tradeoff, we set the iteration number to $5$ by default. 

\vspace{1mm}
\noindent{\textbf{Varying $\lambda$ in the Loss Function.} As presented in Appendix~\ref{app:ablation_exp}, we also discuss the effect of varying $\lambda$ in Eq.~\eqref{eq:loss} (from $0$ to $1$) that balances the two terms $\mathcal{L}_m$ and $\mathcal{L}_v$ of the loss function. The results show that the performance improves with the increase of $\lambda$ in $[0, 1]$ and becomes stable when $\lambda$ is around $0.8$. }

\vspace{1mm}
\noindent\textbf{Varying the Size of Training Set.} 
In this experiment, we evaluate the effect of varying the training set size. Concretely, we randomly sample 10\%-100\% of the original training set of AIDS and Linux to retrain GEDIOT. The results in Appendix~\ref{app:ablation_exp} describe its influence on training time, MAE, and accuracy of GEDIOT. It can be observed that as the training set size increases, the MAE decreases and the accuracy increases, while the training time increases linearly. Furthermore, the observed trends of MAE and accuracy with increasing training set size appear to be flattening, which shows that training set size is sufficient.

\vspace{1mm}
\noindent\textbf{$k$-Best Matching.} We further verify the effect of $k$ in $k$-best matching for GEP generation. As depicted in Appendix~\ref{app:ablation_exp}, the MAE constantly decreases and the accuracy increases as the parameter $k$ increases. Nevertheless, computational time also increases with the increase of $k$ since the search space becomes larger. 

\vspace{-2mm}
\section{Conclusion}
\label{sec:conclusion}
In this paper, we proposed novel optimal-transport-based methods for graph edit distance computation and graph edit path generation from both learning and optimization perspectives. We first proposed a neural network with inverse optimal transport called GEDIOT. By modeling the node edit operations and edge edit operations as optimization problems, we also proposed an unsupervised method GEDGW to approximate the GED value without the need of training. 
Additionally, we combine the two methods and propose an ensemble method GEDHOT which achieves a higher performance. Experiments demonstrate that our methods outperform the state-of-the-art methods for GED computation and GEP generation with remarkable result quality and generalizability. 


\bibliographystyle{ACM-Reference-Format}
\bibliography{ref_gedot}


\begin{thebibliography}{73}


\ifx \showCODEN    \undefined \def \showCODEN     #1{\unskip}     \fi
\ifx \showDOI      \undefined \def \showDOI       #1{#1}\fi
\ifx \showISBNx    \undefined \def \showISBNx     #1{\unskip}     \fi
\ifx \showISBNxiii \undefined \def \showISBNxiii  #1{\unskip}     \fi
\ifx \showISSN     \undefined \def \showISSN      #1{\unskip}     \fi
\ifx \showLCCN     \undefined \def \showLCCN      #1{\unskip}     \fi
\ifx \shownote     \undefined \def \shownote      #1{#1}          \fi
\ifx \showarticletitle \undefined \def \showarticletitle #1{#1}   \fi
\ifx \showURL      \undefined \def \showURL       {\relax}        \fi
\providecommand\bibfield[2]{#2}
\providecommand\bibinfo[2]{#2}
\providecommand\natexlab[1]{#1}
\providecommand\showeprint[2][]{arXiv:#2}

\bibitem[Bai and Zhao(2021)]%
        {bai2021tagsim}
\bibfield{author}{\bibinfo{person}{Jiyang Bai} {and} \bibinfo{person}{Peixiang Zhao}.} \bibinfo{year}{2021}\natexlab{}.
\newblock \showarticletitle{TaGSim: Type-Aware Graph Similarity Learning and Computation}.
\newblock \bibinfo{journal}{\emph{PVLDB}} \bibinfo{volume}{15}, \bibinfo{number}{2} (\bibinfo{year}{2021}), \bibinfo{pages}{335--347}.
\newblock


\bibitem[Bai et~al\mbox{.}(2019)]%
        {bai2019simgnn}
\bibfield{author}{\bibinfo{person}{Yunsheng Bai}, \bibinfo{person}{Hao Ding}, \bibinfo{person}{Song Bian}, \bibinfo{person}{Ting Chen}, \bibinfo{person}{Yizhou Sun}, {and} \bibinfo{person}{Wei Wang}.} \bibinfo{year}{2019}\natexlab{}.
\newblock \showarticletitle{SimGNN: A Neural Network Approach to Fast Graph Similarity Computation}. In \bibinfo{booktitle}{\emph{WSDM}}. \bibinfo{pages}{384--392}.
\newblock


\bibitem[Blumenthal and Gamper(2020)]%
        {blumenthal2020exact}
\bibfield{author}{\bibinfo{person}{David~B Blumenthal} {and} \bibinfo{person}{Johann Gamper}.} \bibinfo{year}{2020}\natexlab{}.
\newblock \showarticletitle{On The Exact Computation of The Graph Edit Distance}.
\newblock \bibinfo{journal}{\emph{Pattern Recognition Letters}}  \bibinfo{volume}{134} (\bibinfo{year}{2020}), \bibinfo{pages}{46--57}.
\newblock


\bibitem[Boyd and Vandenberghe(2004)]%
        {boyd2004convex}
\bibfield{author}{\bibinfo{person}{Stephen Boyd} {and} \bibinfo{person}{Lieven Vandenberghe}.} \bibinfo{year}{2004}\natexlab{}.
\newblock \bibinfo{booktitle}{\emph{Convex Optimization}}.
\newblock \bibinfo{publisher}{Cambridge University Press}.
\newblock


\bibitem[Braun et~al\mbox{.}(2022)]%
        {braun2022conditional}
\bibfield{author}{\bibinfo{person}{G{\'a}bor Braun}, \bibinfo{person}{Alejandro Carderera}, \bibinfo{person}{Cyrille~W Combettes}, \bibinfo{person}{Hamed Hassani}, \bibinfo{person}{Amin Karbasi}, \bibinfo{person}{Aryan Mokhtari}, {and} \bibinfo{person}{Sebastian Pokutta}.} \bibinfo{year}{2022}\natexlab{}.
\newblock \showarticletitle{Conditional Gradient Methods}.
\newblock \bibinfo{journal}{\emph{arXiv preprint arXiv:2211.14103}} (\bibinfo{year}{2022}).
\newblock


\bibitem[Bunke and Allermann(1983)]%
        {bunke1983inexact}
\bibfield{author}{\bibinfo{person}{Horst Bunke} {and} \bibinfo{person}{Gudrun Allermann}.} \bibinfo{year}{1983}\natexlab{}.
\newblock \showarticletitle{Inexact Graph Matching for Structural Pattern Recognition}.
\newblock \bibinfo{journal}{\emph{Pattern Recognition Letters}} \bibinfo{volume}{1}, \bibinfo{number}{4} (\bibinfo{year}{1983}), \bibinfo{pages}{245--253}.
\newblock


\bibitem[Chang et~al\mbox{.}(2020)]%
        {chang2020speeding}
\bibfield{author}{\bibinfo{person}{Lijun Chang}, \bibinfo{person}{Xing Feng}, \bibinfo{person}{Xuemin Lin}, \bibinfo{person}{Lu Qin}, \bibinfo{person}{Wenjie Zhang}, {and} \bibinfo{person}{Dian Ouyang}.} \bibinfo{year}{2020}\natexlab{}.
\newblock \showarticletitle{Speeding up GED Verification for Graph Similarity Search}. In \bibinfo{booktitle}{\emph{ICDE}}. \bibinfo{pages}{793--804}.
\newblock


\bibitem[Chang et~al\mbox{.}(2022)]%
        {chang2022accelerating}
\bibfield{author}{\bibinfo{person}{Lijun Chang}, \bibinfo{person}{Xing Feng}, \bibinfo{person}{Kai Yao}, \bibinfo{person}{Lu Qin}, {and} \bibinfo{person}{Wenjie Zhang}.} \bibinfo{year}{2022}\natexlab{}.
\newblock \showarticletitle{Accelerating Graph Similarity Search via Efficient GED Computation}.
\newblock \bibinfo{journal}{\emph{IEEE Transactions on Knowledge and Data Engineering}} \bibinfo{volume}{35}, \bibinfo{number}{5} (\bibinfo{year}{2022}), \bibinfo{pages}{4485--4498}.
\newblock


\bibitem[Chapel et~al\mbox{.}(2020)]%
        {chapel2020partial}
\bibfield{author}{\bibinfo{person}{Laetitia Chapel}, \bibinfo{person}{Mokhtar~Z Alaya}, {and} \bibinfo{person}{Gilles Gasso}.} \bibinfo{year}{2020}\natexlab{}.
\newblock \showarticletitle{Partial Optimal Transport with Applications on Positive-unlabeled Learning}.
\newblock \bibinfo{journal}{\emph{NeurIPS}}  \bibinfo{volume}{33} (\bibinfo{year}{2020}), \bibinfo{pages}{2903--2913}.
\newblock


\bibitem[Chegireddy and Hamacher(1987)]%
        {chegireddy1987algorithms}
\bibfield{author}{\bibinfo{person}{Chandra~R Chegireddy} {and} \bibinfo{person}{Horst~W Hamacher}.} \bibinfo{year}{1987}\natexlab{}.
\newblock \showarticletitle{Algorithms for Finding $k$-Best Perfect Matchings}.
\newblock \bibinfo{journal}{\emph{Discrete Applied Mathematics}} \bibinfo{volume}{18}, \bibinfo{number}{2} (\bibinfo{year}{1987}), \bibinfo{pages}{155--165}.
\newblock


\bibitem[Chiu et~al\mbox{.}(2022)]%
        {chiu2022discrete}
\bibfield{author}{\bibinfo{person}{Wei-Ting Chiu}, \bibinfo{person}{Pei Wang}, {and} \bibinfo{person}{Patrick Shafto}.} \bibinfo{year}{2022}\natexlab{}.
\newblock \showarticletitle{Discrete Probabilistic Inverse Optimal Transport}. In \bibinfo{booktitle}{\emph{ICML}}. \bibinfo{pages}{3925--3946}.
\newblock


\bibitem[Courty et~al\mbox{.}(2016)]%
        {courty2016optimal}
\bibfield{author}{\bibinfo{person}{Nicolas Courty}, \bibinfo{person}{R{\'e}mi Flamary}, \bibinfo{person}{Devis Tuia}, {and} \bibinfo{person}{Alain Rakotomamonjy}.} \bibinfo{year}{2016}\natexlab{}.
\newblock \showarticletitle{Optimal Transport for Domain Adaptation}.
\newblock \bibinfo{journal}{\emph{TPAMI}} \bibinfo{volume}{39}, \bibinfo{number}{9} (\bibinfo{year}{2016}), \bibinfo{pages}{1853--1865}.
\newblock


\bibitem[Cuturi(2013)]%
        {cuturi2013sinkhorn}
\bibfield{author}{\bibinfo{person}{Marco Cuturi}.} \bibinfo{year}{2013}\natexlab{}.
\newblock \showarticletitle{Sinkhorn Distances: Lightspeed Computation of Optimal Transport}.
\newblock \bibinfo{journal}{\emph{NeurIPS}}  \bibinfo{volume}{26} (\bibinfo{year}{2013}), \bibinfo{pages}{2292--2300}.
\newblock


\bibitem[Dong and Sawin(2020)]%
        {dong2020copt}
\bibfield{author}{\bibinfo{person}{Yihe Dong} {and} \bibinfo{person}{Will Sawin}.} \bibinfo{year}{2020}\natexlab{}.
\newblock \showarticletitle{COPT: Coordinated Optimal Transport on Graphs}.
\newblock \bibinfo{journal}{\emph{NeurIPS}}  \bibinfo{volume}{33} (\bibinfo{year}{2020}), \bibinfo{pages}{19327--19338}.
\newblock


\bibitem[Fankhauser et~al\mbox{.}(2011)]%
        {fankhauser2011VJ}
\bibfield{author}{\bibinfo{person}{Stefan Fankhauser}, \bibinfo{person}{Kaspar Riesen}, {and} \bibinfo{person}{Horst Bunke}.} \bibinfo{year}{2011}\natexlab{}.
\newblock \showarticletitle{Speeding up Graph Edit Distance Computation Through Fast Bipartite Matching}. In \bibinfo{booktitle}{\emph{International Workshop on Graph-Based Representations in Pattern Recognition}}. \bibinfo{pages}{102--111}.
\newblock


\bibitem[Fischer et~al\mbox{.}(2013)]%
        {DBLP:conf/gbrpr/FischerSFRB13}
\bibfield{author}{\bibinfo{person}{Andreas Fischer}, \bibinfo{person}{Ching~Y. Suen}, \bibinfo{person}{Volkmar Frinken}, \bibinfo{person}{Kaspar Riesen}, {and} \bibinfo{person}{Horst Bunke}.} \bibinfo{year}{2013}\natexlab{}.
\newblock \showarticletitle{A Fast Matching Algorithm for Graph-Based Handwriting Recognition}. In \bibinfo{booktitle}{\emph{International Workshop on Graph-Based Representations in Pattern Recognition}} \emph{(\bibinfo{series}{Lecture Notes in Computer Science}, Vol.~\bibinfo{volume}{7877})}. \bibinfo{pages}{194--203}.
\newblock


\bibitem[Gouda and Arafa(2015)]%
        {gouda2015improved}
\bibfield{author}{\bibinfo{person}{Karam Gouda} {and} \bibinfo{person}{Mona Arafa}.} \bibinfo{year}{2015}\natexlab{}.
\newblock \showarticletitle{An Improved Global Lower Bound for Graph Edit Similarity Search}.
\newblock \bibinfo{journal}{\emph{Pattern Recognition Letters}}  \bibinfo{volume}{58} (\bibinfo{year}{2015}), \bibinfo{pages}{8--14}.
\newblock


\bibitem[Gouda and Hassaan(2016)]%
        {gouda2016csi_ged}
\bibfield{author}{\bibinfo{person}{Karam Gouda} {and} \bibinfo{person}{Mosab Hassaan}.} \bibinfo{year}{2016}\natexlab{}.
\newblock \showarticletitle{CSI\_GED: An Efficient Approach for Graph Edit Similarity Computation}. In \bibinfo{booktitle}{\emph{2016 IEEE 32nd International Conference on Data Engineering (ICDE)}}. IEEE, \bibinfo{pages}{265--276}.
\newblock


\bibitem[Hu et~al\mbox{.}(2019)]%
        {hu2019strategies}
\bibfield{author}{\bibinfo{person}{Weihua Hu}, \bibinfo{person}{Bowen Liu}, \bibinfo{person}{Joseph Gomes}, \bibinfo{person}{Marinka Zitnik}, \bibinfo{person}{Percy Liang}, \bibinfo{person}{Vijay Pande}, {and} \bibinfo{person}{Jure Leskovec}.} \bibinfo{year}{2019}\natexlab{}.
\newblock \showarticletitle{Strategies for Pre-training Graph Neural Networks}.
\newblock \bibinfo{journal}{\emph{arXiv preprint arXiv:1905.12265}} (\bibinfo{year}{2019}).
\newblock


\bibitem[Justice and Hero(2006)]%
        {justice2006binary}
\bibfield{author}{\bibinfo{person}{Derek Justice} {and} \bibinfo{person}{Alfred Hero}.} \bibinfo{year}{2006}\natexlab{}.
\newblock \showarticletitle{A Binary Linear Programming Formulation of The Graph Edit Distance}.
\newblock \bibinfo{journal}{\emph{TPAMI}} \bibinfo{volume}{28}, \bibinfo{number}{8} (\bibinfo{year}{2006}), \bibinfo{pages}{1200--1214}.
\newblock


\bibitem[Kim(2021)]%
        {kim2021boosting}
\bibfield{author}{\bibinfo{person}{Jongik Kim}.} \bibinfo{year}{2021}\natexlab{}.
\newblock \showarticletitle{Boosting Graph Similarity Search through Pre-computation}. In \bibinfo{booktitle}{\emph{Proceedings of the 2021 International Conference on Management of Data}}. \bibinfo{pages}{951--963}.
\newblock


\bibitem[Kim et~al\mbox{.}(2019)]%
        {kim2019inves}
\bibfield{author}{\bibinfo{person}{Jongik Kim}, \bibinfo{person}{Dong-Hoon Choi}, {and} \bibinfo{person}{Chen Li}.} \bibinfo{year}{2019}\natexlab{}.
\newblock \showarticletitle{Inves: Incremental Partitioning-Based Verification for Graph Similarity Search.}. In \bibinfo{booktitle}{\emph{EDBT}}. \bibinfo{pages}{229--240}.
\newblock


\bibitem[Kolouri et~al\mbox{.}(2017)]%
        {kolouri2017optimal}
\bibfield{author}{\bibinfo{person}{Soheil Kolouri}, \bibinfo{person}{Se~Rim Park}, \bibinfo{person}{Matthew Thorpe}, \bibinfo{person}{Dejan Slepcev}, {and} \bibinfo{person}{Gustavo~K Rohde}.} \bibinfo{year}{2017}\natexlab{}.
\newblock \showarticletitle{Optimal Mass Transport: Signal Processing and Machine-Learning Applications}.
\newblock \bibinfo{journal}{\emph{IEEE Signal Processing Magazine}} \bibinfo{volume}{34}, \bibinfo{number}{4} (\bibinfo{year}{2017}), \bibinfo{pages}{43--59}.
\newblock


\bibitem[Li et~al\mbox{.}(2023)]%
        {li2023coclep}
\bibfield{author}{\bibinfo{person}{Ling Li}, \bibinfo{person}{Siqiang Luo}, \bibinfo{person}{Yuhai Zhao}, \bibinfo{person}{Caihua Shan}, \bibinfo{person}{Zhengkui Wang}, {and} \bibinfo{person}{Lu Qin}.} \bibinfo{year}{2023}\natexlab{}.
\newblock \showarticletitle{COCLEP: Contrastive Learning-based Semi-Supervised Community Search}. In \bibinfo{booktitle}{\emph{ICDE}}. \bibinfo{pages}{2483--2495}.
\newblock


\bibitem[Li et~al\mbox{.}(2019)]%
        {li2019learning}
\bibfield{author}{\bibinfo{person}{Ruilin Li}, \bibinfo{person}{Xiaojing Ye}, \bibinfo{person}{Haomin Zhou}, {and} \bibinfo{person}{Hongyuan Zha}.} \bibinfo{year}{2019}\natexlab{}.
\newblock \showarticletitle{Learning to Match via Inverse Optimal Transport}.
\newblock \bibinfo{journal}{\emph{Journal of Machine Learning Research}} \bibinfo{volume}{20}, \bibinfo{number}{80} (\bibinfo{year}{2019}), \bibinfo{pages}{1--37}.
\newblock


\bibitem[Liang and Zhao(2017a)]%
        {DBLP:conf/icde/LiangZ17}
\bibfield{author}{\bibinfo{person}{Yongjiang Liang} {and} \bibinfo{person}{Peixiang Zhao}.} \bibinfo{year}{2017}\natexlab{a}.
\newblock \showarticletitle{Similarity Search in Graph Databases: {A} Multi-Layered Indexing Approach}. In \bibinfo{booktitle}{\emph{ICDE}}. \bibinfo{pages}{783--794}.
\newblock


\bibitem[Liang and Zhao(2017b)]%
        {liang2017similarity}
\bibfield{author}{\bibinfo{person}{Yongjiang Liang} {and} \bibinfo{person}{Peixiang Zhao}.} \bibinfo{year}{2017}\natexlab{b}.
\newblock \showarticletitle{Similarity Search in Graph Databases: A Multi-layered Indexing Approach}. In \bibinfo{booktitle}{\emph{2017 IEEE 33rd International Conference on Data Engineering (ICDE)}}. IEEE, \bibinfo{pages}{783--794}.
\newblock


\bibitem[Liu et~al\mbox{.}(2023)]%
        {liu2023mata}
\bibfield{author}{\bibinfo{person}{Junfeng Liu}, \bibinfo{person}{Min Zhou}, \bibinfo{person}{Shuai Ma}, {and} \bibinfo{person}{Lujia Pan}.} \bibinfo{year}{2023}\natexlab{}.
\newblock \showarticletitle{MATA*: Combining Learnable Node Matching with A* Algorithm for Approximate Graph Edit Distance Computation}. In \bibinfo{booktitle}{\emph{CIKM}}. \bibinfo{pages}{1503--1512}.
\newblock


\bibitem[M{\'{e}}moli(2011)]%
        {DBLP:journals/focm/Memoli11}
\bibfield{author}{\bibinfo{person}{Facundo M{\'{e}}moli}.} \bibinfo{year}{2011}\natexlab{}.
\newblock \showarticletitle{Gromov-Wasserstein Distances and The Metric Approach to Object Matching}.
\newblock \bibinfo{journal}{\emph{Foundations of Computational Mathematics}} \bibinfo{volume}{11}, \bibinfo{number}{4} (\bibinfo{year}{2011}), \bibinfo{pages}{417--487}.
\newblock


\bibitem[Munkres(1957)]%
        {munkres1957algorithms}
\bibfield{author}{\bibinfo{person}{James Munkres}.} \bibinfo{year}{1957}\natexlab{}.
\newblock \showarticletitle{Algorithms for the Assignment and Transportation Problems}.
\newblock \bibinfo{journal}{\emph{Journal of the society for industrial and applied mathematics}} \bibinfo{volume}{5}, \bibinfo{number}{1} (\bibinfo{year}{1957}), \bibinfo{pages}{32--38}.
\newblock


\bibitem[Neuhaus et~al\mbox{.}(2006)]%
        {neuhaus2006A*beam}
\bibfield{author}{\bibinfo{person}{Michel Neuhaus}, \bibinfo{person}{Kaspar Riesen}, {and} \bibinfo{person}{Horst Bunke}.} \bibinfo{year}{2006}\natexlab{}.
\newblock \showarticletitle{Fast Suboptimal Algorithms for The Computation of Graph Edit Distance}. In \bibinfo{booktitle}{\emph{Joint IAPR International Workshops on Statistical Techniques in Pattern Recognition (SPR) and Structural and Syntactic Pattern Recognition (SSPR)}}. \bibinfo{pages}{163--172}.
\newblock


\bibitem[Petric~Maretic et~al\mbox{.}(2019)]%
        {petric2019got}
\bibfield{author}{\bibinfo{person}{Hermina Petric~Maretic}, \bibinfo{person}{Mireille El~Gheche}, \bibinfo{person}{Giovanni Chierchia}, {and} \bibinfo{person}{Pascal Frossard}.} \bibinfo{year}{2019}\natexlab{}.
\newblock \showarticletitle{GOT: An Optimal Transport Framework for Graph Comparison}.
\newblock \bibinfo{journal}{\emph{NeurIPS}}  \bibinfo{volume}{32} (\bibinfo{year}{2019}), \bibinfo{pages}{13899--13910}.
\newblock


\bibitem[Peyr{\'e} et~al\mbox{.}(2019)]%
        {peyre2019computational}
\bibfield{author}{\bibinfo{person}{Gabriel Peyr{\'e}}, \bibinfo{person}{Marco Cuturi}, {et~al\mbox{.}}} \bibinfo{year}{2019}\natexlab{}.
\newblock \showarticletitle{Computational Optimal Transport: With Applications to Data Science}.
\newblock \bibinfo{journal}{\emph{Foundations and Trends{\textregistered} in Machine Learning}} \bibinfo{volume}{11}, \bibinfo{number}{5-6} (\bibinfo{year}{2019}), \bibinfo{pages}{355--607}.
\newblock


\bibitem[Peyr{\'e} et~al\mbox{.}(2016)]%
        {peyre2016gromov}
\bibfield{author}{\bibinfo{person}{Gabriel Peyr{\'e}}, \bibinfo{person}{Marco Cuturi}, {and} \bibinfo{person}{Justin Solomon}.} \bibinfo{year}{2016}\natexlab{}.
\newblock \showarticletitle{Gromov-Wasserstein Averaging of Kernel and Distance Matrices}. In \bibinfo{booktitle}{\emph{ICML}}. \bibinfo{pages}{2664--2672}.
\newblock


\bibitem[Piao et~al\mbox{.}(2023)]%
        {piao2023gedgnn}
\bibfield{author}{\bibinfo{person}{Chengzhi Piao}, \bibinfo{person}{Tingyang Xu}, \bibinfo{person}{Xiangguo Sun}, \bibinfo{person}{Yu Rong}, \bibinfo{person}{Kangfei Zhao}, {and} \bibinfo{person}{Hong Cheng}.} \bibinfo{year}{2023}\natexlab{}.
\newblock \showarticletitle{Computing Graph Edit Distance via Neural Graph Matching}.
\newblock \bibinfo{journal}{\emph{PVLDB}} \bibinfo{volume}{16}, \bibinfo{number}{8} (\bibinfo{year}{2023}), \bibinfo{pages}{1817--1829}.
\newblock


\bibitem[Qureshi et~al\mbox{.}(2023)]%
        {qureshi2023limits}
\bibfield{author}{\bibinfo{person}{Shaima Qureshi} {et~al\mbox{.}}} \bibinfo{year}{2023}\natexlab{}.
\newblock \showarticletitle{Limits of Depth: Over-Smoothing and Over-Squashing in GNNs}.
\newblock \bibinfo{journal}{\emph{Big Data Mining and Analytics}} \bibinfo{volume}{7}, \bibinfo{number}{1} (\bibinfo{year}{2023}), \bibinfo{pages}{205--216}.
\newblock


\bibitem[Ranjan et~al\mbox{.}(2022)]%
        {ranjan2022greed}
\bibfield{author}{\bibinfo{person}{Rishabh Ranjan}, \bibinfo{person}{Siddharth Grover}, \bibinfo{person}{Sourav Medya}, \bibinfo{person}{Venkatesan Chakaravarthy}, \bibinfo{person}{Yogish Sabharwal}, {and} \bibinfo{person}{Sayan Ranu}.} \bibinfo{year}{2022}\natexlab{}.
\newblock \showarticletitle{Greed: A Neural Framework for Learning Graph Distance Functions}. In \bibinfo{booktitle}{\emph{NeurIPS}}. \bibinfo{pages}{22518--22530}.
\newblock


\bibitem[Riesen and Bunke(2008)]%
        {DBLP:conf/sspr/RiesenB08}
\bibfield{author}{\bibinfo{person}{Kaspar Riesen} {and} \bibinfo{person}{Horst Bunke}.} \bibinfo{year}{2008}\natexlab{}.
\newblock \showarticletitle{{IAM} Graph Database Repository for Graph Based Pattern Recognition and Machine Learning}. In \bibinfo{booktitle}{\emph{Joint IAPR International Workshops on Statistical Techniques in Pattern Recognition (SPR) and Structural and Syntactic Pattern Recognition (SSPR)}} \emph{(\bibinfo{series}{Lecture Notes in Computer Science}, Vol.~\bibinfo{volume}{5342})}. \bibinfo{pages}{287--297}.
\newblock


\bibitem[Riesen and Bunke(2009)]%
        {riesen2009Hungarian}
\bibfield{author}{\bibinfo{person}{Kaspar Riesen} {and} \bibinfo{person}{Horst Bunke}.} \bibinfo{year}{2009}\natexlab{}.
\newblock \showarticletitle{Approximate Graph Edit Distance Computation by Means of Bipartite Graph Matching}.
\newblock \bibinfo{journal}{\emph{Image and Vision Computing}} \bibinfo{volume}{27}, \bibinfo{number}{7} (\bibinfo{year}{2009}), \bibinfo{pages}{950--959}.
\newblock


\bibitem[Riesen et~al\mbox{.}(2013)]%
        {riesen2013novel}
\bibfield{author}{\bibinfo{person}{Kaspar Riesen}, \bibinfo{person}{Sandro Emmenegger}, {and} \bibinfo{person}{Horst Bunke}.} \bibinfo{year}{2013}\natexlab{}.
\newblock \showarticletitle{A Novel Software Toolkit for Graph Edit Distance Computation}. In \bibinfo{booktitle}{\emph{International Workshop on GraphBased Representations in Pattern Recognition}}. \bibinfo{pages}{142--151}.
\newblock


\bibitem[Rusch et~al\mbox{.}(2023)]%
        {rusch2023survey}
\bibfield{author}{\bibinfo{person}{T~Konstantin Rusch}, \bibinfo{person}{Michael~M Bronstein}, {and} \bibinfo{person}{Siddhartha Mishra}.} \bibinfo{year}{2023}\natexlab{}.
\newblock \showarticletitle{A Survey on Oversmoothing in Graph Neural Networks}.
\newblock \bibinfo{journal}{\emph{arXiv preprint arXiv:2303.10993}} (\bibinfo{year}{2023}).
\newblock


\bibitem[Shervashidze et~al\mbox{.}(2011)]%
        {shervashidze2011weisfeiler}
\bibfield{author}{\bibinfo{person}{Nino Shervashidze}, \bibinfo{person}{Pascal Schweitzer}, \bibinfo{person}{Erik~Jan Van~Leeuwen}, \bibinfo{person}{Kurt Mehlhorn}, {and} \bibinfo{person}{Karsten~M Borgwardt}.} \bibinfo{year}{2011}\natexlab{}.
\newblock \showarticletitle{Weisfeiler-Lehman Graph Kernels}.
\newblock \bibinfo{journal}{\emph{Journal of Machine Learning Research}} \bibinfo{volume}{12}, \bibinfo{number}{9} (\bibinfo{year}{2011}), \bibinfo{pages}{2539--2561}.
\newblock


\bibitem[Shi et~al\mbox{.}(2024a)]%
        {shiot}
\bibfield{author}{\bibinfo{person}{Liangliang Shi}, \bibinfo{person}{Jack Fan}, {and} \bibinfo{person}{Junchi Yan}.} \bibinfo{year}{2024}\natexlab{a}.
\newblock \showarticletitle{OT-CLIP: Understanding and Generalizing CLIP via Optimal Transport}. In \bibinfo{booktitle}{\emph{ICML}}. \bibinfo{pages}{1--22}.
\newblock


\bibitem[Shi et~al\mbox{.}(2024b)]%
        {shi2024double}
\bibfield{author}{\bibinfo{person}{Liangliang Shi}, \bibinfo{person}{Zhaoqi Shen}, {and} \bibinfo{person}{Junchi Yan}.} \bibinfo{year}{2024}\natexlab{b}.
\newblock \showarticletitle{Double-Bounded Optimal Transport for Advanced Clustering and Classification}. In \bibinfo{booktitle}{\emph{AAAI}}, Vol.~\bibinfo{volume}{38}. \bibinfo{pages}{14982--14990}.
\newblock


\bibitem[Shi et~al\mbox{.}(2023)]%
        {shi2023understanding}
\bibfield{author}{\bibinfo{person}{Liangliang Shi}, \bibinfo{person}{Gu Zhang}, \bibinfo{person}{Haoyu Zhen}, \bibinfo{person}{Jintao Fan}, {and} \bibinfo{person}{Junchi Yan}.} \bibinfo{year}{2023}\natexlab{}.
\newblock \showarticletitle{Understanding and Generalizing Contrastive Learning from The Inverse Optimal Transport Perspective}. In \bibinfo{booktitle}{\emph{ICML}}. \bibinfo{pages}{31408--31421}.
\newblock


\bibitem[Stuart and Wolfram(2020)]%
        {stuart2020inverse}
\bibfield{author}{\bibinfo{person}{Andrew~M Stuart} {and} \bibinfo{person}{Marie-Therese Wolfram}.} \bibinfo{year}{2020}\natexlab{}.
\newblock \showarticletitle{Inverse Optimal Transport}.
\newblock \bibinfo{journal}{\emph{SIAM J. Appl. Math.}} \bibinfo{volume}{80}, \bibinfo{number}{1} (\bibinfo{year}{2020}), \bibinfo{pages}{599--619}.
\newblock


\bibitem[Titouan et~al\mbox{.}(2019)]%
        {titouan2019optimal}
\bibfield{author}{\bibinfo{person}{Vayer Titouan}, \bibinfo{person}{Nicolas Courty}, \bibinfo{person}{Romain Tavenard}, {and} \bibinfo{person}{R{\'e}mi Flamary}.} \bibinfo{year}{2019}\natexlab{}.
\newblock \showarticletitle{Optimal Transport for Structured Data with Application on Graphs}. In \bibinfo{booktitle}{\emph{ICML}}. \bibinfo{pages}{6275--6284}.
\newblock


\bibitem[Vayer et~al\mbox{.}(2020)]%
        {vayer2020fused}
\bibfield{author}{\bibinfo{person}{Titouan Vayer}, \bibinfo{person}{Laetitia Chapel}, \bibinfo{person}{R{\'e}mi Flamary}, \bibinfo{person}{Romain Tavenard}, {and} \bibinfo{person}{Nicolas Courty}.} \bibinfo{year}{2020}\natexlab{}.
\newblock \showarticletitle{Fused Gromov-Wasserstein Distance for Structured Objects}.
\newblock \bibinfo{journal}{\emph{Algorithms}} \bibinfo{volume}{13}, \bibinfo{number}{9} (\bibinfo{year}{2020}), \bibinfo{pages}{212}.
\newblock


\bibitem[Vayer et~al\mbox{.}(2019)]%
        {DBLP:conf/icml/VayerCTCF19}
\bibfield{author}{\bibinfo{person}{Titouan Vayer}, \bibinfo{person}{Nicolas Courty}, \bibinfo{person}{Romain Tavenard}, \bibinfo{person}{Laetitia Chapel}, {and} \bibinfo{person}{R{\'{e}}mi Flamary}.} \bibinfo{year}{2019}\natexlab{}.
\newblock \showarticletitle{Optimal Transport for Structured Data with Application on Graphs}. In \bibinfo{booktitle}{\emph{ICML}}, Vol.~\bibinfo{volume}{97}. \bibinfo{publisher}{{PMLR}}, \bibinfo{pages}{6275--6284}.
\newblock


\bibitem[Villani et~al\mbox{.}({[n.\,d.]})]%
        {villani2009optimal}
\bibfield{author}{\bibinfo{person}{C{\'e}dric Villani} {et~al\mbox{.}}} \bibinfo{year}{[n.\,d.]}\natexlab{}.
\newblock \bibinfo{booktitle}{\emph{{Optimal Transport: Old and New}}}. Vol.~\bibinfo{volume}{338}.
\newblock \bibinfo{publisher}{Springer}.
\newblock


\bibitem[Vincent-Cuaz et~al\mbox{.}(2021)]%
        {vincent2021semi}
\bibfield{author}{\bibinfo{person}{C{\'e}dric Vincent-Cuaz}, \bibinfo{person}{R{\'e}mi Flamary}, \bibinfo{person}{Marco Corneli}, \bibinfo{person}{Titouan Vayer}, {and} \bibinfo{person}{Nicolas Courty}.} \bibinfo{year}{2021}\natexlab{}.
\newblock \showarticletitle{Semi-Relaxed Gromov-Wasserstein Divergence and Applications on Graphs}. In \bibinfo{booktitle}{\emph{ICLR}}. \bibinfo{pages}{1--14}.
\newblock


\bibitem[Wang et~al\mbox{.}(2022)]%
        {wang2022neural}
\bibfield{author}{\bibinfo{person}{Hanchen Wang}, \bibinfo{person}{Rong Hu}, \bibinfo{person}{Ying Zhang}, \bibinfo{person}{Lu Qin}, \bibinfo{person}{Wei Wang}, {and} \bibinfo{person}{Wenjie Zhang}.} \bibinfo{year}{2022}\natexlab{}.
\newblock \showarticletitle{Neural Subgraph Counting with Wasserstein Estimator}. In \bibinfo{booktitle}{\emph{SIGMOD}}. \bibinfo{pages}{160--175}.
\newblock


\bibitem[Wang et~al\mbox{.}(2024)]%
        {wang2024neural}
\bibfield{author}{\bibinfo{person}{Jianwei Wang}, \bibinfo{person}{Kai Wang}, \bibinfo{person}{Xuemin Lin}, \bibinfo{person}{Wenjie Zhang}, {and} \bibinfo{person}{Ying Zhang}.} \bibinfo{year}{2024}\natexlab{}.
\newblock \showarticletitle{Neural Attributed Community Search at Billion Scale}.
\newblock \bibinfo{journal}{\emph{PACMMOD}} \bibinfo{volume}{1}, \bibinfo{number}{4} (\bibinfo{year}{2024}), \bibinfo{pages}{1--25}.
\newblock


\bibitem[Wang et~al\mbox{.}(2021)]%
        {wang2021combinatorial}
\bibfield{author}{\bibinfo{person}{Runzhong Wang}, \bibinfo{person}{Tianqi Zhang}, \bibinfo{person}{Tianshu Yu}, \bibinfo{person}{Junchi Yan}, {and} \bibinfo{person}{Xiaokang Yang}.} \bibinfo{year}{2021}\natexlab{}.
\newblock \showarticletitle{Combinatorial Learning of Graph Edit Distance via Dynamic Embedding}. In \bibinfo{booktitle}{\emph{CVPR}}. \bibinfo{pages}{5241--5250}.
\newblock


\bibitem[Wang et~al\mbox{.}(2012)]%
        {DBLP:conf/icde/WangDTYJ12}
\bibfield{author}{\bibinfo{person}{Xiaoli Wang}, \bibinfo{person}{Xiaofeng Ding}, \bibinfo{person}{Anthony K.~H. Tung}, \bibinfo{person}{Shanshan Ying}, {and} \bibinfo{person}{Hai Jin}.} \bibinfo{year}{2012}\natexlab{}.
\newblock \showarticletitle{An Efficient Graph Indexing Method}. In \bibinfo{booktitle}{\emph{ICDE}}. \bibinfo{pages}{210--221}.
\newblock


\bibitem[Wilson(1969)]%
        {wilson1969use}
\bibfield{author}{\bibinfo{person}{Alan~Geoffrey Wilson}.} \bibinfo{year}{1969}\natexlab{}.
\newblock \showarticletitle{The Use of Entropy Maximising Models, in the Theory of Trip Distribution, Mode Split and Route Split}.
\newblock \bibinfo{journal}{\emph{Journal of Transport Economics and Policy}} (\bibinfo{year}{1969}), \bibinfo{pages}{108--126}.
\newblock


\bibitem[Xiao et~al\mbox{.}(2008)]%
        {DBLP:journals/imst/XiaoGTL08}
\bibfield{author}{\bibinfo{person}{Bing Xiao}, \bibinfo{person}{Xinbo Gao}, \bibinfo{person}{Dacheng Tao}, {and} \bibinfo{person}{Xuelong Li}.} \bibinfo{year}{2008}\natexlab{}.
\newblock \showarticletitle{HMM-Based Graph Edit Distance for Image Indexing}.
\newblock \bibinfo{journal}{\emph{International Journal of Imaging Systems and Technology}} \bibinfo{volume}{18}, \bibinfo{number}{2-3} (\bibinfo{year}{2008}), \bibinfo{pages}{209--218}.
\newblock


\bibitem[Xiao et~al\mbox{.}(2022)]%
        {xiao2022graph}
\bibfield{author}{\bibinfo{person}{Shunxin Xiao}, \bibinfo{person}{Shiping Wang}, \bibinfo{person}{Yuanfei Dai}, {and} \bibinfo{person}{Wenzhong Guo}.} \bibinfo{year}{2022}\natexlab{}.
\newblock \showarticletitle{Graph Neural Networks in Node Classification: Survey and Evaluation}.
\newblock \bibinfo{journal}{\emph{Machine Vision and Applications}} \bibinfo{volume}{33}, \bibinfo{number}{1} (\bibinfo{year}{2022}), \bibinfo{pages}{4--22}.
\newblock


\bibitem[Xu et~al\mbox{.}(2019)]%
        {DBLP:conf/nips/XuLC19}
\bibfield{author}{\bibinfo{person}{Hongteng Xu}, \bibinfo{person}{Dixin Luo}, {and} \bibinfo{person}{Lawrence Carin}.} \bibinfo{year}{2019}\natexlab{}.
\newblock \showarticletitle{Scalable Gromov-Wasserstein Learning for Graph Partitioning and Matching}. In \bibinfo{booktitle}{\emph{NeurIPS}}. \bibinfo{pages}{3046--3056}.
\newblock


\bibitem[Xu et~al\mbox{.}(2021)]%
        {xu2021vocabulary}
\bibfield{author}{\bibinfo{person}{Jingjing Xu}, \bibinfo{person}{Hao Zhou}, \bibinfo{person}{Chun Gan}, \bibinfo{person}{Zaixiang Zheng}, {and} \bibinfo{person}{Lei Li}.} \bibinfo{year}{2021}\natexlab{}.
\newblock \showarticletitle{Vocabulary Learning via Optimal Transport for Neural Machine Translation}. In \bibinfo{booktitle}{\emph{ACL}}. \bibinfo{pages}{1--13}.
\newblock


\bibitem[Xu et~al\mbox{.}(2018)]%
        {xu2018powerful}
\bibfield{author}{\bibinfo{person}{Keyulu Xu}, \bibinfo{person}{Weihua Hu}, \bibinfo{person}{Jure Leskovec}, {and} \bibinfo{person}{Stefanie Jegelka}.} \bibinfo{year}{2018}\natexlab{}.
\newblock \showarticletitle{How Powerful Are Graph Neural Networks?}
\newblock \bibinfo{journal}{\emph{arXiv preprint arXiv:1810.00826}} (\bibinfo{year}{2018}).
\newblock


\bibitem[Yang and Zou(2021)]%
        {yang2021noah}
\bibfield{author}{\bibinfo{person}{Lei Yang} {and} \bibinfo{person}{Lei Zou}.} \bibinfo{year}{2021}\natexlab{}.
\newblock \showarticletitle{Noah: Neural-Optimized A* Search Algorithm for Graph Edit Distance Computation}. In \bibinfo{booktitle}{\emph{ICDE}}. \bibinfo{pages}{576--587}.
\newblock


\bibitem[Yu et~al\mbox{.}(2022)]%
        {yu2022explainable}
\bibfield{author}{\bibinfo{person}{Weijie Yu}, \bibinfo{person}{Zhongxiang Sun}, \bibinfo{person}{Jun Xu}, \bibinfo{person}{Zhenhua Dong}, \bibinfo{person}{Xu Chen}, \bibinfo{person}{Hongteng Xu}, {and} \bibinfo{person}{Ji-Rong Wen}.} \bibinfo{year}{2022}\natexlab{}.
\newblock \showarticletitle{Explainable Legal Case Matching via Inverse Optimal Transport-based Rationale Extraction}. In \bibinfo{booktitle}{\emph{SIGIR}}. \bibinfo{pages}{657--668}.
\newblock


\bibitem[Zeng et~al\mbox{.}(2009)]%
        {zeng2009comparing}
\bibfield{author}{\bibinfo{person}{Zhiping Zeng}, \bibinfo{person}{Anthony~KH Tung}, \bibinfo{person}{Jianyong Wang}, \bibinfo{person}{Jianhua Feng}, {and} \bibinfo{person}{Lizhu Zhou}.} \bibinfo{year}{2009}\natexlab{}.
\newblock \showarticletitle{Comparing Stars: On Approximating Graph Edit Distance}.
\newblock \bibinfo{journal}{\emph{PVLDB}} \bibinfo{volume}{2}, \bibinfo{number}{1} (\bibinfo{year}{2009}), \bibinfo{pages}{25--36}.
\newblock


\bibitem[Zhang(2022)]%
        {zhang2022graph}
\bibfield{author}{\bibinfo{person}{Muhan Zhang}.} \bibinfo{year}{2022}\natexlab{}.
\newblock \showarticletitle{Graph Neural Networks: Link Prediction}.
\newblock \bibinfo{journal}{\emph{Graph Neural Networks: Foundations, Frontiers, and Applications}} (\bibinfo{year}{2022}), \bibinfo{pages}{195--223}.
\newblock


\bibitem[Zhang and Chen(2018)]%
        {zhang2018link}
\bibfield{author}{\bibinfo{person}{Muhan Zhang} {and} \bibinfo{person}{Yixin Chen}.} \bibinfo{year}{2018}\natexlab{}.
\newblock \showarticletitle{Link Prediction Based on Graph Neural Networks}.
\newblock \bibinfo{journal}{\emph{NeurIPS}}  \bibinfo{volume}{31} (\bibinfo{year}{2018}), \bibinfo{pages}{5171--5181}.
\newblock


\bibitem[Zhang et~al\mbox{.}(2024)]%
        {GW-JML}
\bibfield{author}{\bibinfo{person}{Wei Zhang}, \bibinfo{person}{Zihao Wang}, \bibinfo{person}{Jie Fan}, \bibinfo{person}{Hao Wu}, {and} \bibinfo{person}{Yong Zhang}.} \bibinfo{year}{2024}\natexlab{}.
\newblock \showarticletitle{Fast Gradient Computation for Gromov-Wasserstein Distance}.
\newblock \bibinfo{journal}{\emph{Journal of Machine Learning}} \bibinfo{volume}{3}, \bibinfo{number}{3} (\bibinfo{year}{2024}), \bibinfo{pages}{282--299}.
\newblock
\showISSN{2790-2048}


\bibitem[Zhao et~al\mbox{.}(2021)]%
        {zhao2021learned}
\bibfield{author}{\bibinfo{person}{Kangfei Zhao}, \bibinfo{person}{Jeffrey~Xu Yu}, \bibinfo{person}{Hao Zhang}, \bibinfo{person}{Qiyan Li}, {and} \bibinfo{person}{Yu Rong}.} \bibinfo{year}{2021}\natexlab{}.
\newblock \showarticletitle{A Learned Sketch for Subgraph Counting}. In \bibinfo{booktitle}{\emph{SIGMOD}}. \bibinfo{pages}{2142--2155}.
\newblock


\bibitem[Zhao et~al\mbox{.}(2013)]%
        {DBLP:journals/pvldb/ZhaoXLLZ13}
\bibfield{author}{\bibinfo{person}{Xiang Zhao}, \bibinfo{person}{Chuan Xiao}, \bibinfo{person}{Xuemin Lin}, \bibinfo{person}{Qing Liu}, {and} \bibinfo{person}{Wenjie Zhang}.} \bibinfo{year}{2013}\natexlab{}.
\newblock \showarticletitle{A Partition-Based Approach to Structure Similarity Search}.
\newblock \bibinfo{journal}{\emph{PVLDB}} \bibinfo{volume}{7}, \bibinfo{number}{3} (\bibinfo{year}{2013}), \bibinfo{pages}{169--180}.
\newblock


\bibitem[Zhao et~al\mbox{.}(2012)]%
        {zhao2012efficient}
\bibfield{author}{\bibinfo{person}{Xiang Zhao}, \bibinfo{person}{Chuan Xiao}, \bibinfo{person}{Xuemin Lin}, {and} \bibinfo{person}{Wei Wang}.} \bibinfo{year}{2012}\natexlab{}.
\newblock \showarticletitle{Efficient Graph Similarity Joins with Edit Distance Constraints}. In \bibinfo{booktitle}{\emph{ICDE}}. IEEE, \bibinfo{pages}{834--845}.
\newblock


\bibitem[Zhao et~al\mbox{.}(2018)]%
        {zhao2018efficient}
\bibfield{author}{\bibinfo{person}{Xiang Zhao}, \bibinfo{person}{Chuan Xiao}, \bibinfo{person}{Xuemin Lin}, \bibinfo{person}{Wenjie Zhang}, {and} \bibinfo{person}{Yang Wang}.} \bibinfo{year}{2018}\natexlab{}.
\newblock \showarticletitle{Efficient Structure Similarity Searches: A Partition-based Approach}.
\newblock \bibinfo{journal}{\emph{The VLDB Journal}} \bibinfo{volume}{27}, \bibinfo{number}{1} (\bibinfo{year}{2018}), \bibinfo{pages}{53--78}.
\newblock


\bibitem[Zheng et~al\mbox{.}(2013)]%
        {DBLP:conf/cikm/ZhengZLWZ13}
\bibfield{author}{\bibinfo{person}{Weiguo Zheng}, \bibinfo{person}{Lei Zou}, \bibinfo{person}{Xiang Lian}, \bibinfo{person}{Dong Wang}, {and} \bibinfo{person}{Dongyan Zhao}.} \bibinfo{year}{2013}\natexlab{}.
\newblock \showarticletitle{Graph Similarity Search with Edit Distance Constraint in Large Graph Databases}. In \bibinfo{booktitle}{\emph{CIKM}}. \bibinfo{pages}{1595--1600}.
\newblock


\bibitem[Zhou et~al\mbox{.}(2020)]%
        {zhou2020graph}
\bibfield{author}{\bibinfo{person}{Jie Zhou}, \bibinfo{person}{Ganqu Cui}, \bibinfo{person}{Shengding Hu}, \bibinfo{person}{Zhengyan Zhang}, \bibinfo{person}{Cheng Yang}, \bibinfo{person}{Zhiyuan Liu}, \bibinfo{person}{Lifeng Wang}, \bibinfo{person}{Changcheng Li}, {and} \bibinfo{person}{Maosong Sun}.} \bibinfo{year}{2020}\natexlab{}.
\newblock \showarticletitle{Graph Neural Networks: A Review of Methods and Applications}.
\newblock \bibinfo{journal}{\emph{AI Open}}  \bibinfo{volume}{1} (\bibinfo{year}{2020}), \bibinfo{pages}{57--81}.
\newblock


\end{thebibliography}


\appendix
\clearpage
\begin{appendix}
\section{Review of Approximate GED Computation}\label{app:review}
\subsection{Review of Heuristic Algorithms}
\label{app:heuristic}
{There are plenty of heuristic algorithms including A*-Beam~\cite{neuhaus2006A*beam}, Hungarian~\cite{riesen2009Hungarian}, and VJ~\cite{fankhauser2011VJ}, all of which provide an approximate GED in polynomial time. 
A*-Beam~\cite{neuhaus2006A*beam} bounds the search space in the exact algorithm A* with a user-defined beam size for efficiency. Hungarian~\cite{riesen2009Hungarian} constructs a cost matrix to estimate the number of edit operations induced by matching two nodes across two graphs and model the computation of GED as a linear sum assignment problem. It produces a matching matrix and approximate GED by solving it with the Hungarian algorithm~\cite{munkres1957algorithms}. In~\cite{riesen2009Hungarian}, the solution to a linear sum assignment problem concerning node matching is regarded as the approximation of the GED value. VJ~\cite{fankhauser2011VJ} improves upon the primal-dual method used in the Hungarian algorithm and incorporates more effective search strategies to reduce the computational overhead.} 
\subsection{Review of GNN-based Methods}
\label{app:gnn-based}
Recently, graph neural networks (GNN) have become popular since the extracted node and graph embeddings can greatly help the performance in node classification~\cite{xiao2022graph,zhou2020graph}, link prediction~\cite{zhang2018link,zhang2022graph}, and other classical graph problems~\cite{zhao2021learned,wang2024neural,li2023coclep}, etc. 
Consequently, a number of GNN-based methods, such as SimGNN~\cite{bai2019simgnn}, TaGSim~\cite{bai2021tagsim}, Noah~\cite{yang2021noah}, MATA*~\cite{liu2023mata} and GEDGNN~\cite{piao2023gedgnn}, have also been proposed to generate embedding for GED computation with adequate training data, which achieve best performance in approximate GED computation. 
SimGNN~\cite{bai2019simgnn} proposes to simply aggregate node embeddings into a graph embedding with attention mechanism for each graph, and then generate features for a given graph pair $(G^1, G^2)$ with their embeddings using a neural tensor network. The features are then used to predict the GED for regression. 
TaGSim~\cite{bai2021tagsim} categorizes the graph edit operations into different types and predicts the number of operations in each type more precisely. 
Noah~\cite{yang2021noah} applies the heuristic A*-beam algorithm~\cite{neuhaus2006A*beam} guided by a GNN model
called graph path network (GPN) to find small feasible edit paths. 
MATA*~\cite{liu2023mata} employs a structure-enhanced GNN to learn the differentiable top-$k$ candidate matching vertices which prunes the unpromising search directions of A*LSa~\cite{chang2020speeding} for approximate GED computation. 
Finally, GEDGNN~\cite{piao2023gedgnn} utilizes two separate cross-matrix modules to generate a cost matrix $\mathbf{A}_\text{cost}$ and a vertex-matching matrix $\mathbf{A}_\text{match}$, respectively, from GNN-extracted vertex features, where $\mathbf{A}_\text{match}$ is used for edit path generation, and both matrices are used to regress the GED. However, the correlation between $\mathbf{A}_\text{cost}$ and $\mathbf{A}_\text{match}$ is not captured, and $\mathbf{A}_\text{match}$ is directly used to fit the ground-truth vertex coupling relationship. In contrast, our GEDIOT model explicitly captures their correlation as $\mathbf{A}_\text{match}=\text{OT}(\mathbf{A}_\text{cost})$, where OT is our learnable Sinkhorn layer to be introduced in Section~\ref{sec:learnot} that ensures the matching constraints to be established in Eq.~\eqref{eq:constraint}.

\section{Theoretical Analysis}
\subsection{Sinkhorn Algorithm for OT}\label{app:sinkhorn}
In this section, we derive the Sinkhorn algorithm of optimal transport (OT) with Lagrange duality theory.

Recall that Sinkhorn is to solve the entropy relaxation of OT as specified in Eq.~\eqref{eq:eot}. The Lagrangian of Eq.~\eqref{eq:eot} can be written as
\begin{equation*}
    \begin{aligned}
        L(\bm{\pi},\bm{\alpha},\bm{\beta}) &=\sum_{i=1}^{n_1}\sum_{j=1}^{n_2}\left(
        \mathbf{C}_{i,j}\bm{\pi}_{i,j}+\bm{\pi}_{i,j}(\log\bm{\pi}_{i,j}-1)\right)\\
        &+ \sum_{i=1}^{n_1}\bm{\alpha}_i\left(\sum_{j=1}^{n_2}\bm{\pi}_{i,j}-\bm{\mu}_i\right)+\sum_{j=1}^{n_2}\bm{\beta}_j\left(\sum_{i=1}^{n_1}\bm{\pi}_{i,j}-\bm{\nu}_j\right)\\
        &=\left<\mathbf{C},\bm{\pi} \right>+\varepsilon H(\bm{\pi})+\left<\bm{\alpha},\bm{\pi}\bm{1}_{n_2}-\bm{\mu} \right>+\left<\bm{\beta},\bm{\pi}^\top\bm{1}_{n_1}-\bm{\nu} \right>
    \end{aligned}
\end{equation*}
Taking the derivative of the above Lagrangian with respect to $\bm{\pi}_{i,j}$ and setting it to zero, we get
\begin{equation*}
    \bm{\pi}_{i,j} = \bm{\varphi}_i\, \mathbf{K}_{i,j}\, \bm{\psi}_j, 
\end{equation*}
where $\mathbf{K}\in\mathbb{R}^{n_1\times n_2}$ is the kernel matrix with $\mathbf{K}_{i,j} = \exp \left( - \mathbf{C}_{i,j} / \varepsilon \right)$; $\bm\varphi \in\mathbb{R}^{n_1}$ and $\bm\psi \in \mathbb{R}^{n_2}$ are the dual variables with $\bm{\varphi}_{i} = \exp \left( - \bm{\alpha}_i / \varepsilon \right)$ and $\bm{\psi}_j = \exp \left( - \bm{\beta}_j / \varepsilon \right)$. 
Taking the derivatives of the Lagrangian with respect to $\bm{\alpha}_i$ and $\bm{\beta}_j$,  and setting them to zero, we obtain
\begin{equation*}
    \bm{\mu}_i = \bm{\varphi}_i \sum_j^{n_2} \mathbf{K}_{i,j} \bm{\psi}_j, \quad \bm{\nu}_j = \bm{\psi}_j \sum_{i}^{n_1} \mathbf{K}_{i,j} \bm{\varphi}_i. 
\end{equation*}
The Sinkhorn algorithm is to update the dual variables $\bm{\varphi}$ and $\bm{\psi}$ via the element-wise computation:
\begin{equation*}
\begin{aligned}
    &\bm\psi=\bm\nu\oslash(\mathbf{K}^\top\bm\varphi),\\
    &\bm\varphi=\bm\mu\oslash(\mathbf{K}~\bm\psi),
\end{aligned}
\end{equation*}
where the notation $\oslash$ is element-wise division. Note that the element $\mathbf{K}_{i,j}$ in $\mathbf{K}$ is strictly positive, and thus the denominators 
$\mathbf{K}^\top\bm\varphi$ and $\mathbf{K}~\bm\psi$ are always non-zero. 

\subsection{Error Analysis of GEDIOT}\label{app:error}
In this section, we analyze the solution of our proposed GEDIOT during training. 

The following theorem shows that in an ideal situation, the well-trained GEDIOT can output a coupling matrix that is the same as the ground truth node matching. 
\begin{theorem}
    There exists a cost matrix $\widehat{\mathbf{C}}^*$, such that the optimal coupling matrix $\widehat{\bm{\pi}}^*$ of the optimization problem 
    $$\min_{\bm{\pi} \in U(\bm{1}_{n_1},\bm{1}_{n_2})} \left<\widehat{\mathbf{C}}^*,\bm{\pi}\right>+\varepsilon \left< \bm{\pi}, \log\bm{\pi}-1\right>$$
    is exactly the ground truth node matching $\bm{\pi}^*$. 
\end{theorem}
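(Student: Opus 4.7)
The plan is to construct the cost matrix $\widehat{\mathbf{C}}^*$ explicitly via the KKT conditions of the inner entropy-regularized program, and then verify that the resulting minimizer coincides with $\bm{\pi}^*$. The objective $\langle \widehat{\mathbf{C}}^*, \bm{\pi}\rangle + \varepsilon \langle \bm{\pi}, \log\bm{\pi}-1\rangle$ is strictly convex on $\{\bm{\pi}\ge 0\}$ and $U(\bm{1}_{n_1},\bm{1}_{n_2})$ is a compact convex polytope, so the minimizer exists, is unique, and is fully characterized by first-order optimality. Writing the Lagrangian with multipliers $\bm{\alpha}\in\mathbb{R}^{n_1}$ for the row-sum equalities and $\bm{\beta}\in\mathbb{R}_{\ge 0}^{n_2}$ for the column-sum inequalities, stationarity in $\bm{\pi}_{i,j}$ together with the $\varepsilon$-entropy yields the closed form
\begin{equation*}
\widehat{\bm{\pi}}^*_{i,j} \;=\; \exp\!\bigl(-(\widehat{\mathbf{C}}^*_{i,j}+\bm{\alpha}_i+\bm{\beta}_j)/\varepsilon\bigr),
\end{equation*}
together with complementary slackness $\bm{\beta}_j\bigl(\sum_i \widehat{\bm{\pi}}^*_{i,j} - 1\bigr)=0$. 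This is the same Sinkhorn-type factorization derived in Appendix~\ref{app:sinkhorn}, now extended to the inequality constraint in $U$.

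Next, I would reverse-engineer $\widehat{\mathbf{C}}^*$ so that the closed form equals $\bm{\pi}^*$. Since $\bm{\pi}^*$ is a one-to-one matching, every row $i$ has a unique matched column $j_i$; the other $n_2-n_1$ columns are all zero. Take $\bm{\alpha}=\bm{0}$ and $\bm{\beta}=\bm{0}$: dual feasibility holds trivially, complementary slackness is satisfied because for matched columns the inequality is tight and for unmatched columns $\bm{\beta}_j=0$. Then set $\widehat{\mathbf{C}}^*_{i,j_i}=0$ on every matched pair so that $\widehat{\bm{\pi}}^*_{i,j_i}=\exp(0)=1$, and set $\widehat{\mathbf{C}}^*_{i,j}=+\infty$ on every unmatched pair so that $\widehat{\bm{\pi}}^*_{i,j}=\exp(-\infty)=0$. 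All constraints in $U(\bm{1}_{n_1},\bm{1}_{n_2})$ are satisfied by $\widehat{\bm{\pi}}^*$, and by uniqueness of the minimizer we conclude $\widehat{\bm{\pi}}^*=\bm{\pi}^*$.

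The main obstacle is that whenever $\widehat{\mathbf{C}}^*$ has only finite entries the entropy term drives the minimizer into the strict interior of the polytope, so the identity $\widehat{\bm{\pi}}^*=\bm{\pi}^*$ cannot hold exactly. The theorem therefore has to be read either by admitting extended-real cost values (using the convention $\exp(-\infty)=0$ and $0\log 0=0$), or in a limiting sense: consider the family $\widehat{\mathbf{C}}^*(M)$ that equals $0$ on matched pairs and $M$ on unmatched pairs, and use strict convexity together with continuity of the regularized OT program in the cost parameter to conclude that the associated optimal couplings $\widehat{\bm{\pi}}^*(M)$ converge to $\bm{\pi}^*$ as $M\to\infty$. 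I would state this convention up front and carry the rest of the verification through under it, since it is both the cleanest and the most natural way to make the existence claim rigorous.
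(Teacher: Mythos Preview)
Your approach is essentially the same as the paper's: reverse-engineer the cost matrix from the KKT/Sinkhorn stationarity identity $\widehat{\bm{\pi}}^*_{i,j}=\exp\bigl(-(\widehat{\mathbf{C}}^*_{i,j}+\bm{\alpha}_i+\bm{\beta}_j)/\varepsilon\bigr)$ so that the optimizer equals $\bm{\pi}^*$. The paper's construction $\widetilde{\mathbf{C}}_{i,j}=-\bigl(\mathbf{a}_i+\mathbf{b}_j+\varepsilon\log\bm{\pi}^*_{i,j}\bigr)$ specializes, with $\mathbf{a}=\mathbf{b}=\bm{0}$, to exactly your choice of $0$ on matched pairs and $+\infty$ on unmatched pairs.

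The one procedural difference is that the paper first passes to the dummy-row reformulation $\Pi(\widetilde{\bm\mu},\widetilde{\bm\nu})$ with only equality constraints and then applies KKT there, whereas you stay in $U(\bm{1}_{n_1},\bm{1}_{n_2})$ and handle the column inequality via complementary slackness directly. Both routes are valid and land on the same construction. Your write-up is in fact more careful on one point the paper leaves implicit: because $\bm{\pi}^*$ is binary, $\log\bm{\pi}^*_{i,j}=-\infty$ on unmatched entries, so the constructed cost matrix necessarily lives in the extended reals (or must be understood as a limit). The paper's formula silently relies on the same convention; your explicit discussion of the $\exp(-\infty)=0$, $0\log 0=0$ convention, or alternatively the $M\to\infty$ limiting argument, is the honest way to close the statement.
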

\begin{proof}
    First, following Section~\ref{sec:learnot}, we add a dummy row and consider the optimization problem
    \begin{equation}
    \label{eq:eot-iot}
    \min_{\bm{\pi}\in\Pi(\widetilde{\bm\mu},\widetilde{\bm\nu})} \left<\widetilde{\mathbf{C}},\bm{\pi}\right>+\varepsilon \left< \bm{\pi}, \log\bm{\pi}-1\right>, 
    \end{equation}
    $$\Pi(\widetilde{\bm\mu},\widetilde{\bm\nu}) = \left\{ \bm{\pi}\in\mathbb{R}^{(n_1+1)\times n_2} \ | \ \bm{\pi} \bm{1}_{n_2} = \widetilde{\bm{\mu}},\ \ \bm{\pi}^\top \bm{1}_{n_1+1} = \widetilde{\bm{\nu}},\ \ \bm{\pi}\ge0\right\}.$$
    Given arbitrary $\mathbf{a} \in \mathbb{R}^{n_1+1}$ and $\mathbf{b} \in \mathbb{R}^{n_2}$, let 
    $$\widetilde{\mathbf{C}}_{i,j} = -\left( \mathbf{a}_i + \mathbf{b}_j + \varepsilon \log \bm{\pi}^*_{i,j} \right), \text{ for } i = 1, \cdots, n_1, j = 1, \cdots, n_2, $$
    and $\widetilde{\mathbf{C}}_{n_1+1, j} = 0$, for $j = 1, \cdots, n_2$. 
    The Lagrange duality of Eq.~\eqref{eq:eot-iot} is
    \begin{multline*}
        L(\bm{\pi},\bm{\alpha},\bm{\beta}) =\left<\widetilde{\mathbf{C}},\bm{\pi} \right>+\varepsilon \left< \bm{\pi}, \log\bm{\pi}-1\right> \\
        +\left<\bm{\alpha},\bm{\pi}\bm{1}_{n_2}-\widetilde{\bm{\mu}} \right>+\left<\bm{\beta},\bm{\pi}^\top\bm{1}_{n_1+1}-\widetilde{\bm{\nu}} \right>
    \end{multline*}
    Verifying the KKT condition~\cite{boyd2004convex}
    \begin{equation*}
    \begin{aligned}
        \frac{\partial L}{\partial \bm{\pi}_{i,j}} &= \widehat{\textbf{C}}_{i,j} + \varepsilon \log \bm{\pi}_{i,j} + \bm{\alpha}_i + \bm{\beta}_j \\
        &= -\left( \mathbf{a}_i + \mathbf{b}_j + \varepsilon \log \bm{\pi}^*_{i,j} \right) + \varepsilon \log \bm{\pi}_{i,j} + \bm{\alpha}_i + \bm{\beta}_j = 0, 
    \end{aligned}
    \end{equation*}
    Thus $\widehat{\bm{\pi}}^*=\bm{\pi}^*$, $\widehat{\bm{\alpha}}^*=\mathbf{a}$, and $\widehat{\bm{\beta}}^*=\mathbf{b}$ is a group of optimal solutions, where $\widehat{\bm{\pi}}^*$ is the optimal coupling matrix of Eq.~\eqref{eq:eot-iot} without the last row, and $\widehat{\bm{\alpha}}^*$ and $\widehat{\bm{\beta}}^*$ are corresponding optimal dual variables. Then, the first term of the objective function in the outer minimization in Eq.~\eqref{eq:iot-ged} reaches 0. 

    Particularly, according to Eq.~\eqref{eq:ged-match}, the approximate GED value is exactly the ground truth GED value when 
    \begin{equation}
    \label{eq:exact-cost}
        \widehat{\mathbf{C}}^* = \mathbf{M}+\frac{1}{2}\mathcal{L}(\mathbf{A}^1,\mathbf{A}^2)\otimes \bm{\pi}^*.
    \end{equation}
\end{proof}
Then based on the analysis in~\cite{li2019learning}, we show the relation between errors in the learned cost matrix and errors in the learned coupling matrix during training. 

\begin{theorem}
    We assume that the ground truth node-matching matrix is $\bm{\pi}^*$ and one of the corresponding cost matrices is $\mathbf{C}^*$ (defined in Eq.~\eqref{eq:exact-cost}). During training, the coupling matrix and cost matrix are denoted as $\widehat{\mathbf{C}}$ and $\widehat{\bm{\pi}}$ respectively. Let $\Delta \mathbf{C} = \mathbf{C}^* - \widehat{\mathbf{C}}$ and $\Delta \log\bm{\pi} = \log\bm{\pi}^* - \log\widehat{\bm{\pi}}$, then
    \begin{equation*}
    \begin{aligned}
        \| \Delta \mathbf{C} \|_{F} \geq \varepsilon^2 \left( \| \Delta \log\bm{\pi} \|_{F} \right) - \mathbf{f}^\top \mathbf{A}^\dagger \mathbf{f}, \\
        \| \Delta \log\bm{\pi} \|_{F} \geq \varepsilon^{-2} \left( \| \Delta \mathbf{C} \|_{F} - \mathbf{g}^\top \mathbf{A}^\dagger \mathbf{g} \right),
    \end{aligned}
    \end{equation*}
    where $\mathbf{A}=\left[\begin{array}{cc}n_2 \mathbf{I}_{n_1 \times n_1} & \mathbf{1}_{n_1}\mathbf{1}_{n_2}^\top \\ \mathbf{1}_{n_2} \mathbf{1}_{n_1}^\top & n_1 \mathbf{I}_{n_2 \times n_2}\end{array}\right]$, $\mathbf{A}^{\dagger}$ is the Moore-Penrose inverse of matrix $\mathbf{A}$, Frobenius norm $\|\mathbf{A}\|_F=\sqrt{\sum_{i=1}^{n_1} \sum_{j=1}^{n_2} \mathbf{A}_{i j}^2}$, $\mathbf{f}=\left[(\Delta \log \bm{\pi} \mathbf{1})^\top, \mathbf{1}^\top \left(\Delta \log \bm{\pi} \right)\right]^\top$, $\mathbf{g}=\left[(\Delta \mathbf{C} \mathbf{1})^\top, \mathbf{1}^\top \left(\Delta \mathbf{C} \right)\right]^\top$, and $\varepsilon$ is the regularization coefficient. 
\end{theorem}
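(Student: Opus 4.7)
The plan is to start from the KKT stationary condition of entropy-regularized OT derived in Appendix~\ref{app:sinkhorn}. Applied to both the ground-truth pair $(\mathbf{C}^*,\bm{\pi}^*)$ with some optimal duals $(\bm{\alpha}^*,\bm{\beta}^*)$ and the learned pair $(\widehat{\mathbf{C}},\widehat{\bm{\pi}})$ with duals $(\widehat{\bm{\alpha}},\widehat{\bm{\beta}})$, stationarity in $\bm{\pi}_{i,j}$ gives $\mathbf{C}_{i,j}+\varepsilon\log\bm{\pi}_{i,j}+\bm{\alpha}_i+\bm{\beta}_j=0$. Subtracting the two instances yields the central identity
\begin{equation*}
\Delta\mathbf{C}+\varepsilon\,\Delta\log\bm{\pi}=-\bigl(\Delta\bm{\alpha}\mathbf{1}^\top+\mathbf{1}\Delta\bm{\beta}^\top\bigr),
\end{equation*}
so the difference $\Delta\mathbf{C}$ agrees with $-\varepsilon\,\Delta\log\bm{\pi}$ up to a rank-(at most)-two ``outer-sum'' residual living in the range of the operator $T(\bm{u},\bm{v})=\bm{u}\mathbf{1}^\top+\mathbf{1}\bm{v}^\top$.

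Next I would make this residual explicit. Right-multiplying the identity by $\mathbf{1}_{n_2}$ and left-multiplying by $\mathbf{1}_{n_1}^\top$ kills the cross terms and produces the linear system
\begin{equation*}
\mathbf{A}\begin{bmatrix}\Delta\bm{\alpha}\\\Delta\bm{\beta}\end{bmatrix}=-(\mathbf{g}+\varepsilon\mathbf{f}),
\end{equation*}
which is exactly why the block matrix $\mathbf{A}$ appears in the statement. A short computation also verifies the quadratic-form identity $\|\bm{u}\mathbf{1}^\top+\mathbf{1}\bm{v}^\top\|_F^2=[\bm{u}^\top,\bm{v}^\top]\,\mathbf{A}\,[\bm{u}^\top,\bm{v}^\top]^\top$. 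Solving the linear system via the Moore--Penrose pseudoinverse (needed because $\mathbf{A}$ is singular; see below) gives $[\Delta\bm{\alpha}^\top,\Delta\bm{\beta}^\top]^\top=-\mathbf{A}^\dagger(\mathbf{g}+\varepsilon\mathbf{f})$, and substituting back into the quadratic-form identity expresses the residual's squared Frobenius norm in closed form in terms of $\mathbf{f}$, $\mathbf{g}$, and $\mathbf{A}^\dagger$.

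To finish, I take Frobenius norms in the central identity and apply the reverse triangle inequality $\|a\|_F\ge\|b\|_F-\|a-b\|_F$ with two different splittings: isolating $\|\Delta\mathbf{C}\|_F$ yields a lower bound of the form $\varepsilon\|\Delta\log\bm{\pi}\|_F$ minus the rank-two correction, which after squaring and collecting cross terms using the pseudoinverse expression matches $\varepsilon^{2}\|\Delta\log\bm{\pi}\|_F^{\,2}-\mathbf{f}^\top\mathbf{A}^\dagger\mathbf{f}$ up to the precise form stated; the symmetric splitting that isolates $\|\Delta\log\bm{\pi}\|_F$ produces the second inequality with $\mathbf{g}^\top\mathbf{A}^\dagger\mathbf{g}$ and the $\varepsilon^{-2}$ factor. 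The main obstacle I expect is the kernel of $\mathbf{A}$: the OT dual variables are only defined up to the gauge $(\bm{\alpha},\bm{\beta})\mapsto(\bm{\alpha}+c\mathbf{1},\bm{\beta}-c\mathbf{1})$, so one must argue that the rank-two residual (and hence its Frobenius norm) is gauge-invariant and coincides with the value produced by the minimum-norm pseudoinverse solution; a secondary care-point is tracking the exact powers of $\varepsilon$ and the Frobenius-vs-squared-Frobenius convention through the triangle-inequality step.
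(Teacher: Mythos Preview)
Your setup is right and matches the paper: the KKT stationarity condition yields the elementwise identity
\[
\Delta\mathbf{C}_{i,j}=-\varepsilon\,\Delta\log\bm{\pi}_{i,j}-\Delta\bm{\alpha}_i-\Delta\bm{\beta}_j,
\]
and the block matrix $\mathbf{A}$ is indeed the Gram matrix of the outer-sum operator $T(\bm{u},\bm{v})=\bm{u}\mathbf{1}^\top+\mathbf{1}\bm{v}^\top$.

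The finishing step, however, diverges from the paper and does not close. You solve for the \emph{actual} dual increments via $\mathbf{A}[\Delta\bm{\alpha};\Delta\bm{\beta}]=-(\mathbf{g}+\varepsilon\mathbf{f})$ and then invoke the reverse triangle inequality. But then the residual norm is $\|R\|_F^2=(\mathbf{g}+\varepsilon\mathbf{f})^\top\mathbf{A}^\dagger(\mathbf{g}+\varepsilon\mathbf{f})$, which mixes $\mathbf{f}$ and $\mathbf{g}$; no amount of ``squaring and collecting cross terms'' will reduce this to the $\mathbf{f}$-only correction $\mathbf{f}^\top\mathbf{A}^\dagger\mathbf{f}$ that the first inequality requires (and symmetrically for the second). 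The triangle inequality is simply too crude here.

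The paper's move is different and sharper: it does \emph{not} solve for the true $\Delta\bm{\alpha},\Delta\bm{\beta}$. Since the central identity says $\|\Delta\mathbf{C}\|_F^2=\|\varepsilon\,\Delta\log\bm{\pi}+\Delta\bm{\alpha}\mathbf{1}^\top+\mathbf{1}\Delta\bm{\beta}^\top\|_F^2$ for \emph{some} pair $(\Delta\bm{\alpha},\Delta\bm{\beta})$, one can lower-bound by minimizing the right-hand side over \emph{all} $(\bm{u},\bm{v})$. That is a least-squares projection onto $\mathrm{range}(T)$ whose normal equations are $\mathbf{A}[\bm{u};\bm{v}]=\varepsilon\mathbf{f}$ (note: only $\mathbf{f}$, since $M=\varepsilon\,\Delta\log\bm{\pi}$), and the minimum value is $\varepsilon^2\|\Delta\log\bm{\pi}\|_F^2-\varepsilon^2\mathbf{f}^\top\mathbf{A}^\dagger\mathbf{f}$. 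This is precisely Lemma~3 of \cite{li2019learning}, which the paper invokes. The second inequality follows symmetrically by rewriting the identity with $\Delta\log\bm{\pi}$ on the left and $M=\varepsilon^{-1}\Delta\mathbf{C}$, giving the $\mathbf{g}$-only correction. Your concern about the gauge kernel of $\mathbf{A}$ is real, but it dissolves automatically once you view the step as a projection rather than a solve: the minimum value is well-defined and independent of which minimizer you pick.
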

\begin{proof}
    For the sake of simplicity, we assume a dummy row has already been added to the cost matrix. 
    According to the KKT condition, given the cost matrix $\mathbf{C}$ and the coupling matrix $\bm{\pi}$, there exist $\bm{\alpha}, \bm{\beta}$ such that
    \begin{equation*}
        \bm{\pi}_{i,j} = \exp \left( - \left( \mathbf{C}_{i,j} + \bm{\alpha}_i + \bm{\beta}_j \right) / \varepsilon \right). 
    \end{equation*}
    Thus, there exist $\bm{\alpha}^*, \bm{\beta}^*$ and $\widehat{\bm{\alpha}}, \widehat{\bm{\beta}}$ such that
    \begin{equation*}
    \begin{aligned}
        \mathbf{C}^*_{i,j} &= -\varepsilon \log \bm{\pi}^*_{i,j} - \bm{\alpha}_i - \bm{\beta}_j, \\
        \widehat{\mathbf{C}}_{i,j} &= -\varepsilon \log \widehat{\bm{\pi}}_{i,j} - \widehat{\bm{\alpha}}_i - \widehat{\bm{\beta}}_j. 
    \end{aligned}
    \end{equation*}
    Let $\Delta \bm{\alpha} = \bm{\alpha}^* - \widehat{\bm{\alpha}}, \Delta \bm{\beta} = \bm{\beta}^* - \widehat{\bm{\beta}}$, and we have
    \begin{equation*}
        \Delta \mathbf{C}_{i,j} = - \varepsilon \Delta \bm{\pi}_{i,j} - \Delta \bm{\alpha}_i - \Delta \bm{\beta}_j. 
    \end{equation*}
    Viewing $\Delta \bm{\alpha}, \Delta \bm{\beta}$ as variables and taking the minimum value of the right-hand side according to Lemma 3 in~\cite{li2019learning}, it follows
    \begin{equation*}
        \| \Delta \mathbf{C} \|_{F} \geq \varepsilon^2 \left( \| \Delta \log\bm{\pi} \|_{F} \right) - \mathbf{f}^\top \mathbf{A}^\dagger \mathbf{f}. 
    \end{equation*}
    Similarly, consider
    \begin{equation*}
        \begin{aligned}
            \log \bm{\pi}^*_{i,j} &= -\varepsilon^{-1} \left( \mathbf{C}^*_{i,j} + \bm{\alpha}_i + \bm{\beta}_j \right), \\
        \log \widehat{\bm{\pi}}_{i,j} &= -\varepsilon^{-1} \left( \widehat{\mathbf{C}}_{i,j} + \widehat{\bm{\alpha}}_i + \widehat{\bm{\beta}}_j \right), 
        \end{aligned}
    \end{equation*}
    and we have
    \begin{equation*}
        \| \Delta \log\bm{\pi} \|_{F} \geq \varepsilon^{-2} \left( \| \Delta \mathbf{C} \|_{F} - \mathbf{g}^\top \mathbf{A}^\dagger \mathbf{g} \right). 
    \end{equation*}
\end{proof}

Moreover, we derive a bound for the gap between the approximate GED and the exact GED. 
\begin{theorem}
    Given the ground-truth node-matching matrix $\bm{\pi}^*$, its corresponding cost matrix $\mathbf{C}^*$, and the learned coupling matrix and cost matrix $\widehat{\bm{\pi}}$ and $\widehat{\mathbf{C}}$, the gap between the approximate GED value $\widehat{GED}$ and the exact GED value $GED^*$ is bounded by
    \begin{equation*}
        n \| \Delta \mathbf{C} \|_{F} + \| \mathbf{C}^* \|_{F} \| \Delta \bm{\pi} \|_{F}, 
    \end{equation*}
    where $n = \max \{ n_1, n_2 \}$, $\Delta \mathbf{C} = \mathbf{C}^* - \widehat{\mathbf{C}}$, and $\Delta \bm{\pi} = \bm{\pi}^* - \widehat{\bm{\pi}}$. 
\end{theorem}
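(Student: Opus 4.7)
The plan is to relate $\widehat{GED} - GED^*$ to the two error matrices $\Delta \mathbf{C}$ and $\Delta \bm{\pi}$ via a telescoping algebraic identity, and then bound each resulting inner product by a Cauchy--Schwarz-style inequality on the Frobenius pairing. Recall from Eq.~\eqref{eq:iot-ged} and from the construction of $\mathbf{C}^*$ in Eq.~\eqref{eq:exact-cost} that $\widehat{GED} = \langle \widehat{\mathbf{C}}, \widehat{\bm{\pi}} \rangle$ and $GED^* = \langle \mathbf{C}^*, \bm{\pi}^* \rangle$; the latter identity follows because plugging the explicit form of $\mathbf{C}^*$ into $\langle \mathbf{C}^*, \bm{\pi}^* \rangle$ reproduces precisely the objective of Eq.~\eqref{eq:ged-match} evaluated at $\bm{\pi}^*$.

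The first step is to insert $\pm\langle \mathbf{C}^*, \widehat{\bm{\pi}} \rangle$ to split the gap:
\begin{equation*}
\widehat{GED} - GED^* = \langle \widehat{\mathbf{C}} - \mathbf{C}^*, \widehat{\bm{\pi}} \rangle + \langle \mathbf{C}^*, \widehat{\bm{\pi}} - \bm{\pi}^* \rangle = -\langle \Delta \mathbf{C}, \widehat{\bm{\pi}} \rangle - \langle \mathbf{C}^*, \Delta \bm{\pi} \rangle.
\end{equation*}
The triangle inequality then reduces the task to bounding $|\langle \Delta \mathbf{C}, \widehat{\bm{\pi}} \rangle|$ and $|\langle \mathbf{C}^*, \Delta \bm{\pi} \rangle|$ separately. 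For the second term, a direct application of the Frobenius Cauchy--Schwarz inequality yields $|\langle \mathbf{C}^*, \Delta \bm{\pi} \rangle| \le \|\mathbf{C}^*\|_F\, \|\Delta \bm{\pi}\|_F$, which is exactly the second summand of the claimed bound.

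For the first term I would exploit the coupling-matrix structure of $\widehat{\bm{\pi}}$. Since $\widehat{\bm{\pi}}$ is produced by the learnable Sinkhorn layer (with the dummy row removed), its entries lie in $[0,1]$ and the constraint $\widehat{\bm{\pi}}\bm{1}_{n_2} \le \bm{1}_{n_1}$ gives $\sum_{i,j} \widehat{\bm{\pi}}_{i,j} \le n_1 \le n$. For a non-negative matrix the cross terms in expanding $\|\widehat{\bm{\pi}}\|_1^2$ are all non-negative, so $\|\widehat{\bm{\pi}}\|_F \le \|\widehat{\bm{\pi}}\|_1 \le n$. Applying Cauchy--Schwarz once more yields $|\langle \Delta \mathbf{C}, \widehat{\bm{\pi}} \rangle| \le \|\Delta \mathbf{C}\|_F\, \|\widehat{\bm{\pi}}\|_F \le n\,\|\Delta \mathbf{C}\|_F$. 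Summing the two bounds produces the stated inequality. The only non-routine ingredient is the crude bound $\|\widehat{\bm{\pi}}\|_F \le n$, whose justification requires carefully tracking the feasibility constraints of $\widehat{\bm{\pi}}$ after the dummy-row construction in Section~\ref{sec:learnot}; once this is in hand, the remainder of the argument is elementary and I do not foresee any further obstacle.
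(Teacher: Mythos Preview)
Your proposal is correct and follows essentially the same route as the paper: the identical telescoping split $\langle \widehat{\mathbf{C}}, \widehat{\bm{\pi}} \rangle - \langle \mathbf{C}^*, \bm{\pi}^* \rangle = \langle \widehat{\mathbf{C}} - \mathbf{C}^*, \widehat{\bm{\pi}} \rangle + \langle \mathbf{C}^*, \widehat{\bm{\pi}} - \bm{\pi}^* \rangle$, followed by Cauchy--Schwarz on each term. The only cosmetic difference is in bounding $\|\widehat{\bm{\pi}}\|_F \le n$: the paper uses the entrywise estimate $\widehat{\bm{\pi}}_{i,j} \le 1$ (giving $\|\widehat{\bm{\pi}}\|_F^2 \le n_1 n_2 \le n^2$), whereas you use the marginal constraint and the $\ell_2$--$\ell_1$ inequality; both are valid.
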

\begin{proof}
    Considering that $\mathbf{C}^* = \mathbf{M}+\frac{1}{2}\mathcal{L}(\mathbf{A}^1,\mathbf{A}^2)\otimes \bm{\pi}^*$ according to Eq.~\eqref{eq:ged-match}, we analyze
    \begin{equation*}
    \begin{aligned}
        | \widehat{GED} - GED^*| &= \left| \left< \widehat{\mathbf{C}}, \widehat{\bm{\pi}} \right> - \left< \mathbf{C}^*, \bm{\pi}^* \right> \right| \\
        &= \left| \left< \widehat{\mathbf{C}} - \mathbf{C}^*, \widehat{\bm{\pi}} \right> + \left< \mathbf{C}^*, \widehat{\bm{\pi}} - \bm{\pi}^* \right> \right| \\
        &\leq \| \widehat{\mathbf{C}} - \mathbf{C}^* \|_F \| \widehat{\bm{\pi}} \|_{F} + \| \mathbf{C}^* \|_{F} \| \widehat{\bm{\pi}} - \bm{\pi}^* \|_{F} \\
        &\leq n \| \Delta \mathbf{C} \|_{F} + \| \mathbf{C}^* \|_{F} \| \Delta \bm{\pi} \|_{F}. 
    \end{aligned}
    \end{equation*}
    The first ``$\leq$'' is derived from the Cauchy-Schwarz inequality and the second is based on the fact that $\bm{\pi}_{i,j} \leq 1$. 
\end{proof}

\begin{figure*}
  \centering
  \includegraphics[width=0.9\linewidth]{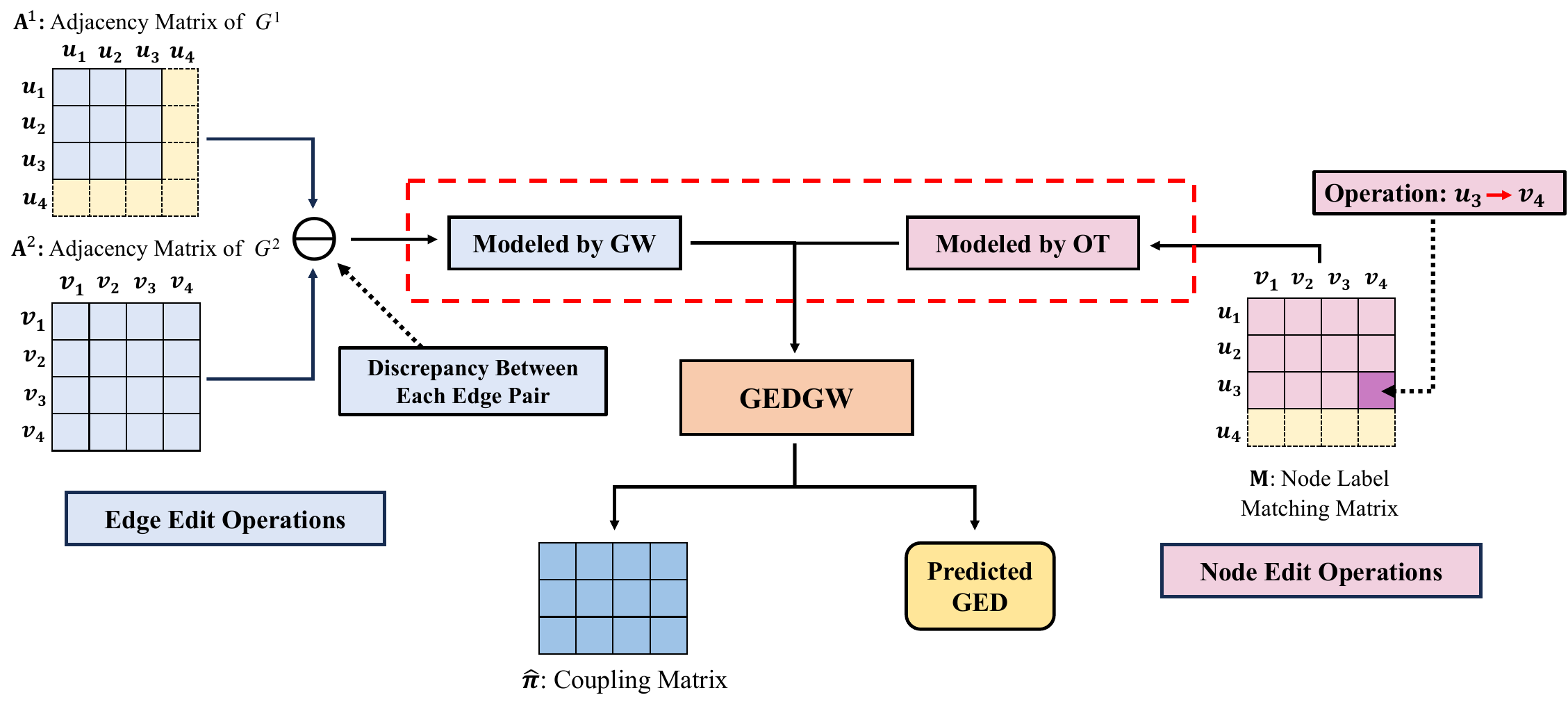}
  \vspace{-2.5mm}
  \caption{Diagram of the Proposed GEDGW}
  \label{fig:diagram_gedgw}
  \vspace{-1mm}
\end{figure*}
\setlength{\textfloatsep}{5pt}

\subsection{Diagram of GEDGW}\label{app:diagram_gedgw}
{
We present the diagram of our unsupervised method GEDGW in Figure~\ref{fig:diagram_gedgw} based on the graphs $G^1$ and $G^2$ in Figure~\ref{fig:matching}. Note that $G^1$ has 3 nodes while $G^2$ has 4. We add a dummy node $u_4$ in $G^1$ so that the two graphs have the same number of nodes, and the elements in the matrices in Figure~\ref{fig:diagram_gedgw} corresponding to the dummy node are represented by dashed lines. 
GEDGW formulates the GED computation as an optimization problem in Eq.~\eqref{eq:ged-fgw} related to node matching, since GED can be obtained according to node matching. 
It first divides editing operations into two categories: the edge edit operations and the node edit operations, and two terms $\frac{1}{2}\sum_{i,j,k,l} (\mathbf{A}^1_{i,j}-\mathbf{A}^2_{k,l})^2 \bm{\pi}_{i, k} \bm{\pi}_{j, l}$ and $\sum_{i,k}\mathbf{M}_{i,k}\bm{\pi}_{i,k}$ in the objective function of the optimization problem measure the two types of edit operations, respectively. 
As shown in the left part of Figure~\ref{fig:diagram_gedgw}, matrices $\mathbf{A}^1\in\{0,1\}^{4\times 4}$ and $\mathbf{A}^2\in\{0,1\}^{4\times 4}$ are the adjacency matrices of $G^1$ and $G^2$, respectively. 
As illustrated in the right part of Figure~\ref{fig:diagram_gedgw}, $\mathbf{M}\in\{0,1\}^{4\times 4}$ is the node label matching matrix between nodes of $G^1$ and $G^2$, where $\mathbf{M}_{i,k}=1$ if nodes $u_i\in V^1$ and $v_k\in V^2$ have the same label; otherwise $\mathbf{M}_{i,k}=0$. 

More concretely, each element $\left((\mathbf{A}^1_{i,j}-\mathbf{A}^2_{k,l})^2\right)_{i,j,k,l}$ in the 4-th order tensor indicates the discrepancy between every two edges $(u_i, u_j)\in E^1$ and $(v_k, v_l)\in E^2$. Subsequently, $(\mathbf{A}^1_{i,j}-\mathbf{A}^2_{k,l})^2\bm{\pi}_{i,k}\bm{\pi}_{j,l}$ measures the cost of the edge edit operations including edge insertion/deletion, since it represents whether edge $(u_i, u_j)\in E^1$ and edge $(v_k, v_l)\in E^2$ exist simultaneously when $u_i$ matches $v_k$ and $u_j$ matches $v_l$. We model it as Gromov-Wasserstein Discrepancy~(GW) (the left part of Figure~\ref{fig:diagram_gedgw}). 

Each element $\mathbf{M}_{i,k}\bm{\pi}_{i,k}$ measures the cost of the node edit operations including node relabeling and node insertion/deletion, since it represents matching a node in $G^2$ to a node in $G^1$ with a different label. We model it as Optimal Transport~(OT) (the right part of Figure~\ref{fig:diagram_gedgw}). 

Then, we combine GW and OT to compute GED between $G^1$ and $G^2$ (marked with the red dashed frame) and output the approximate GED and the coupling matrix $\bm{\pi}$ for GEP generation as shown in the lower part of Figure~\ref{fig:diagram_gedgw}.}

\begin{algorithm}[!t]
\DontPrintSemicolon
    \KwIn{graphs $G^1$, $G^2$}
    Compute $\mathbf{M}$ via the labels of nodes between $G^1$ and $G^2$\; 
    \For{$k=1,2,\dots$}{\label{alg1:line2}
        $\mathbf{G}^{(k)}\leftarrow$ compute based on Eq.~\eqref{eq:cg}\;\label{alg1:line3}
        $\bm{\tilde{\pi}}^{(k)}\leftarrow \argmin\limits_{\bm{\pi}\in\Pi(\bm{1}_n,\bm{1}_n)}\left<\mathbf{G}^{(k)},\bm{\pi}\right>$\;\label{alg1:line4}
        $\gamma^{(k)}\leftarrow$ line search to find the optimal step size \;\label{alg1:line5}
        $\bm{\pi}^{(k)}\leftarrow(1-\gamma^{(k)})\cdot\bm{\pi}^{(k-1)}+\gamma^{(k)}\cdot\bm{\tilde{\pi}}^{(k)}$\;\label{alg1:line6}
    }
    $\widehat{\bm{\pi}}\leftarrow\bm{\pi}^{(k)}$\;\label{alg1:line7}
    $\widehat{GED}\leftarrow \left<\widehat{\bm{\pi}},\mathbf{M} \right>+\dfrac{1}{2}\left<\widehat{\bm{\pi}},\mathcal{L}(\mathbf{A}^1,\mathbf{A}^2)\otimes\widehat{\bm{\pi}}\right> $\;\label{alg1:line8}
    \Return $\widehat{GED}$, $\widehat{\bm{\pi}}$\;\label{alg1:line9}
\caption{Conditional gradient algorithm for GEDGW}
\label{algo:cg}
\end{algorithm}

\subsection{Conditional Gradient Method}
\label{app:cg}
We solve Eq.~\eqref{eq:ged-fgw} formulated in Section~\ref{sec:eq_for_gedgw} to compute GED estimate $\widehat{GED}$ and the coupling matrix $\widehat{\pi}$ using the Conditional Gradient (CG) method.
The main idea of CG method is to solve a linear approximate subproblem repeatedly and improve a solution within a feasible region. The key advantage is that it only requires solving a simpler linear subproblem at each iteration, which can be computationally efficient. 
The pseudo-code is presented in Algorithm~\ref{algo:cg}. 

At each iteration $k$, it first computes the gradient $\mathbf{G}^{(k)}$ with the current coupling matrix $\bm{\pi}^{(k-1)}$ (Line~\ref{alg1:line3}) by the following equation:
\begin{equation}
    \label{eq:cg}
    \mathbf{G}^{(k)}\leftarrow\mathbf{M}+\frac{1}{2}\mathcal{L}(\mathbf{A}^1,\mathbf{A}^2)\otimes \bm{\pi}^{(k-1)}.
\end{equation}
The descent direction $\bm{\tilde{\pi}}^{(k)}$ is obtained by solving an OT problem with $\mathbf{G}^{(k)}$ as the cost matrix over the set $\Pi(\bm{1}_{n},\bm{1}_{n})$ (Line~\ref{alg1:line4}). 
Then the step size $\gamma^{(k)}$ in the line search is determined (Line~\ref{alg1:line5}) according to the constrained minimization of a second-order polynomial:
\begin{align}
\argmin_{\gamma\in[0,1]}&~\left<\bm{\bar{\pi}}^{(k)},\mathbf{M} \right>+\dfrac{1}{2}\left<\mathcal{L}(\mathbf{A}^1,\mathbf{A}^2)\otimes\bm{\bar{\pi}}^{(k)},\bm{\bar{\pi}}^{(k)}\right>\\
&\text{where }\bm{\bar{\pi}}^{(k)}=(1-\gamma)\cdot\bm{\pi}^{(k-1)}+\gamma\cdot\bm{\tilde{\pi}}^{(k)} \notag
\end{align}
More details of the line-search algorithm can be found in~\cite{titouan2019optimal,chapel2020partial}.
The transport plan $\bm{\pi}^{(k)}$ is then updated for next iteration (Line~\ref{alg1:line6}).


Finally, it outputs GED estimate $\widehat{GED}$ and the coupling matrix $\widehat{\bm{\pi}}$, calculated in Lines~\ref{alg1:line7}-\ref{alg1:line8}. Moreover, $\widehat{\bm{\pi}}$ can be used for GED generation with the same $k$-best matching framework discussed in Section~\ref{sec:gep}. 
\begin{algorithm}[!t]
\DontPrintSemicolon
    \KwIn{graphs $G^1=(V^1,E^1,L^1)$, $G^2=(V^2,E^2,L^2)$, \\ \qquad\quad node matching $\mathbf{M}\in\{0,1\}^{n_1\times n_2}$}
    EPath = []\;
    Generate the node mapping $f: V^1\to V^2$ and $\qquad \qquad$ inverse mapping $f^{-}: V^2\to V^1$ from $\mathbf{M}$\; \label{algo:path_line2}
    \tcp{Node Relabeling}
    \ForEach{$\mathrm{node} \ u\in V^1$}{\label{algo:path_line3}
        \If{$L^1(u)\neq L^2(f(u))$}{\label{algo:path_line4}
            EPath.append(Relabel $u$ with $L^2(f(u))$)\;\label{algo:path_line5}
        }
    }
    \tcp{Node Insertion}
    \ForEach{$\mathrm{node} \ v\in V^2\backslash f(V^1)$}{\label{algo:path_line6}
        EPath.append(Insert a node with label $L^2(v)$ in $G^1$)\;\label{algo:path_line7}
    }
    \tcp{Edge Deletion}
    \ForEach{$\mathrm{edge} \ (u,u')\in E^1$}{\label{algo:path_line8}
        \If{$(f(u),f(u'))\notin E^2$}{\label{algo:path_line9}
            EPath.append(Delete edge $(u_1,u_2)$ from $G^1$)\;\label{algo:path_line10}
        }
    }
    \tcp{Edge Insertion}
    \ForEach{$\mathrm{edge} \ (v,v')\in E^2$}{\label{algo:path_line11}
        \If{$(f^-(v),f^-(v'))\notin E^1$}{\label{algo:path_line12}
            EPath.append(Insert edge $(f^-(v),f^-(v'))$ in $G^1$)\;\label{algo:path_line13}
        }
    }
    \Return EPath \tcp{edit path that transforms $G^1$ to $G^2$}
\caption{{Edit Path Generation (EPGen)}}
\label{algo:path}
\end{algorithm}

\begin{algorithm}[!t]
\DontPrintSemicolon
    \KwIn{graphs $G^1=(V^1,E^1)$, $G^2=(V^2,E^2)$, coupling matrix $\bm{\pi}$, $k$}
    Construct bipartite graph $G = (V^1, V^2, V^1\times V^2, \bm{\pi})$ \; \label{algo:kbest_line1}
    $BestPath \leftarrow None$; \  \ 
    $S_1 \leftarrow \left\{ M \mid M \text{ is a node matching} \right\} $\;\label{algo:kbest_line2}
    $M_1(S_1) \leftarrow$ \textsf{BestMatch}($S_1$)\;\label{algo:kbest_line3}
    $M_2(S_1) \leftarrow$ \textsf{SecondBestMatch}($S_1$)\;\label{algo:kbest_line4}
    $LB(S_1) \leftarrow \textsf{GEDLowerBound}(S_1)$\;\label{algo:kbest_line5}
    \textsf{Update}($BestPath$, \textsf{EPGen}($M_1(S_1))$) \;\label{algo:kbest_line6}
    \textsf{Update}($BestPath$, \textsf{EPGen}($M_2(S_1))$) \;\label{algo:kbest_line7}
    \For{$t = 2$ to $k$}{\label{algo:kbest_line8}
        $id \leftarrow None$, $max\_weight \leftarrow -\infty$\;\label{algo:kbest_line9}
        \For{$S_i \in \{ S_1, \cdots, S_{t-1}\}$}{\label{algo:kbest_line10}
            \If{$LB(S_i) < len(BestPath)$}{\label{algo:kbest_line11}
                $weight \leftarrow \langle \bm{\pi}, M_2(S_i) \rangle$\;\label{algo:kbest_line12}
                \If{$weight > max\_weight$}{\label{algo:kbest_line13}
                    $(id, max\_weight) \leftarrow (i, weight)$\;\label{algo:kbest_line14}
                }
            }
        }
        $(S_{id}, S_t) \gets\textsf{SpaceSplit}(G, S_{id})$\;\label{algo:kbest_line15}
        $LB(S_{id}) \leftarrow \textsf{GEDLowerBound}(S_{id})$\;\label{algo:kbest_line16}
        \textsf{Update}($BestPath$, \textsf{EPGen}($M_2(S_{id}))$) \;\label{algo:kbest_line17}
        \textsf{Update}($BestPath$, \textsf{EPGen}($M_2(S_t))$) \;\label{algo:kbest_line18}
    }
    \KwRet{$BestPath$}\label{algo:kbest_line19}
    
    \SetKwFunction{FMyFunction}{SpaceSplit}
    \SetKwProg{Fn}{Function}{:}{}
    \Fn{\FMyFunction{$G, S$}}{\label{algo:kbest_line20}
        Choose an arbitrary edge $e \in M_1(S)$ but $e \notin M_2(S)$\;\label{algo:kbest_line21}
        $S^{\prime} = \left\{ M \in S \mid e \in M \right\}$, $S^{\prime\prime} = \left\{ M \in S \mid e \notin M \right\}$\;\label{algo:kbest_line22}
        $M_1(S^{\prime}) \leftarrow M_1(S)$, $M_2(S^{\prime}) \leftarrow$ \textsf{SecondBestMatch}($S^{\prime}$)\;\label{algo:kbest_line23}
        $M_1(S^{\prime\prime}) \leftarrow M_2(S)$, $M_2(S^{\prime\prime}) \leftarrow$ \textsf{SecondBestMatch}($S^{\prime\prime}$)\;\label{algo:kbest_line24}
        $LB(S^{\prime\prime}) \leftarrow LB(S)$\;\label{algo:kbest_line25} 
        \KwRet{$S^{\prime}, S^{\prime\prime}$}\label{algo:kbest_line26}
    }
\caption{{$k$-best Matching Framework}}
\label{algo:kbest}
\end{algorithm}

\section{$k$-Best Matching} \label{app:kbest}
{In this section, we provide the pseudocode of $k$-best matching framework that combines the label set based lower bound of GED and space splitting techniques. Algorithm~\ref{algo:kbest} obtains the top-$k$ best node matchings according to the length of their corresponding edit paths. 

We begin by presenting a formal description of how to generate an edit path from a node matching between $G^1$ and $G^2$.
The edit path generation procedure is shown in function EPGen($\cdot$) in Algorithm~\ref{algo:path}. With a given node matching $\mathbf{M}$ between $G^1$ and $G^2$, we first denote the node mapping as $f: V^1\to V^2$ and the corresponding inverse mapping as $f^-: V^2\to V^1$ (Line~\ref{algo:path_line2}), where for $u\in V^1$ and $v\in V^2$, $f(u)=v$ and $f^-(v)=u$ if and only if $\mathbf{M}_{u,v}=1$. The edit operations can be categorized into four types: node relabeling (Lines \ref{algo:path_line3}-\ref{algo:path_line5}), node insertion (Lines \ref{algo:path_line6}-\ref{algo:path_line7}), edge deletion (Lines \ref{algo:path_line8}-\ref{algo:path_line10}), and edge insertion (Lines \ref{algo:path_line11}-\ref{algo:path_line13}). For the two types of node edit operations, the algorithm checks whether node $u$ in $G^1$ has a corresponding node $f(u)$ in $G^2$, and (if $f(u)$ exists) whether $u$ and $f(u)$ have the same label.
For each edge $(u, u')$ in $G^1$, the algorithm checks whether the corresponding $(f(u), f(u'))$ in $G^2$ exist. If $(f(u), f(u'))$ does not exist, an edge deletion operation is needed. Similarly, for each edge $(v, v')$ in $G^2$, it checks whether the corresponding $(f^-(v), f^-(v'))$ in $G^1$ exist. If $(f^-(v), f^-(v'))$ does not exist, an edge insertion operation is needed. 

Then we introduce the label set based GED lower bound~\cite{chang2020speeding}, which can be calculated in linear time and prune out unnecessary node matchings in $k$-best matching framework. It is formulated as: 
\begin{equation}
\label{eq:gedlowerbound}
\operatorname{GEDLB}\left(G^1, G^2\right)=\left|L\left(V^1\right) \oplus L\left(V^2\right)\right|+\left|\left|E^1\right|-\left|E^2\right|\right|
\end{equation}
where $L\left(V^1\right)$ and $L\left(V^2\right)$ denote the multi-set of node labels of $G^1$ and $G^2$ respectively, and $\oplus$ denotes a multi-set function that $A \oplus B=$ $A \cup B-A \cap B$. 

Now, we explain the $k$-best matching framework in Algorithm~\ref{algo:kbest}. Line~\ref{algo:kbest_line1} construct a weighted complete bipartite graph between $V^1$ and $V^2$, where the weight of edge $(u,v)$ ($u\in V^1,\ v\in V^2$) is $\bm{\pi}_{u,v}$. We also define the weight of a node matching $M$ as the Frobenius product of $\bm{\pi}$ and $M$ (i.e., $\left\langle \bm{\pi}, M \right\rangle$). Lines \ref{algo:kbest_line2}-\ref{algo:kbest_line7} initialize the first solution subspace $S_1$, where $M_1(S_1)$ and $M_2(S_1)$ denote the best and second-best node matchings in $S_1$ respectively, which can be found in $O\left(n^3\right)$ time by classical algorithms~\cite{chegireddy1987algorithms}. The function GEDLowerBound($\cdot$) in Lines~\ref{algo:kbest_line5} calculates the label-set-based GED lower bound via Eq.~\eqref{eq:gedlowerbound}.
In Lines \ref{algo:kbest_line6}-\ref{algo:kbest_line7}, Update($\cdot$) means replacing the current best solution BestPath by the edit path output from EPGen($\cdot$) if BestPath is None or that path is shorter.

Lines~\ref{algo:kbest_line8}-\ref{algo:kbest_line26} show the iterative space-splitting method. Suppose that there are $(t-1)$ subspaces, and each subspace has its own best and second-best node matching $M_1(S_i)$ and $M_2(S_i)$, 
we choose the subspace where the second-best node matching has the maximum weight among all the subspaces for further splitting (Lines~\ref{algo:kbest_line9}-\ref{algo:kbest_line14}). 
If the GED lower bound of a subspace $S$ is greater or equal to the length of the current best path, it is unpromising, so there is no need to further split $S$ (Line~\ref{algo:kbest_line11}). Then, we split the chosen subspace $S_{id}$ and update the GED lower bound and best path of the new subspaces (Lines~\ref{algo:kbest_line15}-\ref{algo:kbest_line18}). 

Lines~\ref{algo:kbest_line20}-\ref{algo:kbest_line26} specify the SpaceSplit($\cdot$) function using Line~\ref{algo:kbest_line15},
which splits $S$ into two subspaces $S^{\prime}$ and $S^{\prime\prime}$, such that a node matching of $S$ is in $S^{\prime}$ if it contains $e$, and otherwise it is in $S^{\prime\prime}$ (Lines \ref{algo:kbest_line21}-\ref{algo:kbest_line22}). Note that $M_1(S)$ (resp.\ $M_2(S)$) becomes the best node matching in $S^{\prime}$ (resp.\ $S^{\prime\prime}$) after splitting (Lines \ref{algo:kbest_line23}-\ref{algo:kbest_line24}).
The entire node matching space is partitioned by repeatedly selecting a subspace to split in this manner. This process is repeated until $k$ subspaces are reached. Finally, $2k$ node matchings (2 from each subspace) are collected as the candidate set to find the shortest edit path. More details can be found in Section~4 in~\cite{piao2023gedgnn}. 
}

\begin{figure*}[t]
    \centering
    \includegraphics[width=\linewidth]{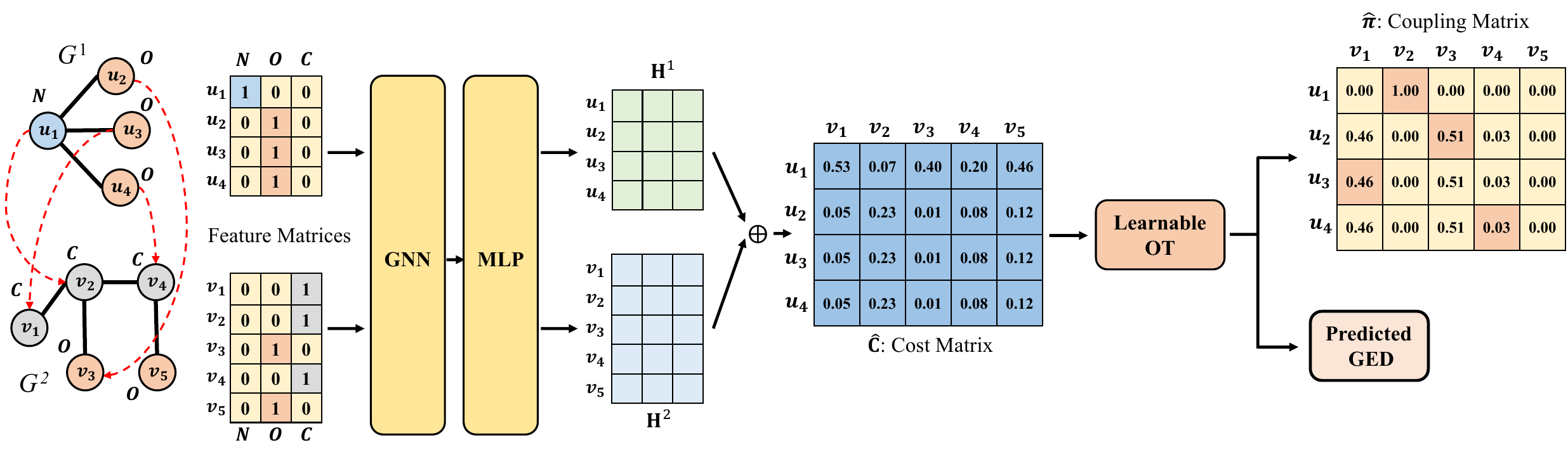}
    \caption{A Case Study for GEDIOT}
    \label{fig:case-study}
\end{figure*}
\setlength{\textfloatsep}{5pt}

\begin{figure*}[t]
    \centering
    \includegraphics[width=0.65\linewidth]{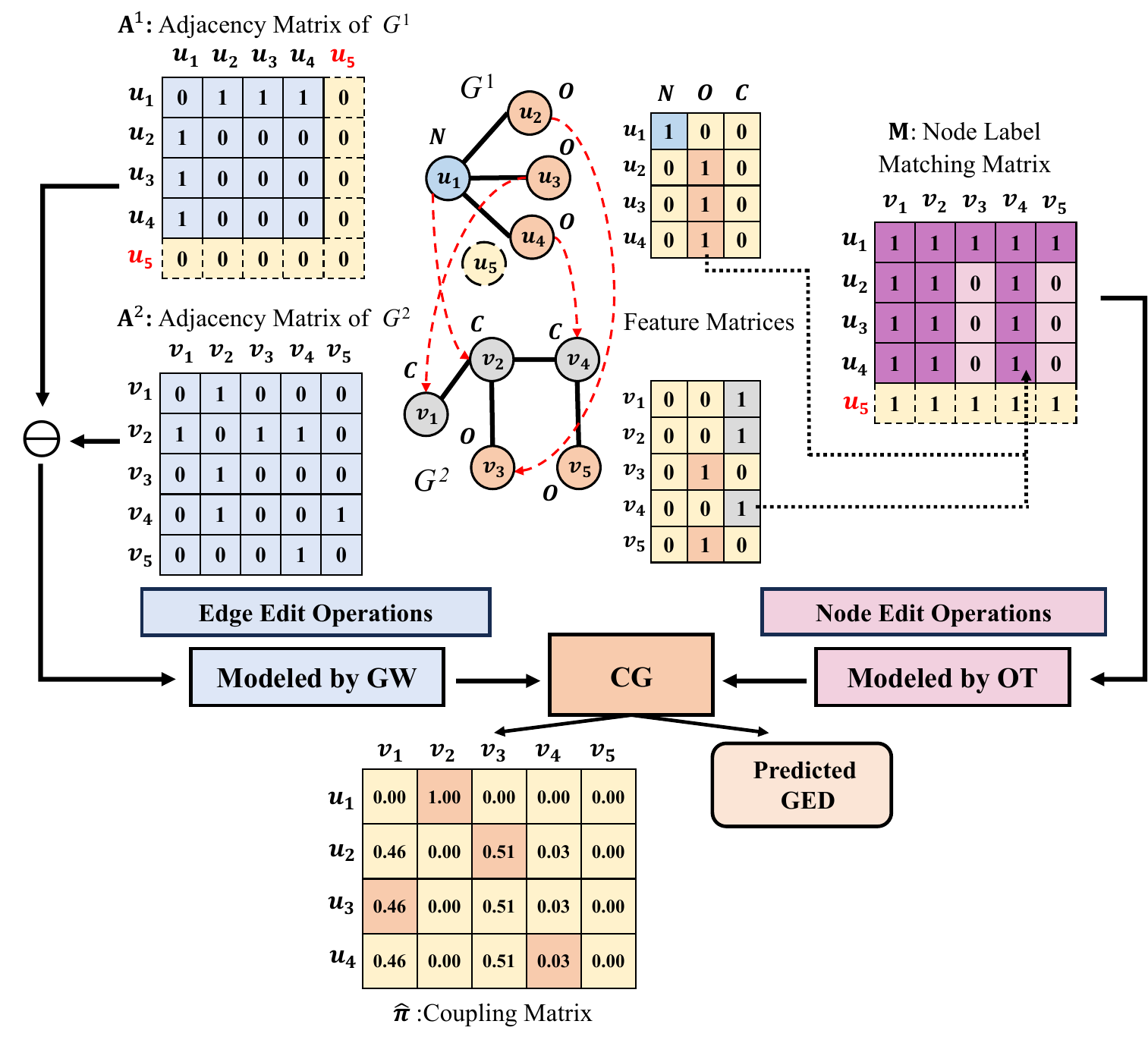}
    \vspace{-2mm}
    \caption{A Case Study for GEDGW}
    \label{fig:case-study-gw}
\end{figure*}
\setlength{\textfloatsep}{5pt}

\section{Case Study} \label{app:case-study}
{We conduct a case study of GED computation between a 4-node $G^1$ and a 5-node $G^2$ from AIDS by our proposed GEDIOT in Figure~\ref{fig:case-study}. The graphs are converted from the chemical compounds where nodes and edges represent the atoms and covalent bonds, respectively.   The color of a node indicates its label (i.e., the type of atoms). $G^1$ contains a Nitrogen (i.e., $N$) atom and three Oxygen (i.e., $O$) atoms, and $G^2$ contains three Carbon atoms (i.e., $C$)  and two Oxygen atoms. The ground-truth node-matching is shown by the dashed red lines (e.g., $u_1$ in $G^1$ corresponds to $v_2$ in $G^2$). The $i\textsuperscript{th}$ row of the feature matrix is the one-hot encoding of the label of node $u_i$ (or $v_i$).  We initialize the node embedding as the feature matrix and obtain the final embedding from GNN and MLP modules (i.e., node embedding component). Given two node embedding matrices obtained from $G^1$ and $G^2$, the pairwise scoring operation~$\oplus$ returns a cost matrix (discrepancy matrix) $\mathbf{C}$ where element $\mathbf{C}_{i,j}$ is the pairwise score computed from the embeddings of node~$u_i$ in $G^1$ and node~$v_j$ in $G^2$, which is illustrated in Figure~\ref{fig:difference}. We can see that in the cost matrix, the cost between node $u_1$ in $G^1$ and node $v_2$ in $G^2$ is much smaller than the cost between $u_1$ and other nodes in $G^2$, which is consistent with the fact that $u_1$ and $v_2$ are similar (e.g., their degrees are both $3$). 
Note that the numbers of nodes in the two graphs are different, and we extend the cost matrix with a dummy row filled with $0$ and redefine mass distributions $\widetilde{\bm\mu}$ and $\widetilde{\bm\nu}$ according to Section~\ref{sec:learnot}. 
Then, the cost matrix is fed into the learnable OT component to seek a global decision that minimizes the total cost of transporting masses from nodes of $G^1$ to nodes of $G^2$. The OT component outputs a coupling matrix that fits the ground-truth node-matching matrix for GED computation and GEP generation. 
Each row in the coupling matrix is a probability vector for $u_i$, representing the probability of matching $u_i$ to each $v_j$. Elements $\bm{\pi}_{ij}$ in the coupling matrix highlighted in the darker color in Figure~\ref{fig:case-study} correspond to the non-zero elements $\bm{\pi}^{*}_{ij}$ (i.e., $u_i$ in $G^1$ matches $v_j$ in $G^2$) in the ground-truth node-matching. }

{In Figure~\ref{fig:case-study-gw}, we also present a case study for GEDGW with the same graphs as Figure~\ref{fig:case-study}. We first construct a binary node label matching matrix $\mathbf{M}$, where each element $\mathbf{M}_{ij}$ is $0$ if and only if $u_i$ and $v_j$ have the same label. Noticing that $|V^1| < |V^2|$, we add a dummy node $u_5$ in $G^1$ so that the two graphs have the same number of nodes. The node label matching matrix $\mathbf{M}$ and the two extended adjacency matrices $\mathbf{A}^1$ and $\mathbf{A}^2$ are used to model node and edge operations in the optimization problem of GEDGW, which is then solved with the Conditional Gradient~(CG) method. Same as GEDIOT, GEDGW also outputs a predicted GED and a coupling matrix that fits the node-matching matrix. }

\section{Time Complexity Analysis}
\label{app:time2}
In this section, we provide a comprehensive analysis of the time complexity of our proposed methods. 
\subsection{Time Complexity of GEDIOT}
As the model training can be done offline, we consider the computation cost of the forward
propagation for GEDIOT. For ease of description, we assume that the number of GNN layers is $N$, the dimension of hidden layers of GNN and MLP is $d$, and the output dimension of NTN is $L$. The dimension $D$ of the input $\mathbf{h}$ of MLP is $(N+1)d$ since it is the concatenation of the output of each GNN layer and the initial node features. Let $n=n_2$, $m=\max(m_1,m_2)$ for the given graph pair $(G^1, G^2)$ and $M$ be the number of iterations of the Sinkhorn algorithm. 
Note that for two matrices $\mathbf{A} \in \mathbb{R}^{p \times q}$ and $\mathbf{B} \in \mathbb{R}^{q \times r}$, the time complexity of matrix multiplication $\mathbf{A}\mathbf{B}$ is $O(pqr)$, which we will use without mentioning again in the following analysis. 
We introduce the computation cost of all modules and sum them up to get the total cost.

In the node embedding component, in each layer of GNN, the aggregation of node features from every neighbor of GNN takes $O(md)$ time, and the linear transformation of the features of each node consumes a total of $O(nd^2)$ time. Therefore, GNN takes $O(N(md+nd^2))$ time to generate the node embedding $\mathbf{h}$. To obtain the final node embedding $\mathbf{H}$, the three-layer MLP module requires a total of $O(n((N+1)d)^2)$ time for the transformation. The computation cost of the node embedding component is therefore bounded by $O(N(md+nd^2)+n((N+1)d)^2)$.

In the graph discrepancy component, the node attentive mechanism costs $O(nd+d^2)$ time to generate the graph-level embedding $\mathbf{H}_G$. Then, it takes $O(Ld^2)$ time to compute the interaction vector $\mathbf{s}(G^1,G^2)$. The fully connected neural networks consume $O(Ld^2)$ to obtain the predicted score. The computation cost of the neural tensor network is bounded by $O(Ld+d^2)$. The computation cost of the graph discrepancy component is therefore bounded by $O(nd+Ld^2)$. 

As for the learnable OT component, it first computes the cost matrix with the two final node embeddings $\mathbf{H}^1$ and $\mathbf{H}^2$, which takes $O(nd^2+n^2d)$ time. Subsequently, the Sinkhorn layer runs Algorithm~\ref{algo:sinkhorn} for $M$ iterations, with each iteration requiring $O(n^2)$ time. This results in $O(Mn^2)$ cost in total for the Sinkhorn layer. The computation cost of the learnable OT component is therefore bounded by $O(nd^2+n^2d+Mn^2)$. 

Combining all the costs, the time complexity for the forward
propagation of GEDIOT is 
$$O\left(N(md+nd^2+nN^2d^2)+Ld^2+nd^2+n^2d+Mn^2\right).$$
Since $m=O(n^2)$ and $N$, $L$, $d$ and $M$ are fixed in GEDIOT, it can be simplified to $O(n^2)$, which is related to the size of the input graph.

For GEP generation, the two main steps of the $k$-best matching framework are finding the best node matching and edit path generation via this node matching, which are repeated $k$ times to find the best edit path. The first task takes $O(n^3)$ time to find the maximum node matching~\cite{piao2023gedgnn}. Recall that we can generate an edit path by traversing all vertices and edges with a given node matching, which takes $O(m+n)$ time. In total, the time complexity of GEP generation is therefore $O(k(m+n+n^3))=O(kn^3)$.

\subsection{Time Complexity of GEDGW and GEDHOT}
For GEDGW, we use the CG method~\cite{braun2022conditional,vincent2021semi} (see Algortihm~\ref{algo:cg} in  Appendix~\ref{app:cg} for details) to solve Eq.~\eqref{eq:ged-fgw}. The main time cost in each iteration of Algorithm~\ref{algo:cg} lies in the tensor product $\mathcal{L}(\mathbf{A}^1,\mathbf{A}^2)\otimes\bm{\pi}$ as shown in Eq.~\eqref{eq:cg}. Directly computing it takes $O(n^4)$ time, but according to Proposition~1 in~\cite{peyre2016gromov}, its computation can be accelerated to $O(n^3)$ time by decomposing $\mathcal{L}(\mathbf{A}^1,\mathbf{A}^2)\otimes\bm{\pi}$ into multiple matrix multiplications. Therefore, the total time complexity of Algorithm~\ref{algo:cg} is bounded by $O(Kn^3)$, where $K$ is the number of iterations.

For the process of GEDHOT, the two methods GEDIOT and GEDGW are called separately. Recall that the time complexity of the forward propagation of GEDIOT and GEDGW in Algorithm~\ref{algo:cg} are $O(n^2)$ and $O(Kn^3)$, respectively, where $K$ is the number of iterations of GW computation. Therefore, the time complexity of GEDHOT to approximate GED is bounded by $O(n^2+Kn^3)\approx O(Kn^3)$. Since the time complexity to generate GEP using the $k$-best matching framework is $O(kn^3)$, the total time of predicting both GED and GEP is bounded by $O((K+k)n^3)$.

\section{Experimental Setting}
\subsection{Datasets}\label{app:data}
We use three real-world graph datasets: AIDS, Linux, and IMDB.

\vspace{1mm}
\noindent\textbf{AIDS. }The AIDS dataset consists of chemical compounds from the Developmental Therapeutics Program at NCI/NIH. The chemical compounds are converted into graphs where nodes and edges represent the atoms and covalent bonds, respectively. Each node is labeled with one chemical symbol, e.g. C, N, O, etc., while the edges are unlabeled.

\vspace{1mm}
\noindent\textbf{Linux. }The Linux dataset consists of program dependence graphs generated from the Linux kernel, where each graph represents a function. The nodes and edges represent the statements and the dependency between the two statements, which are both unlabeled.

\vspace{1mm}
\noindent\textbf{IMDB. }The IMDB dataset consists of ego-networks of movie actors and actresses. Each node denotes a movie actor or actress, and each edge between two nodes denotes the two people acting in the same movie. The nodes and edges are unlabeled. 

\vspace{1mm}
\noindent\textbf{Data Preprocessing. }We use the A* algorithm~\cite{riesen2013novel} to generate the exact ground truth for the graph pairs from AIDS, Linux, and a part of IMDB where each graph has no more than 10 nodes. 
Since the GEP to transform $G^1$ to $G^2$ may not be unique, for training, we produce up to 10 ground-truth paths for each graph pair if they exist. Note that each ground-truth path $GEP^*_i$ corresponds to a binary node-matching matrix $\bm{\pi}^*_i$. We use all these $\bm{\pi}^*_i$ as the ground-truth node matching (i.e., $\bm{\pi}^*$ in the matching loss $\mathcal{L}_m$ in Eq.\eqref{eq:iot-ged}) during training to enrich the datasets and improve the model performance. 

For the rest of IMDB where the number of nodes is larger than 10, we generate 100 synthetic graphs for each graph $G$ with the ground-truth generation technique in~\cite{piao2023gedgnn,bai2021tagsim}. Concretely, each synthetic graph $G'$ is randomly generated with $\Delta$ edit operations on nodes/edges, where $\Delta$ is a random number within $(0,10]$ if the nodes of the original graph are larger than 20; otherwise it is within $(0,5]$. Here, $\Delta$ is regarded as an approximation of the ground truth $GED^{*}(G,G')$, and the $\Delta$ edit operations are regarded as the GEP.

\vspace{1mm}
\noindent\textbf{Training Set. }Following the experimental settings of~\cite{piao2023gedgnn}, we sample 60\% graphs in each dataset to form the training set. 
Each sample in the training set is a graph pair. 
For AIDS and Linux, the two graphs in each graph pair in the training set are directly sampled from the graph dataset, since the GED exact ground truth of the graph pairs can be obtained with the A* algorithm. 
For IMDB, we denote those sampled training graphs with at most (resp. larger than) 10 nodes as small (resp. large) graphs. The training set contains two parts: 1) graph pairs formed by two small graphs; 2) graph pairs formed by a large graph and its corresponding synthetic graph. 

\vspace{1mm}
\noindent\textbf{Validation and Test Sets. }We select 20\% graphs in each dataset to form the test set. We evaluate our methods on the scenarios following the setting of~\cite{piao2023gedgnn,bai2019simgnn}, which is to model the graph similarity search. The training set is regarded as the graph database and the graphs in the test set can be regarded as the queries. We sample 100 training graphs for each test graph to form the test set. 
The remaining 20\% graphs of each dataset are sampled to form the validation set in the same way as the test set.
\subsection{Detailed Setup of Our Methods}
\label{app:setup}
Our code is written in Python and all the models are implemented by PyTorch. We use PyTorch Geometric for GNN implementation.

\vspace{1mm}
\noindent\textbf{Parameter Settings. }For the node embedding component, the number of GIN layers is set to 3. The output dimension for each GIN layer is 128, 64, 32, respectively. The dimension of the final node embedding outputted by the MLP is set to $d=32$.  
For the learnable OT component, we use a $32\times 32$ learnable interaction matrix $\mathbf{W}$ in the cost matrix layer. 
Then, we perform the Sinkhorn algorithm with an initial $\varepsilon_0=0.05$ for 5 iterations in the learnable Sinkhorn layer. For the graph discrepancy component, the output dimension of NTN is set to $L=16$. The output dimensions of the four succeeding dense layers are set to 16, 8, 4, 1. The hyper-parameter $\lambda$ in the loss function is set to 0.8. During training, 
we set the batch size to 128 and use the Adam optimizer with initial learning rate and weight decay set to 0.001 and $5\times 10^{-4}$, respectively. For GEP generation, we set $k$ to 100 in the $k$-best matching framework.

\section{Additional Experiments}\label{app:exp}

\begin{figure}[t]
    \centering
    \subfigure[IMDB - MAE]{
    \includegraphics[width=0.485\linewidth]{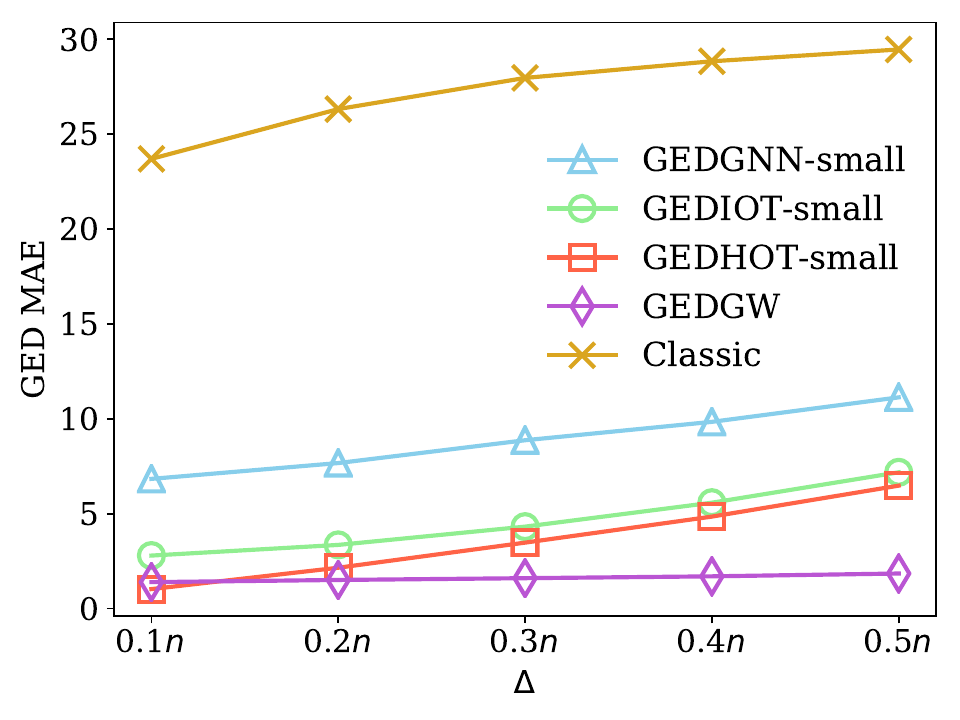}}
    \vspace{-2mm}
    \subfigure[IMDB - Accuracy]{
    \includegraphics[width=0.485\linewidth]{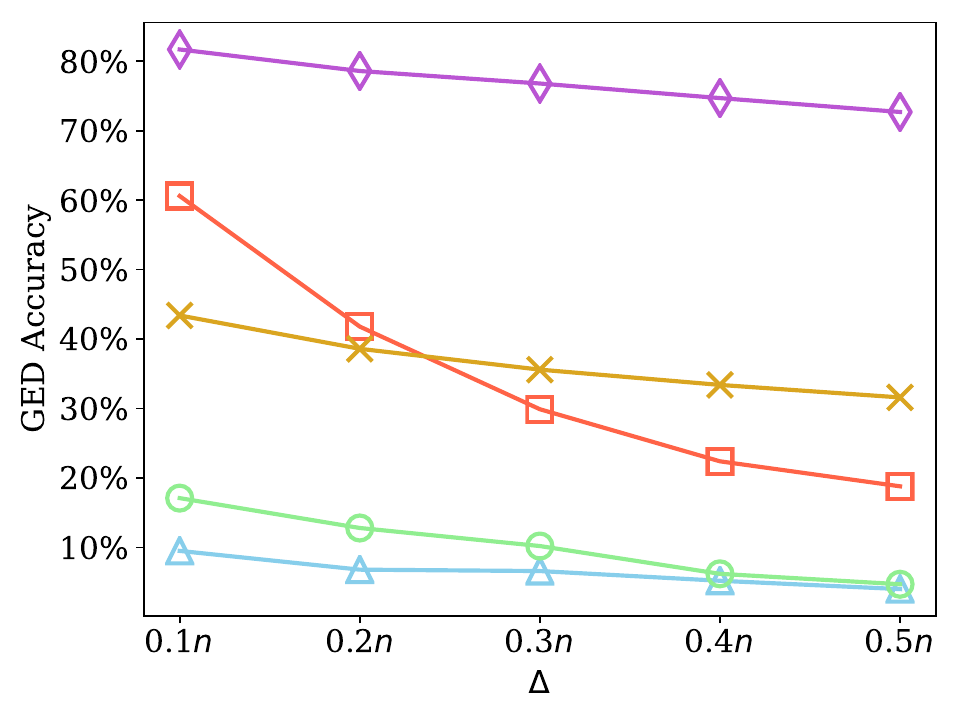}}
    \vspace{-2mm}
    \caption{Further evaluation of generalizability for large unseen graphs on IMDB with Increasing GED}
    \label{fig:generalization_small}
\end{figure}
\setlength{\textfloatsep}{5pt}

\subsection{Generalizability}\label{app:gen_exp}
We further discuss how the generalizability of GEDGNN-small, GEDIOT-small, and GEDHOT-small is impacted when synthesizing large test graph pairs (more than 10 nodes) with larger GEDs (i.e., the discrepancy between two graphs becomes more pronounced). 
Concretely, for each original large graph with $n$ nodes ($n>10$) in the test set of IMDB, we regenerate 100 synthetic graphs with edit operations $\Delta=\lceil r\cdot n\rceil$, where $r$ is in the range of $(0,1)$ and $\Delta$ can be viewed as an approximation of the ground-truth GED as described in Section~\ref{sec:dataset}. We vary $r$ from 10\% to 50\% and Figure~\ref{fig:generalization_small} depicts the influence on MAE and accuracy. 

We can see that both non-learning methods Classic and GEDGW are quite stable as $\Delta$ varies since they do not need ground-truths. The MAE of Classic is several times worse than that of the other four methods, but as $\Delta$ increases, Classic achieves a better accuracy than the learning-based methods. This implies that Classic can recover the exact GED in several instances but struggles in others.  Our proposed GEDGW significantly outperforms all the others including the learning-based methods in terms of MAE and accuracy, showing the great robustness of GEDGW compared with other methods.

Among the three learning-based methods trained on the small training set (graphs with nodes no more than 10), GEDHOT-small achieves the best performance with the help of GEDGW, and GEDIOT-small is consistently better than GEDGNN-small. This indicates that our proposed neural network model exhibits superior generalizability compared to the existing learning-based methods.

\begin{figure}[t]
    \centering
    \subfigure[Adoption Rate in GED Computation]{
    \includegraphics[width=0.485\linewidth]{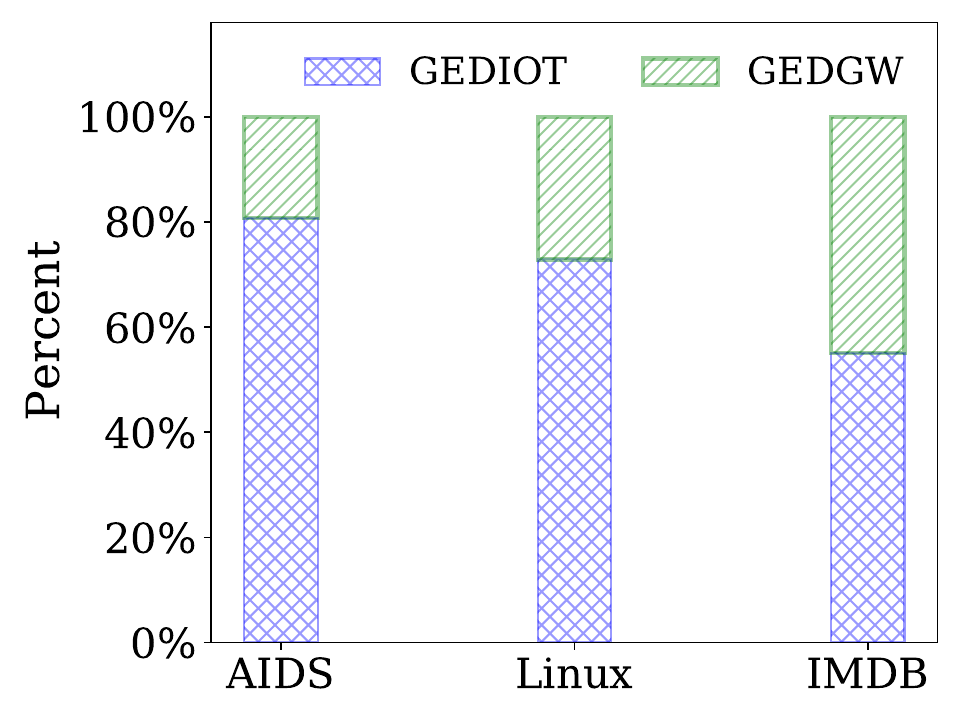}}
    \subfigure[Adoption Rate in GEP Generation]{
    \includegraphics[width=0.485\linewidth]{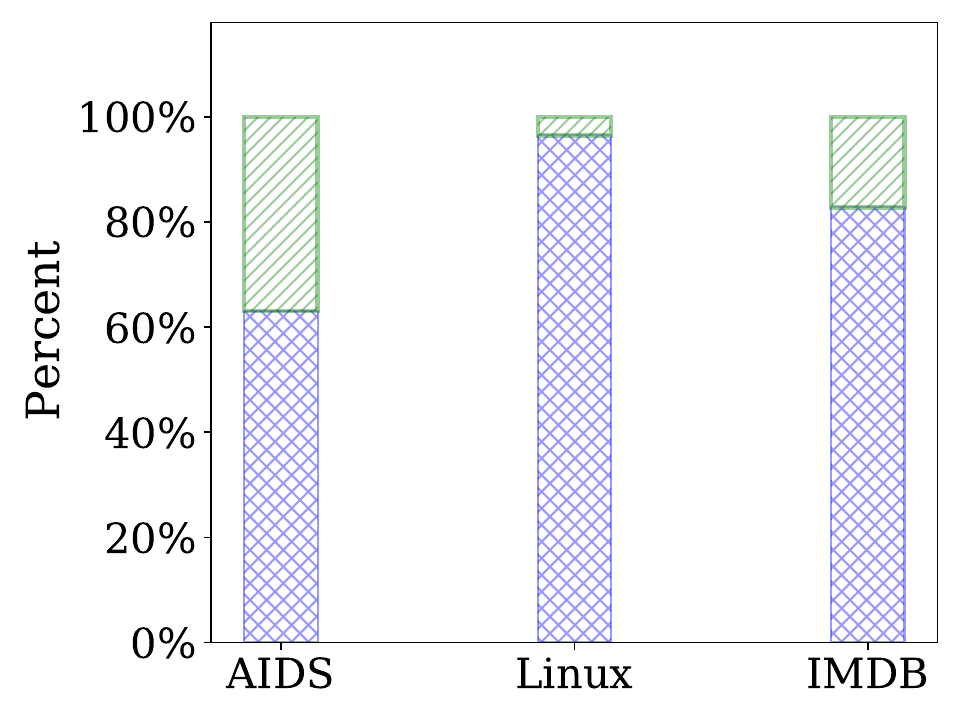}}
    
    \caption{Adoption Rate of GEDIOT/GEDGW for GEDHOT}
    \label{fig:contribution}
\end{figure}
\setlength{\textfloatsep}{5pt}

\begin{figure}[t]
    \centering
    \includegraphics[width=0.8\linewidth]{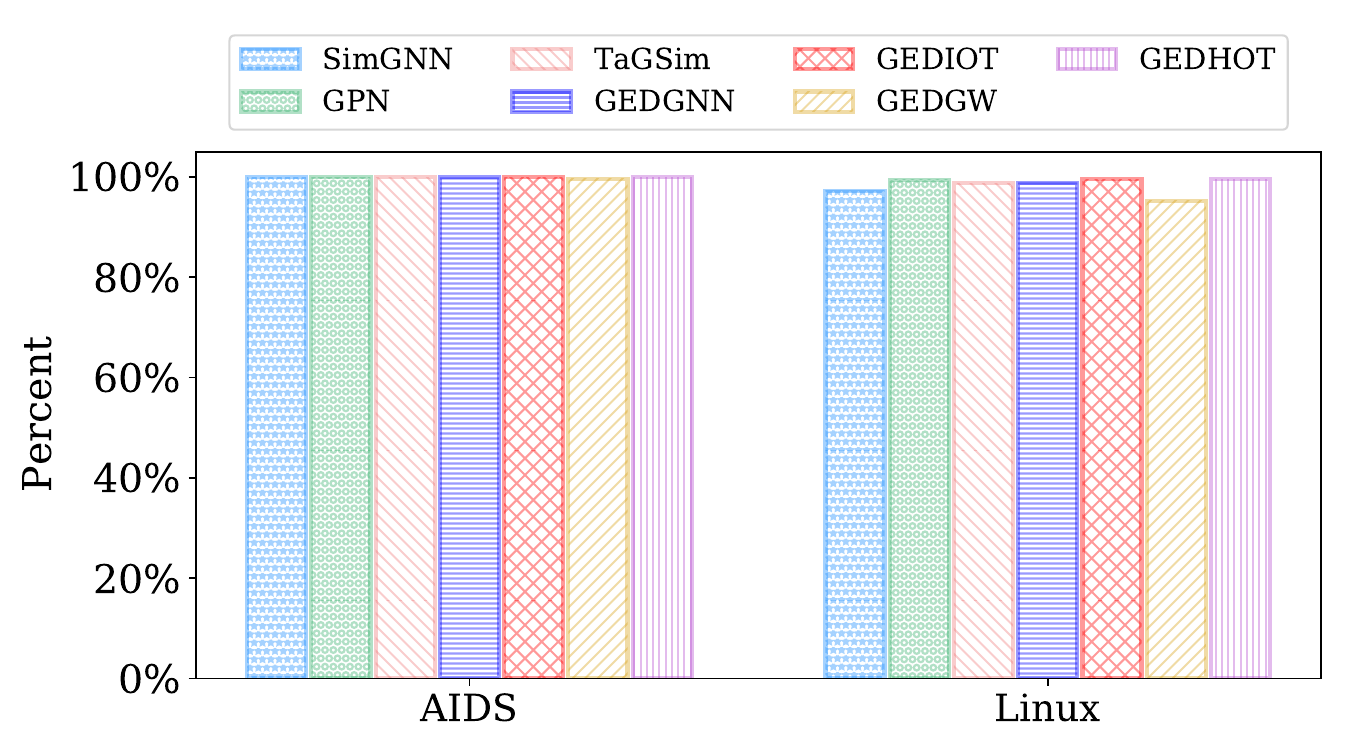} 
    \caption{Triangle Property Preservation of the Predicted GEDs}
    \label{fig:triangle}
\end{figure}
\setlength{\textfloatsep}{5pt} 

\subsection{More Evaluation on Proposed Methods}
\label{app:adoption-ratio}
\vspace{1mm}
\noindent {\textbf{Adoption Ratio of GEDIOT and GEDHOT.} The ensemble method GEDHOT adopts the smaller GED of GEDIOT and GEDGW, and the shorter GED path of GEDIOT and GEDGW. GEDHOT uses the values and paths from GEDIOT by default unless GEDGW outputs better results. We evaluate the ratio of the cases in which GEDGW outperforms GEDIOT and vice versa. As shown in Figure~\ref{fig:contribution}, on AIDS, for GED computation, most graph pairs ($80.8\%$) use the results from GEDIOT instead of GEDGW. 
For GEP generation, $63.1\%$ of the graph pairs use the results from GEDIOT, and $36.9\%$ of the graph pairs use the results from GEDGW. 
The results show the need to apply GEDGW (as a non-learning method) to offset the potential weakness of GEDIOT (and learning-based methods in general) for GED computation and GEP generation, particularly on pairs of larger graphs in IMDB that are difficult to train well. }

\vspace{1mm}
\noindent {\textbf{Triangle Property Preservation of the Predicted GEDs.} To evaluate whether learning-based methods preserve the triangle inequality in the GED predictions, We randomly sample triples of graphs of the form $(G^1, G^2, G^3)$ and report the fraction of violations for various learning-based methods (including ours). Figure~\ref{fig:triangle} shows that on AIDS and Linux, our methods preserve the GED triangle inequality for more than $95\%$ cases. Particularly, on AIDS, GEDIOT and GEDHOT preserve the property for $99.9\%$ cases. }

\begin{figure}[t]
    \centering
    \subfigure[AIDS-total ($n=20$)]{
    \includegraphics[width=0.485\linewidth]{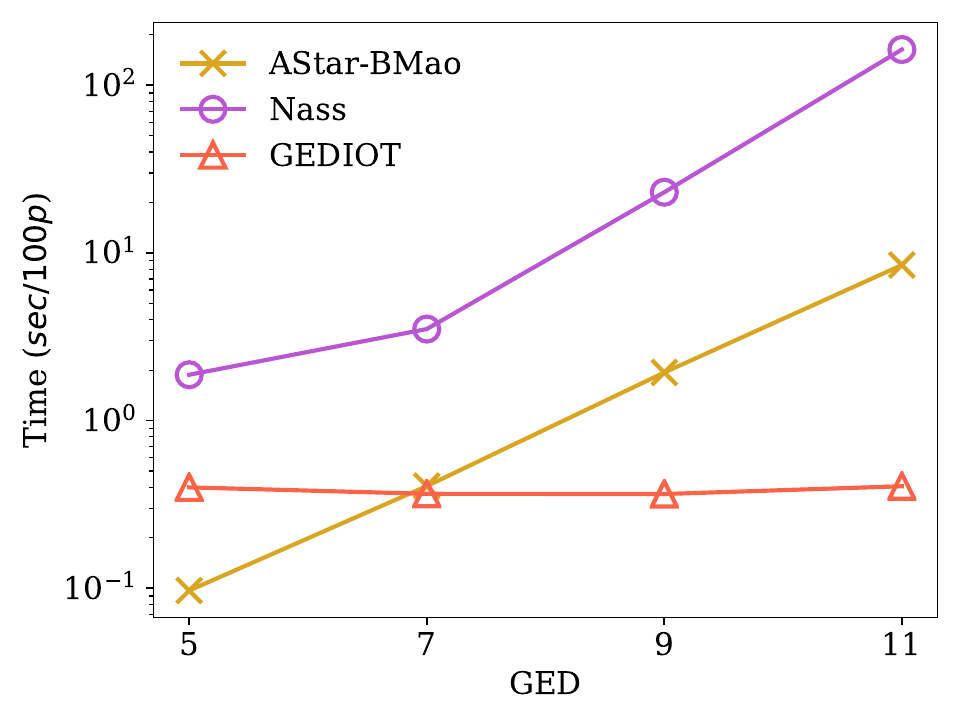}}
    \subfigure[AIDS-total ($n=30$)]{
    \includegraphics[width=0.485\linewidth]{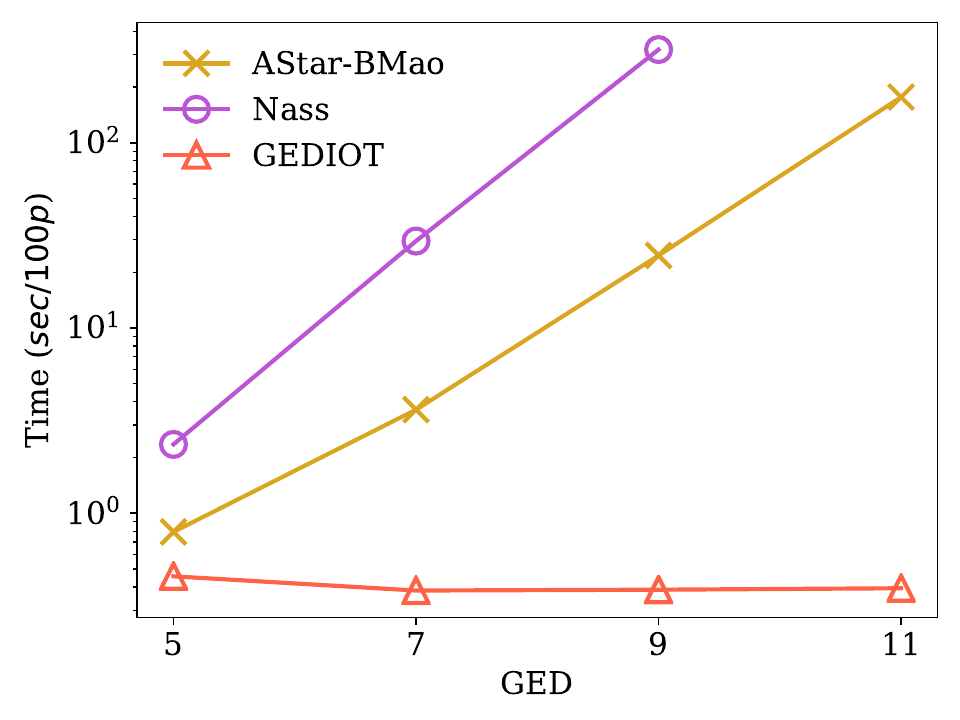}}

    \subfigure[AIDS-total ($n=40$)]{
    \includegraphics[width=0.485\linewidth]{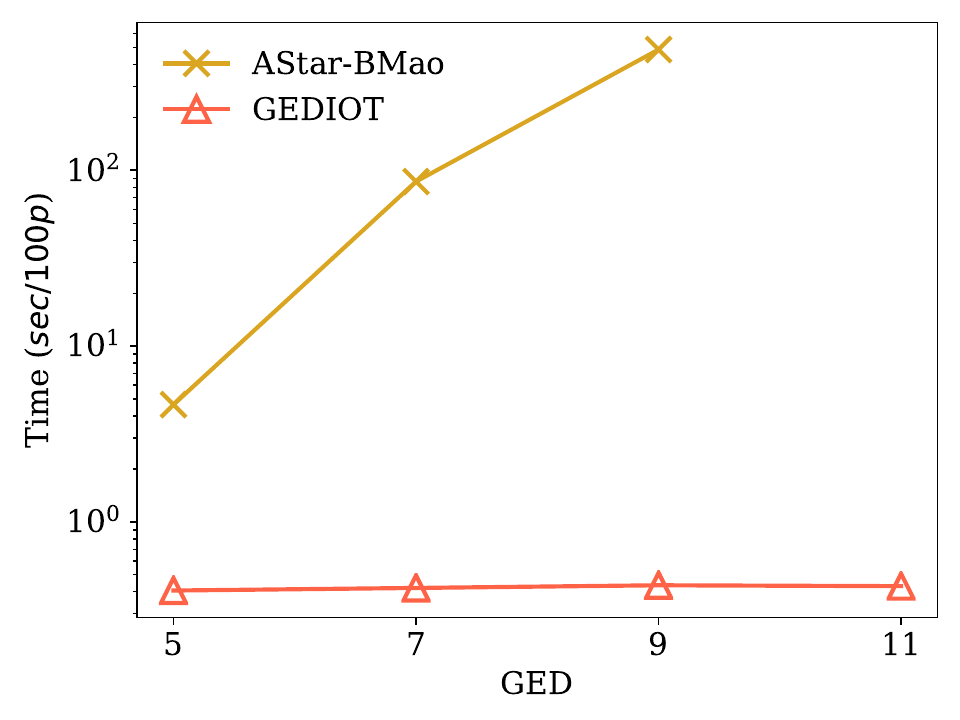}}
    \subfigure[IMDB ($n=20$)]{
    \includegraphics[width=0.485\linewidth]{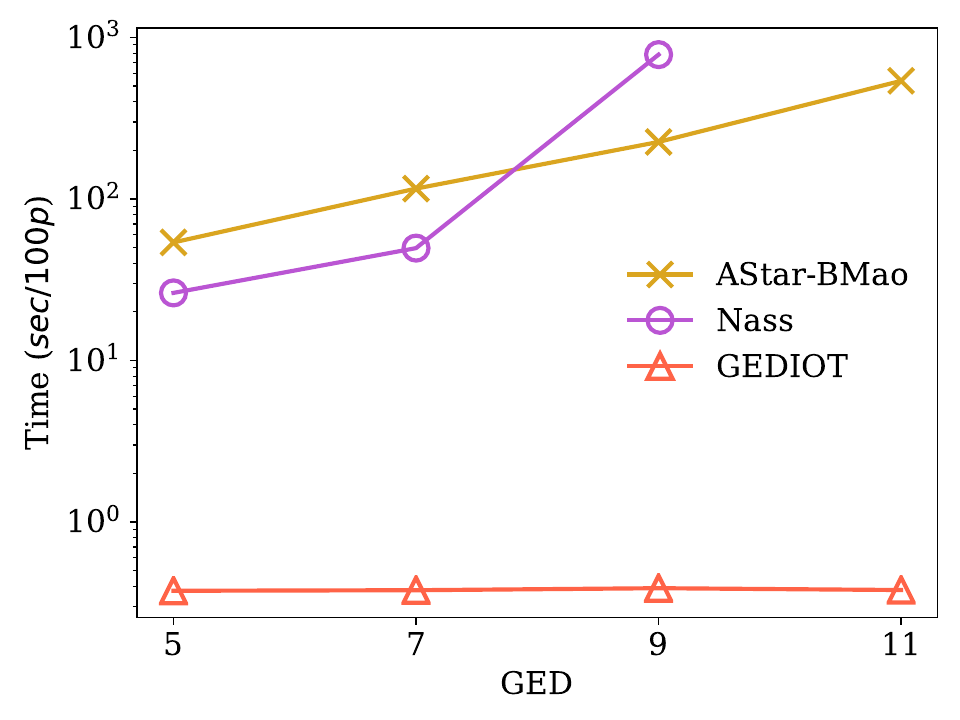}}
    \caption{Efficiency Comparison with Exact Algorithms}
    \label{fig:exact}
\end{figure}
\setlength{\textfloatsep}{5pt}

    

\subsection{Comparison with Exact Methods}\label{app:exact-methods}
{We notice that the state-of-the-art methods Nass~\cite{kim2021boosting} and AStar-BMao~\cite{chang2022accelerating} for graph similarity search (introduced in Section~\ref{sec:related}) can be applied for exact GED computation by setting the threshold in search task to infinity. 
As indicated in~\cite{piao2023gedgnn}, exact methods suffer from huge computation costs when the graph size increases. 
We first compare our method GEDIOT with the exact ones on two large real-world datasets: AIDS-total\footnote{\url{https://cactus.nci.nih.gov/download/nci/AID2DA99.sdz}} and IMDB. Differing from AIDS introduced in Table~\ref{Table:graph} and Section~\ref{sec:dataset}, here we use the large dataset AIDS-total that contains 42,689 graphs, with 25.60 nodes per graph on average. IMDB is the same as the dataset in Table~\ref{Table:graph}, consisting of 1500 unlabeled graphs, with 13 nodes per graph on average. 

We remove the edge labels in the large AIDS-total dataset following the convention of learning-based methods~\cite{piao2023gedgnn, bai2021tagsim}, and compare Nass and AStar-BMao with our learning method GEDIOT on AIDS-total and IMDB. For each graph dataset, we select subsets of graphs from the dataset with $n$ nodes, where all the graphs in each subset have $n$ nodes. We use four groups of graphs with $n = 20, 30, 40$ from AIDS-total and only one group of graphs with $n = 20$ from IMDB 
since on larger graphs of IMDB, AStar-BMao cannot output the results within 24 hours and Nass returns a bus error, likely caused by too deep recursion. 
We sample 60\% graphs to train GEDIOT and 40\% for efficiency evaluation and use the ground-truth generation technique as described in Data Preprocessing in Appendix~\ref{app:data} to generate graph pairs. 
Concretely, we fix $\Delta=5,7,9,11$ to generate four groups of graph pairs for each subset, where each group has 100 graph pairs.

In Figure~\ref{fig:exact}, we report the average running time of every 100 pairs for each group using the three methods. The computational time of the two exact methods Nass and AStar-BMao is quite sensitive w.r.t. the graph size and the GED value. We do not report the results of some groups of AStar-BMao and Nass since they fail to return the GED value due to bus error. Our method GEDIOT shows a consistent advantage compared to the two exact algorithms, particularly for larger graphs and GEDs. In particular, on AIDS-total ($n=40$) and IMDB ($n=20$), GEDIOT outperforms the state-of-the-art exact algorithm in time efficiency by orders of magnitude, as the time complexity of GEDIOT is only $O(n^2)$, whereas AStar-BMao and Nass are still exponential-time algorithms. 

Moreover, notice that the scalability of the exact methods on IMDB is worse than that on AIDS. The reason could be that the graphs in IMDB are denser and have no labels leading to a huge search space when using exact algorithms.}

\setcounter{figure}{17}
\begin{figure*}[t]
    \centering
    \subfigure[Number of Iterations - Time]{
    \includegraphics[width=0.32\linewidth]{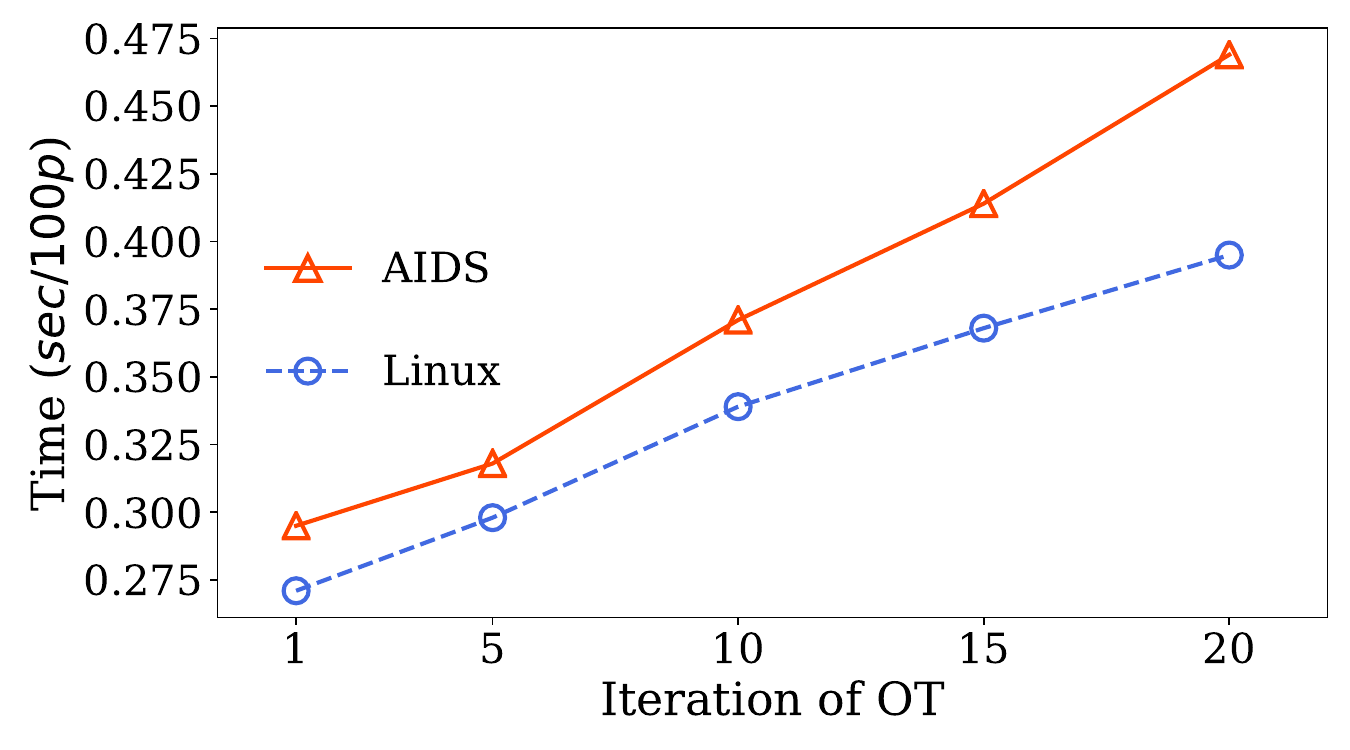}}
    \subfigure[Number of Iterations - MAE]{
    \includegraphics[width=0.32\linewidth]{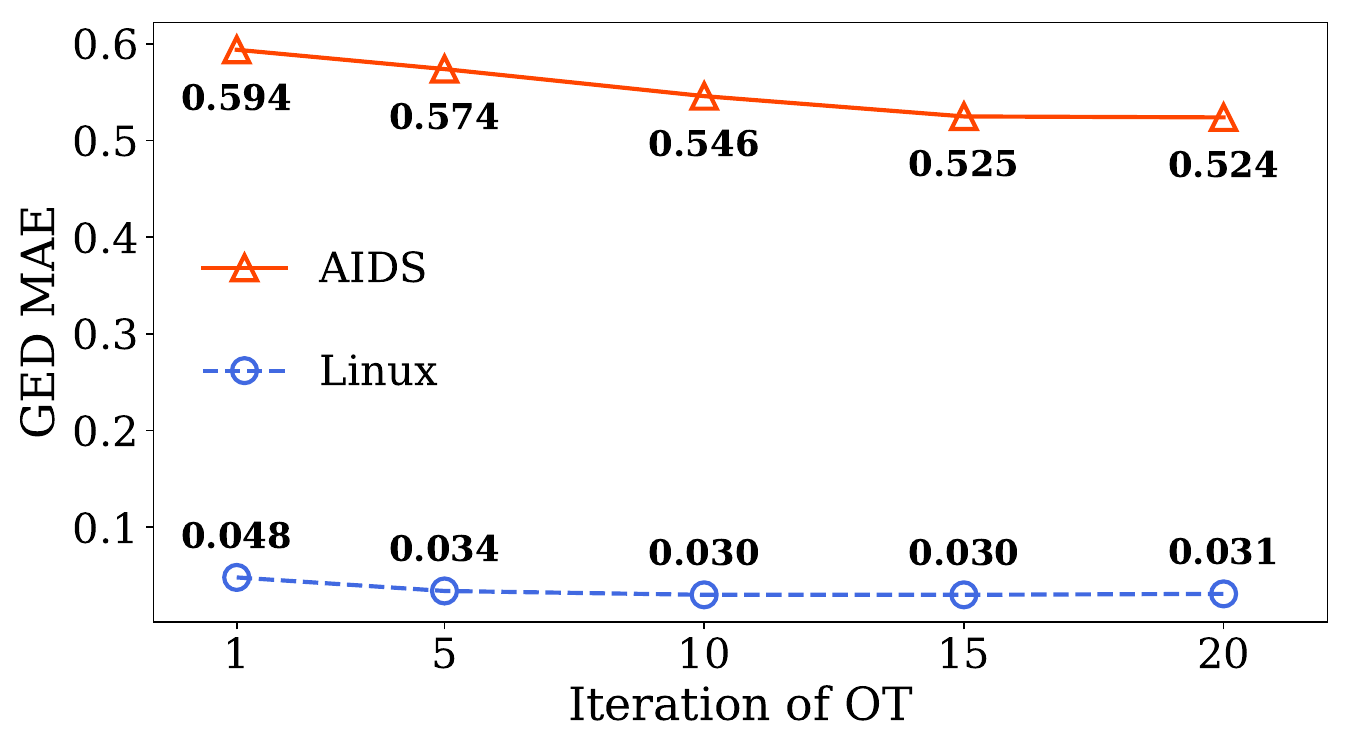}}
    \subfigure[Number of Iterations - Accuracy]{
    \includegraphics[width=0.32\linewidth]{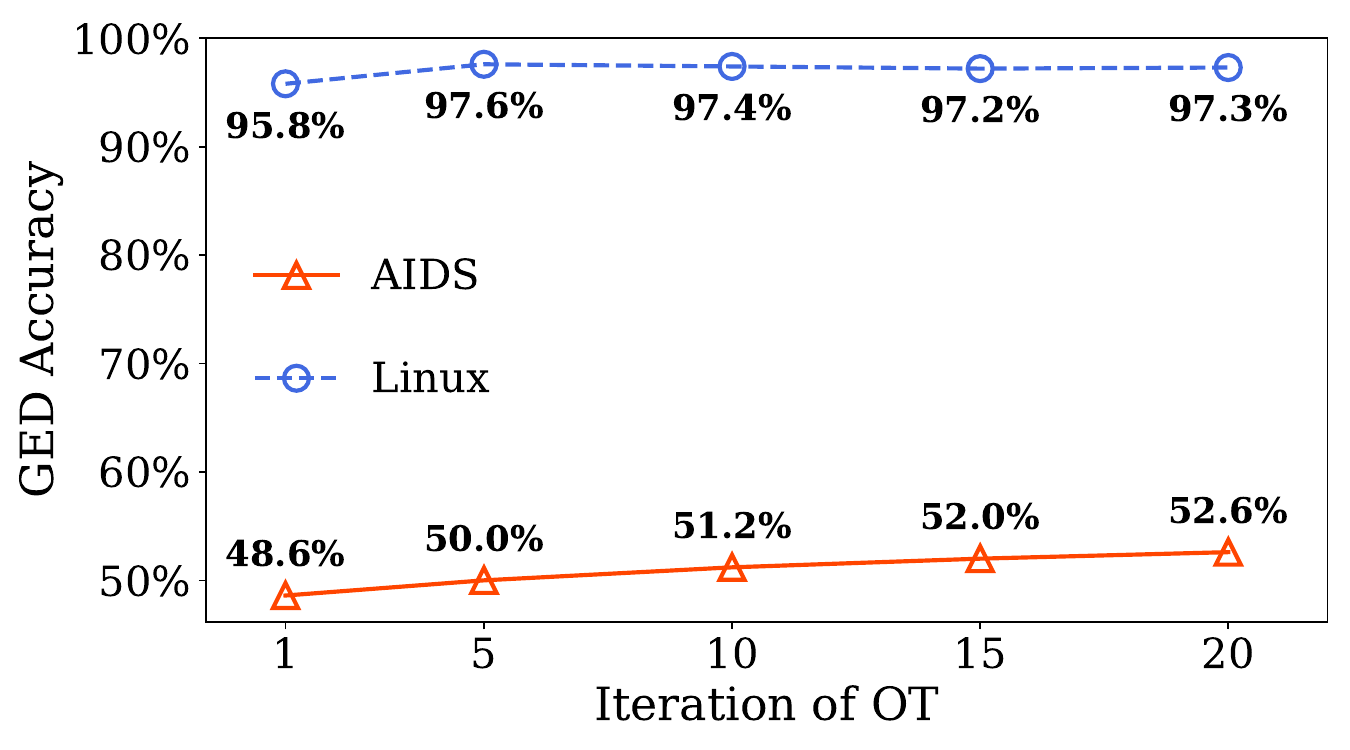}}
    \vspace{-1mm}
    \caption{Effect of Various Numbers of Iterations for the Sinkhorn Algorithm on GEDIOT}
    \label{fig:iternum}
    \vspace{-2mm}
\end{figure*}
\setlength{\textfloatsep}{5pt}

\setcounter{figure}{15}
\begin{figure}[t]
    \centering
    \subfigure[Graph Size - Relative Error]{
    \includegraphics[width=0.485\linewidth]{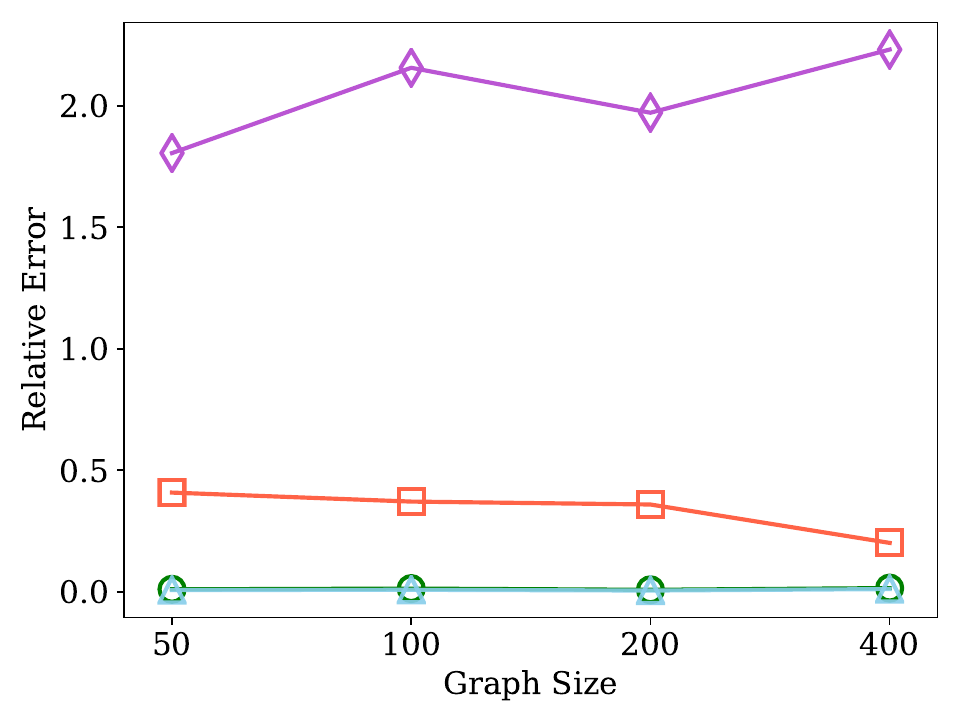}\label{fig:power-lawsubfig1}}
    \subfigure[Graph Size - Time (sec/100pair)]{
    \includegraphics[width=0.485\linewidth]{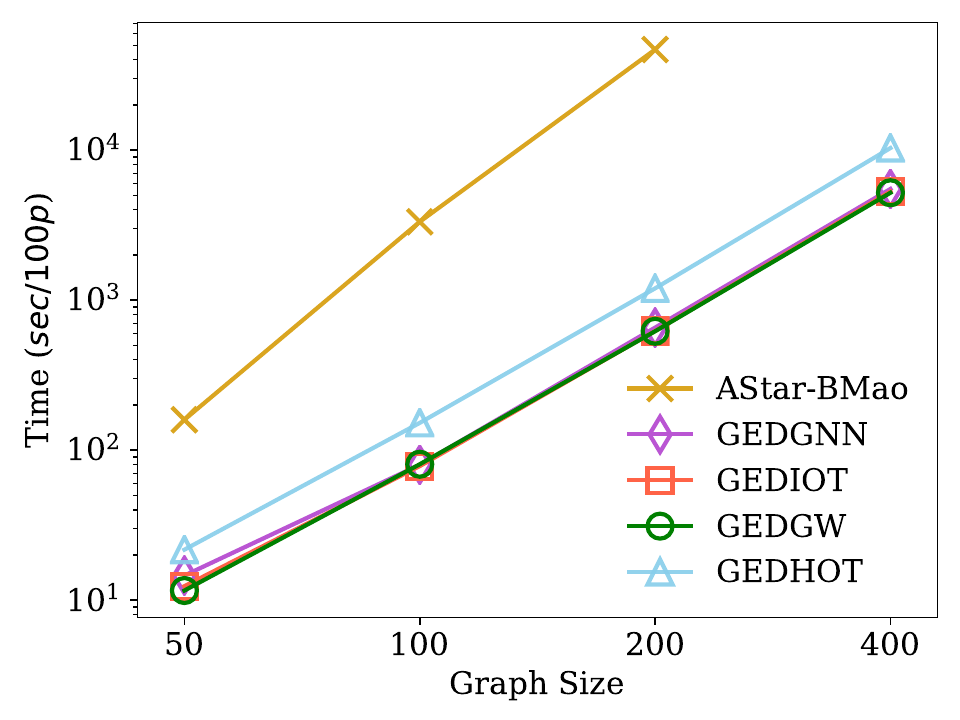}\label{fig:power-lawsubfig2}}
    \caption{Accuracy and Efficiency on Power-law Graphs}
    \label{fig:power-law}
\end{figure}
\setlength{\textfloatsep}{5pt}

\subsection{Performance Evaluation on Large Power-Law Graphs}\label{app:power-law}
{
Following the experiments on large power-law graphs in Section~5.4 of GEDGNN~\cite{piao2023gedgnn}, we also generate four groups of large synthetic power-law graphs with various graph sizes $n$. The graph sizes $n$ of the four groups are set as $50,\ 100,\ 200$, and $400$, respectively. For each $n$, we generate 500 pairs for training and testing, respectively. The results are shown in Figure~\ref{fig:power-law}. 


In Figure~\ref{fig:power-lawsubfig1}, we report the GED relative errors (i.e., $(\widehat{GED}-GED^*) / GED^*$) of the approximate methods with $k$-best matching framework: GEDGNN, GEDIOT, GEDGW, and GEDHOT. Note that the relative errors of all the methods are quite stable as the graph size $n$ varies. Specifically, the relative error of our GEDGW and GEDHOT is nearly $0$. In stark contrast, that of GEDGNN hovers around a relatively high value of almost $2$. This pronounced discrepancy showcases the superiority of our proposed methods in larger power-law graphs.

Figure~\ref{fig:power-lawsubfig2} depicts the average running time of 100 graph pairs for the exact algorithm AStar-BMao and the above approximate methods. We do not report the result of Nass as it cannot output the results on the four groups due to bus error. It shows that the average running time of approximate methods is consistently orders of magnitude faster than AStar-BMao. The result of AStar-BMao on 400-node graphs is not reported since it cannot generate results within 24 hours. The time taken by GEDIOT, GEDGW and GEDGNN is comparable, whereas the time consumed by the ensemble method GEDHOT is about the summation of the time consumed by GEDIOT and GEDGW.}

\begin{figure}[t]
    \centering
    \subfigure[$\varepsilon_0$ - MAE]{
    \includegraphics[width=0.485\linewidth]{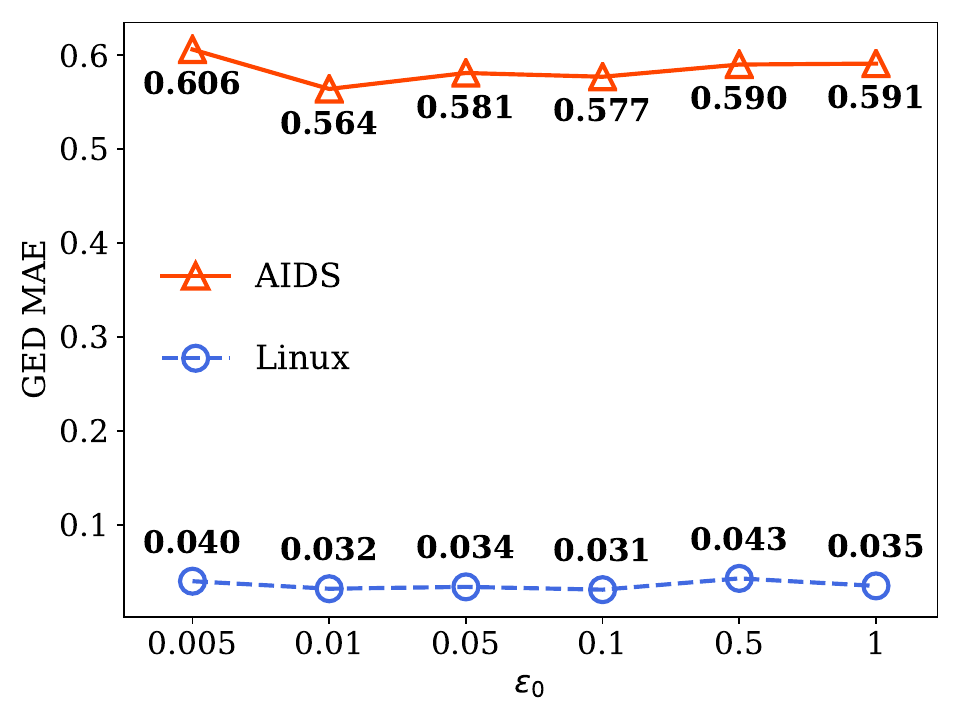}}
    \subfigure[$\varepsilon_0$ - Accuracy]{
    \includegraphics[width=0.485\linewidth]{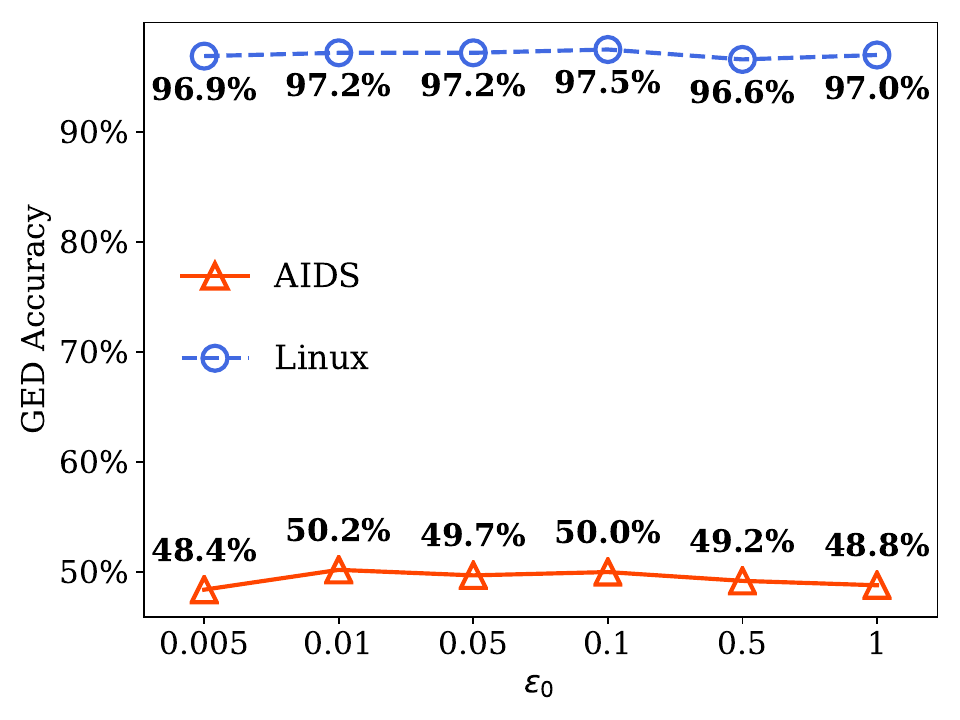}}
    \vspace{-4mm}

    \caption{Varying $\varepsilon_0$ in the Sinkhorn Algorithm}
    \vspace{-6mm}
    \label{fig:eps-vary}
\end{figure}
\setlength{\textfloatsep}{5pt}

\setcounter{figure}{18}
\begin{figure}[t]
    \centering
    \subfigure[$\lambda$ - MAE]{
    \includegraphics[width=0.485\linewidth]{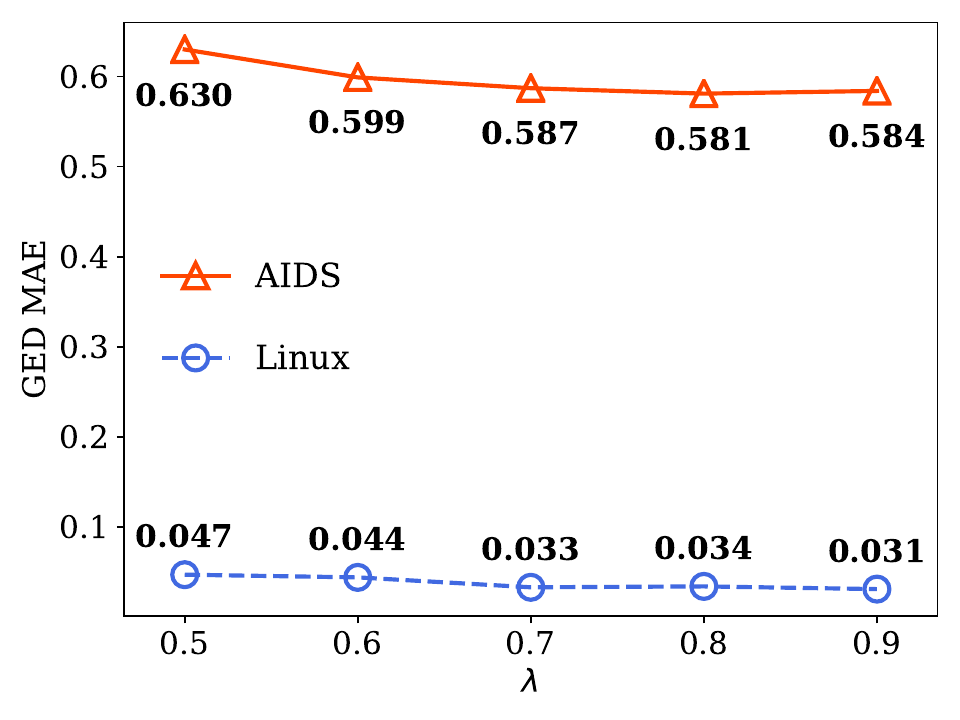}}
    \subfigure[$\lambda$ - Accuracy]{
    \includegraphics[width=0.485\linewidth]{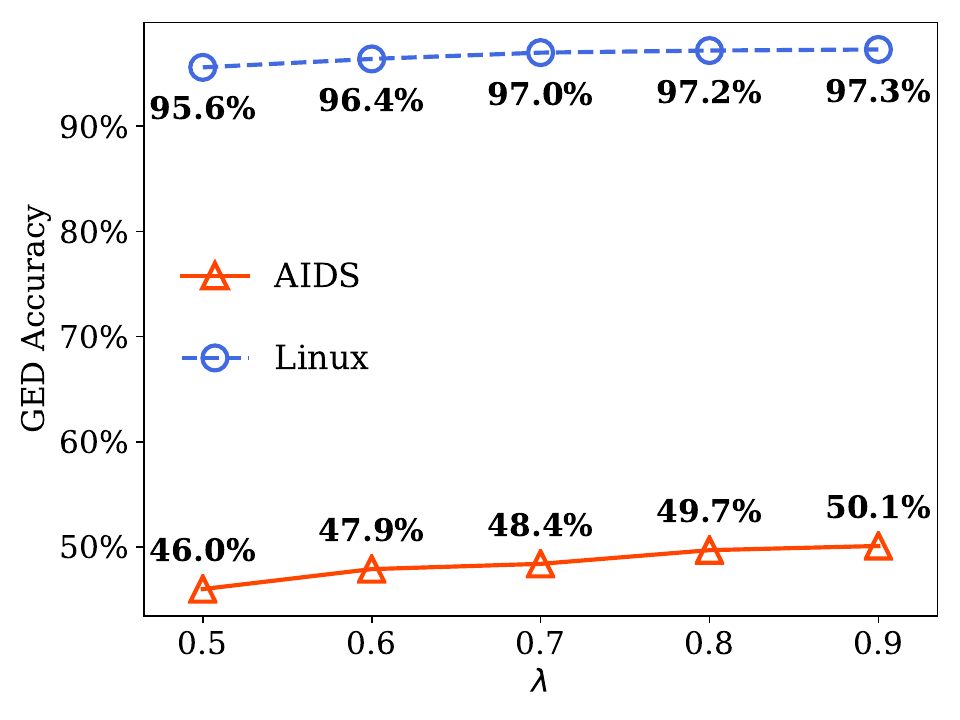}}
    \vspace{-3mm}
    \caption{Varying $\lambda$ in the Loss Function}

    \label{fig:lambda-vary}
\end{figure}
\setlength{\textfloatsep}{5pt}

\setcounter{figure}{19}

\begin{figure*}[t]
    \centering
    \subfigure[Training Size - Time]{
    \includegraphics[width=0.32\linewidth]{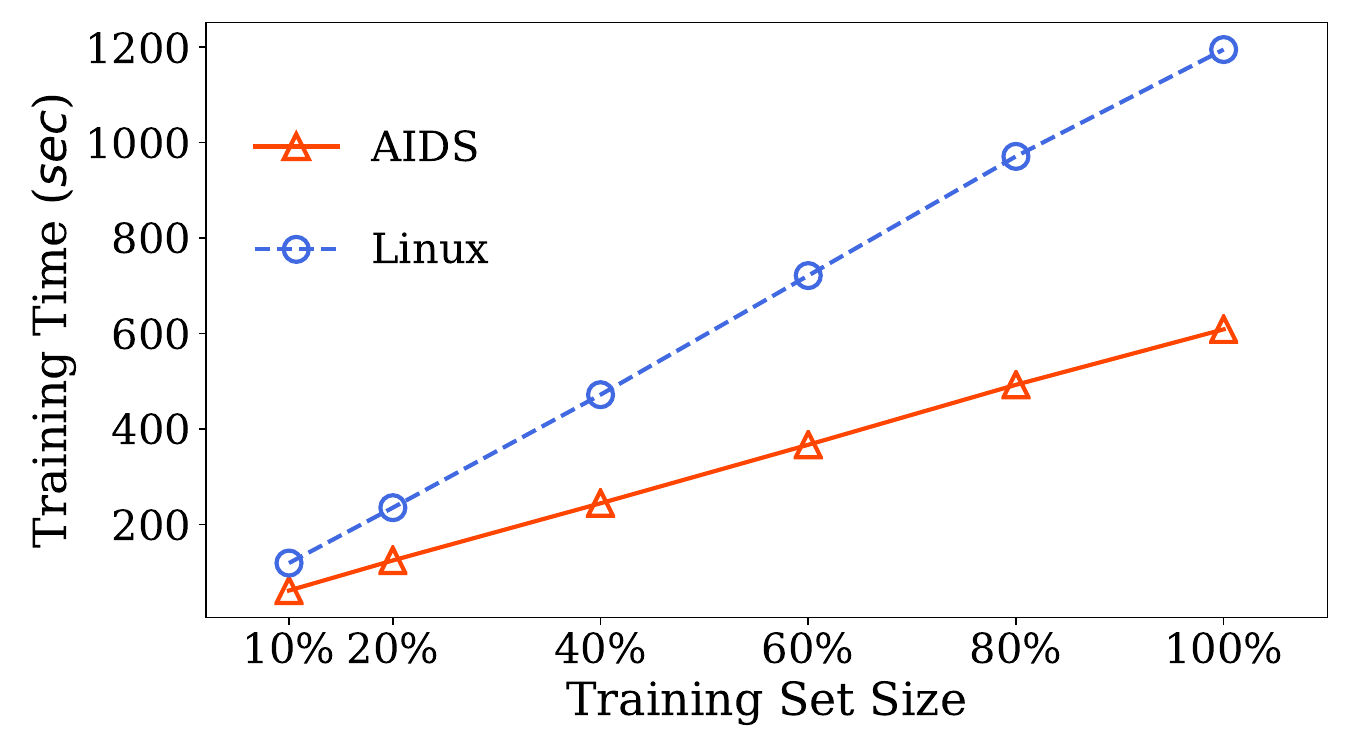}}
    \subfigure[Training Size - MAE]{
    \includegraphics[width=0.32\linewidth]{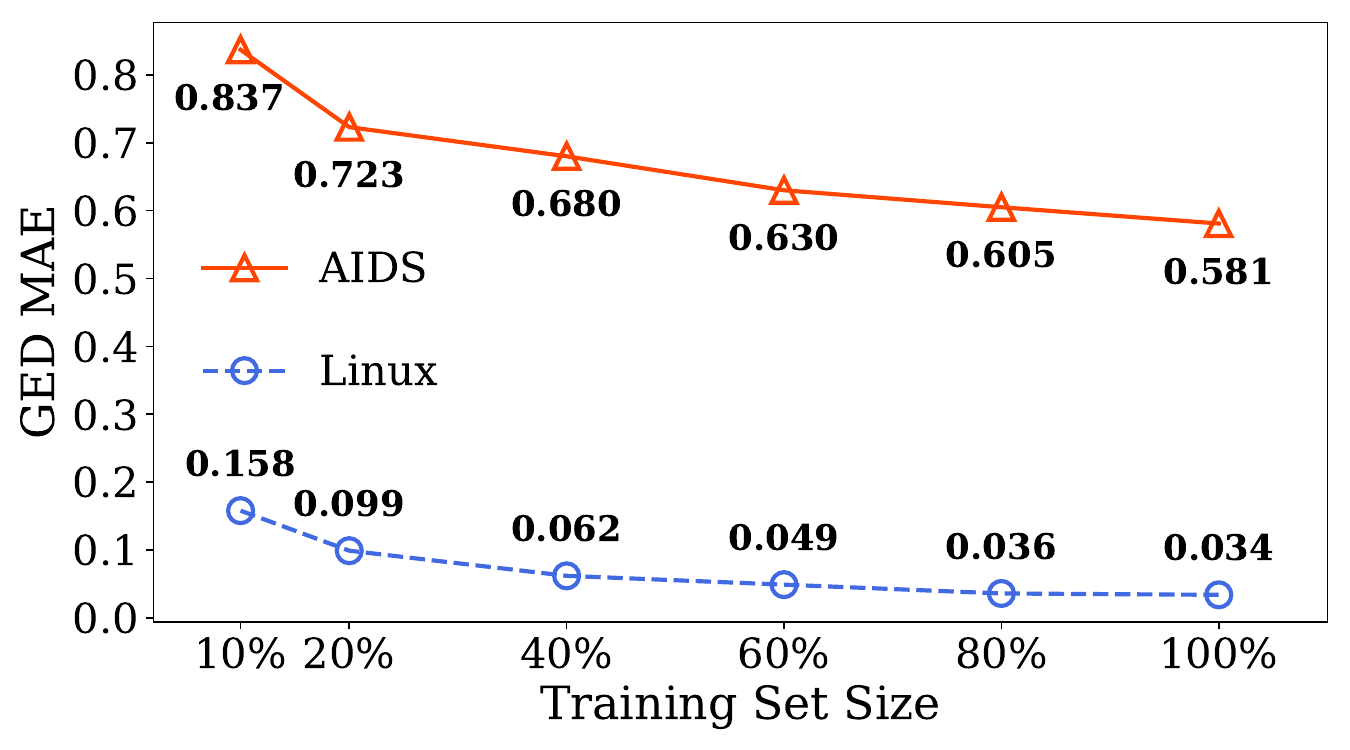}}
    \subfigure[Training Size - Accuracy]{
    \includegraphics[width=0.32\linewidth]{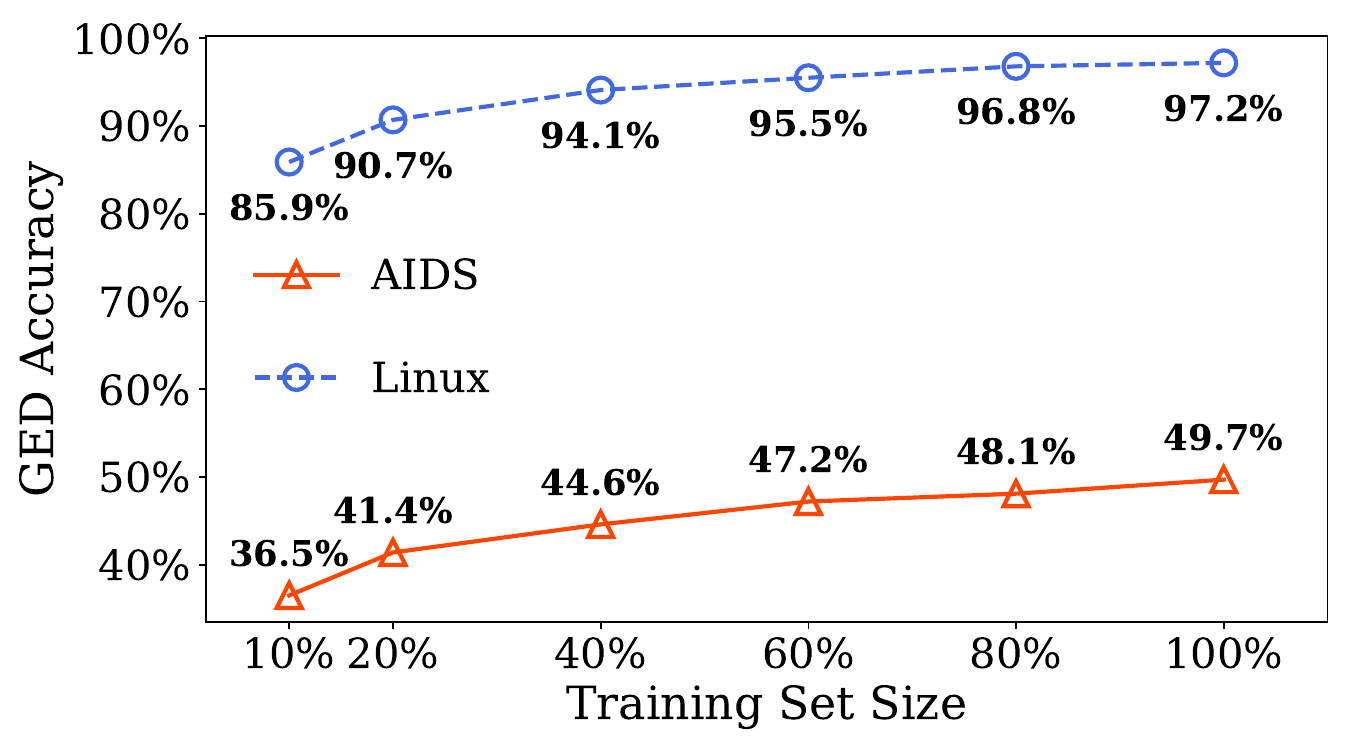}}
    \caption{Effect of Various Training Set Sizes on GEDIOT}
    \label{fig:trainsize}
\end{figure*}
\setlength{\textfloatsep}{5pt}

\begin{figure*}[t]
    \centering
    \subfigure[AIDS - Time ($sec/100p$)]{
    \includegraphics[width=0.32\linewidth]{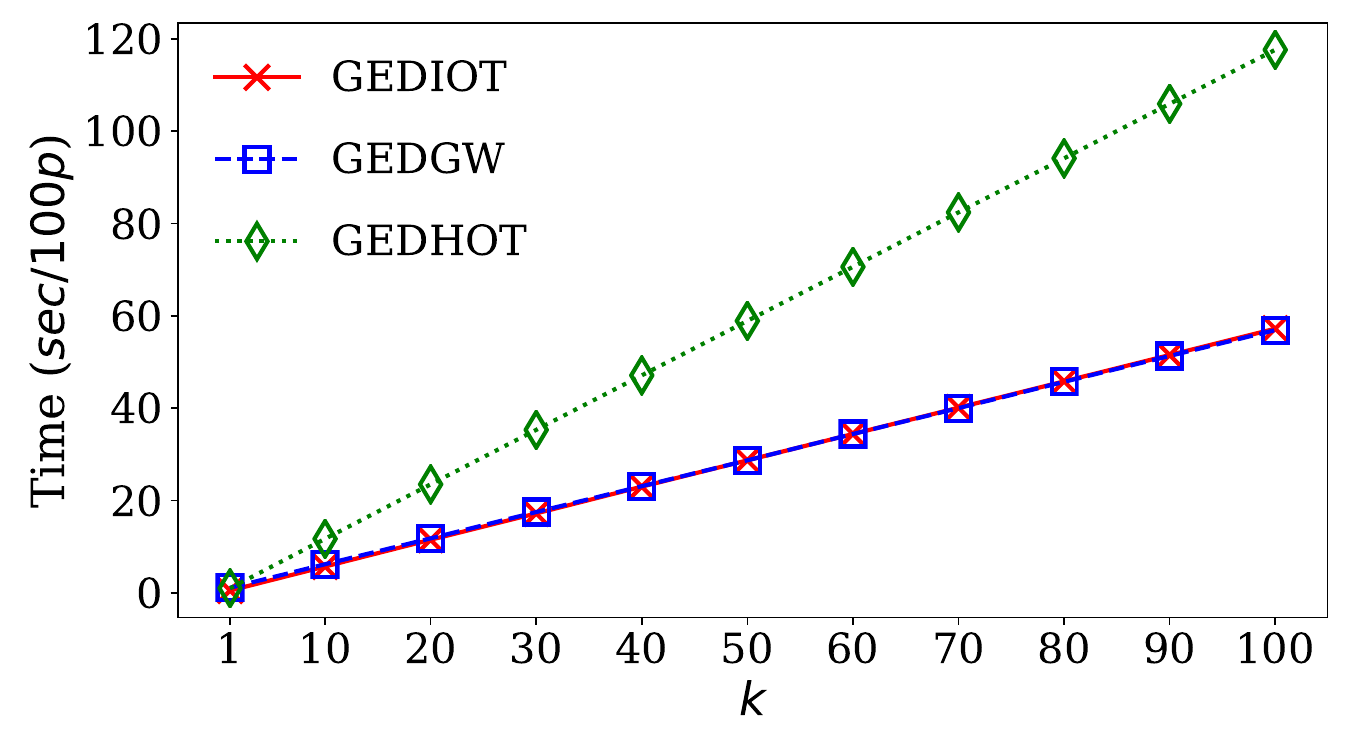}}
    \subfigure[AIDS - MAE]{
    \includegraphics[width=0.32\linewidth]{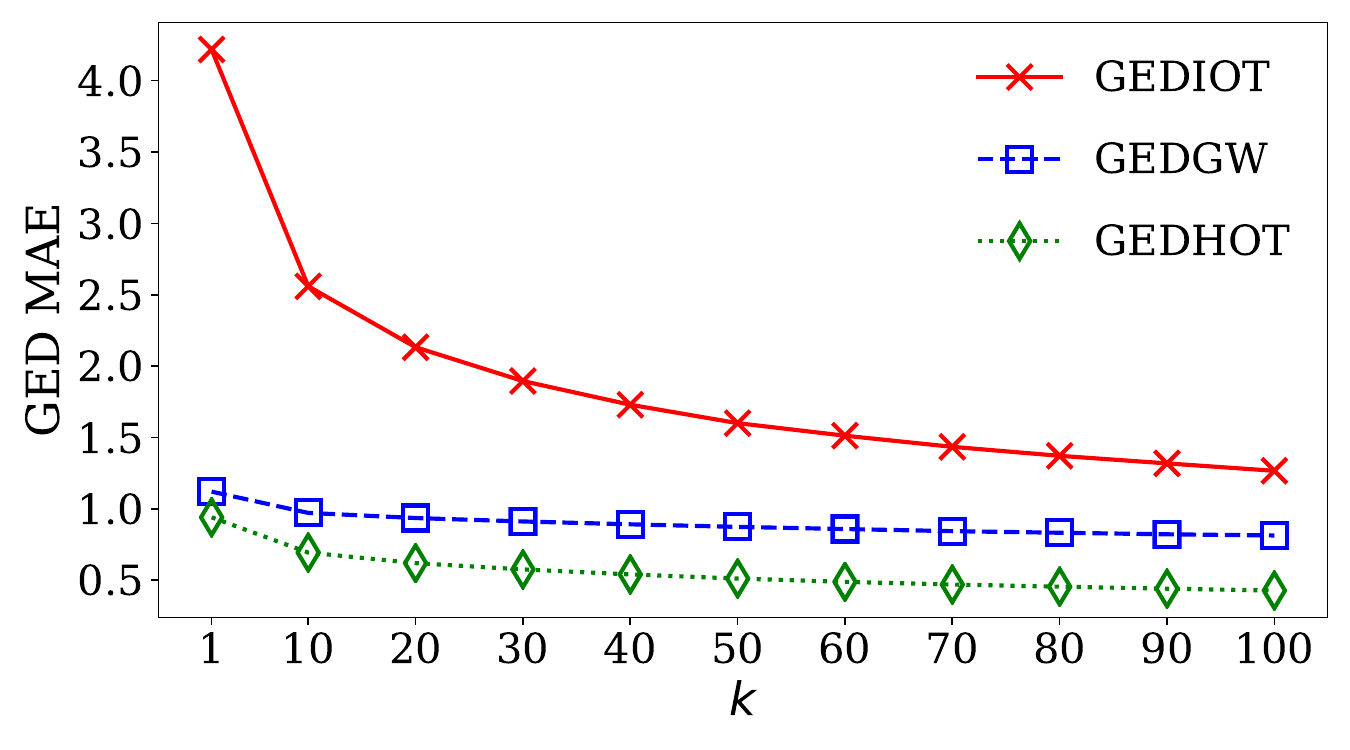}}
    \subfigure[AIDS - Accuracy]{
    \includegraphics[width=0.32\linewidth]{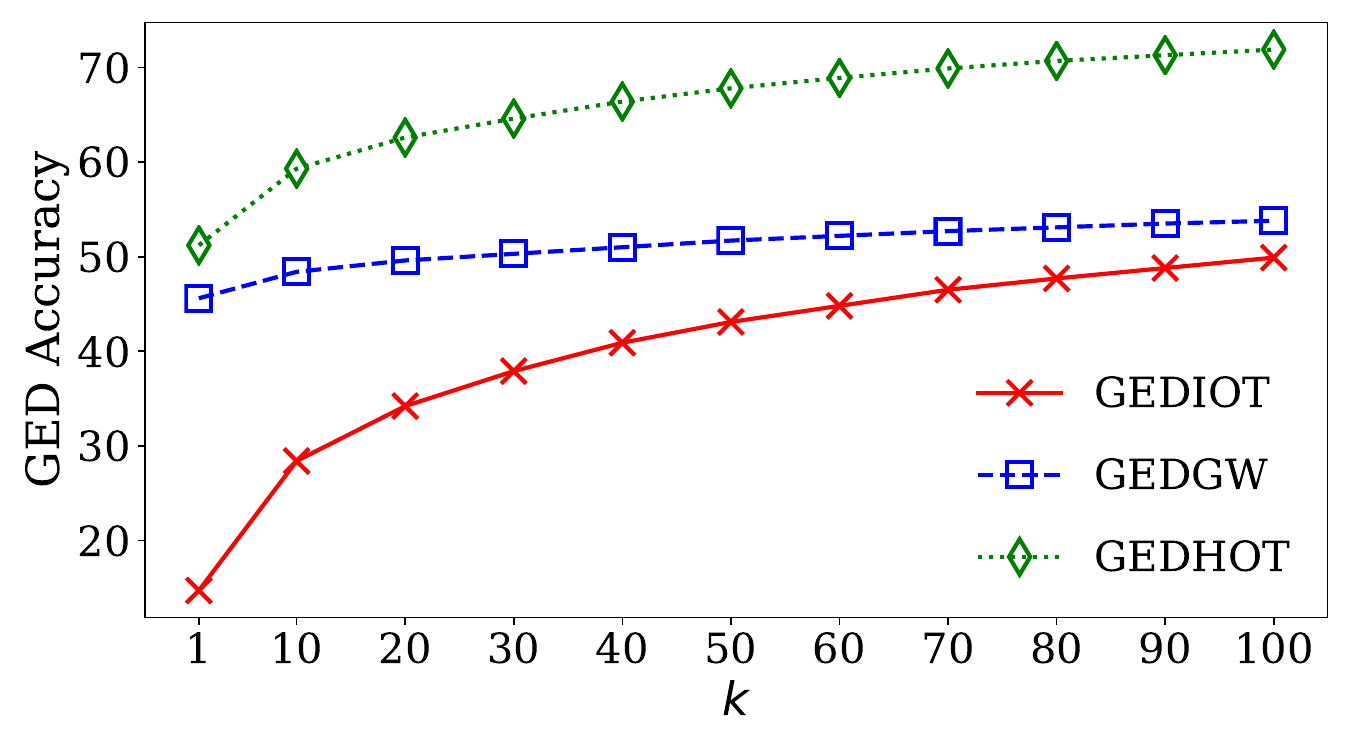}}

    \subfigure[Linux - Time ($sec/100p$)]{
    \includegraphics[width=0.32\linewidth]{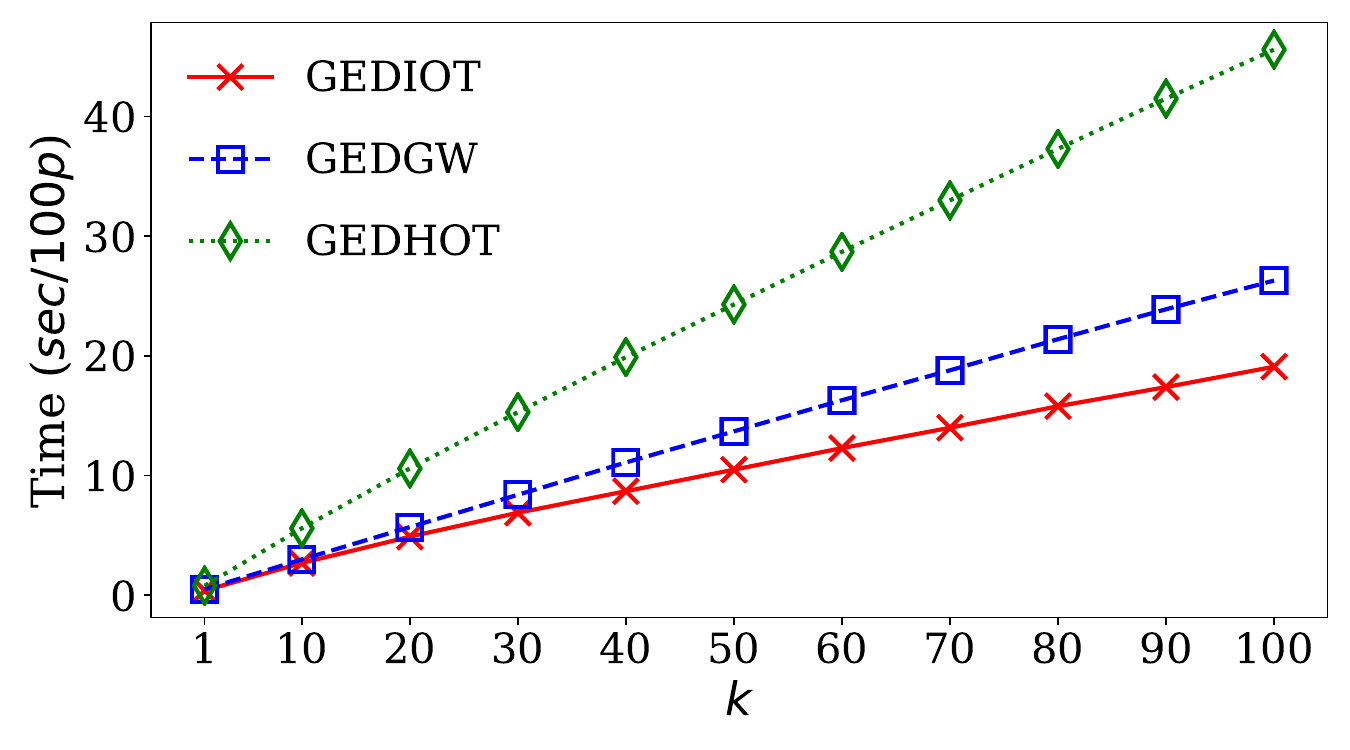}}
    \subfigure[Linux - MAE]{
    \includegraphics[width=0.32\linewidth]{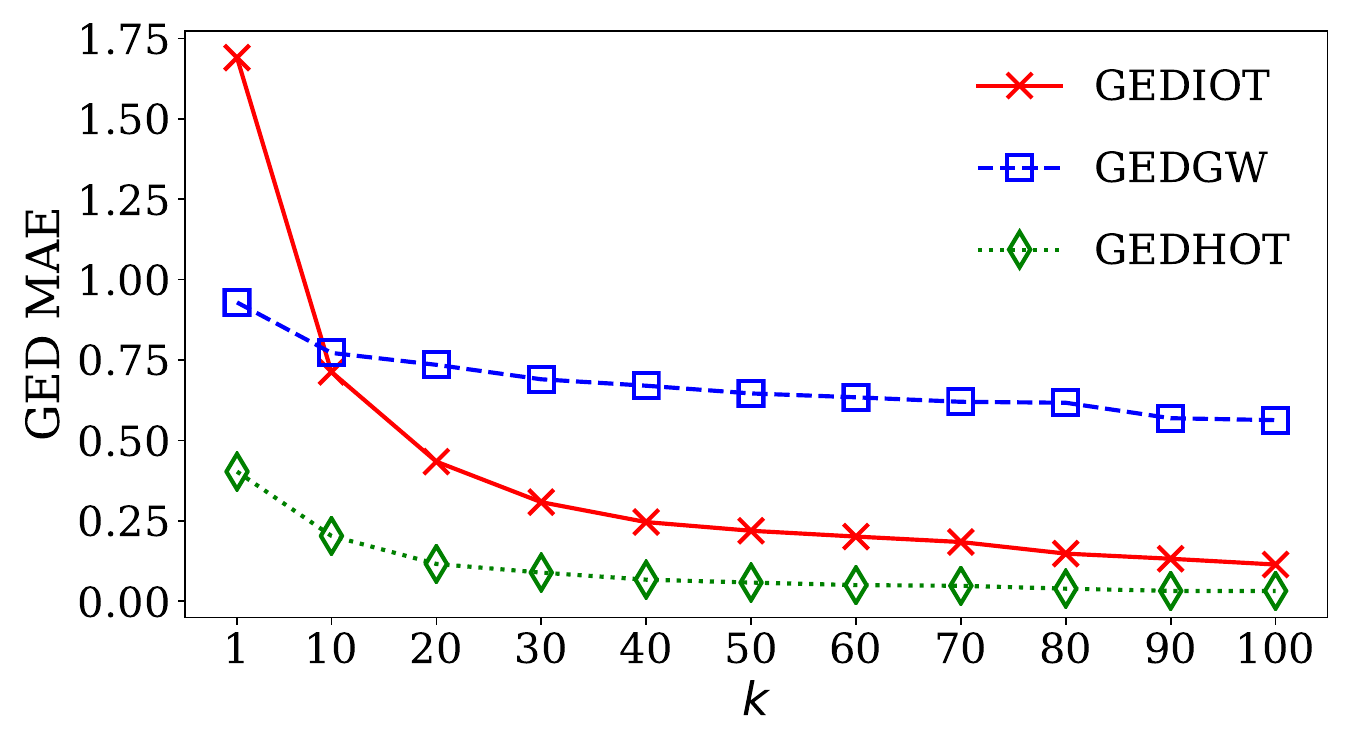}}
    \subfigure[Linux - Accuracy]{
    \includegraphics[width=0.32\linewidth]{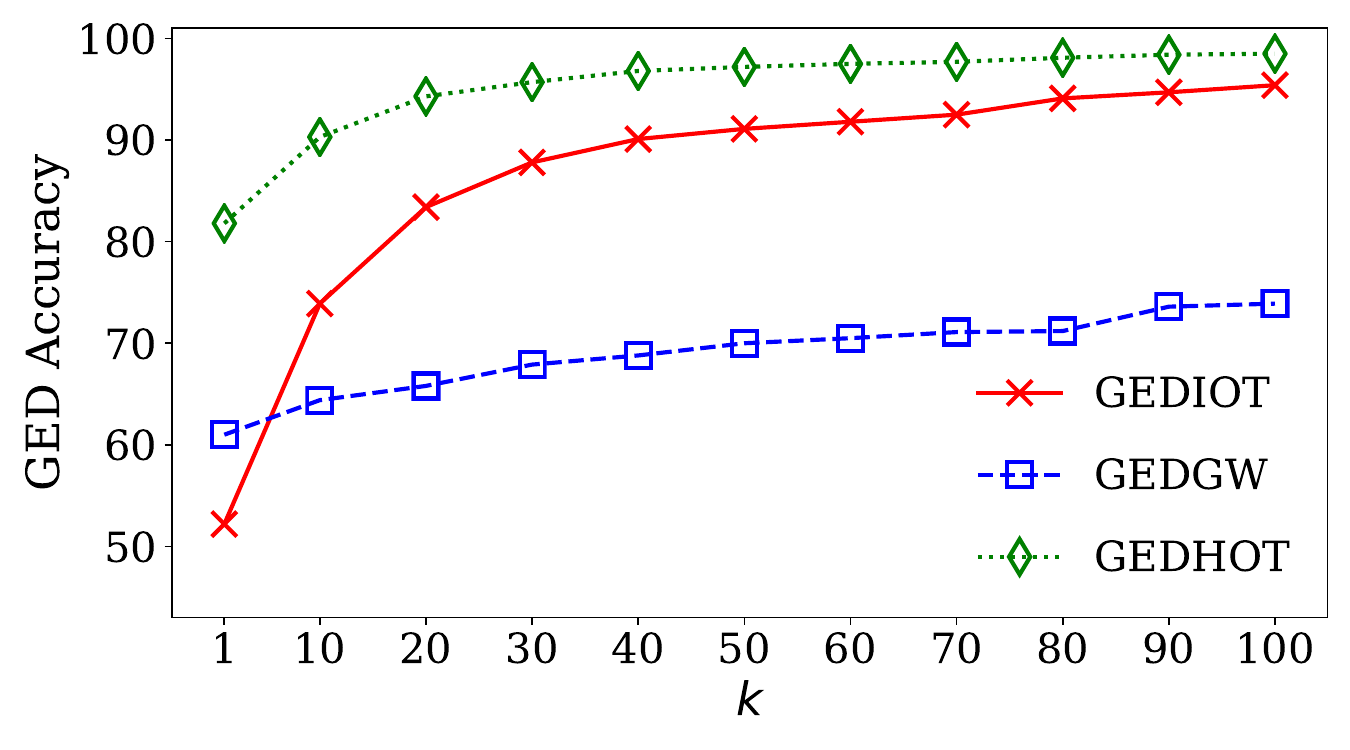}}
    
    \caption{Varying $k$ in $k$-Best Matching for GEP Generation}
    \label{fig:k-best-matching}
\end{figure*}
\setlength{\textfloatsep}{5pt}

\subsection{Ablation Study}\label{app:ablation_exp}
\vspace{1mm}
\noindent{\textbf{Varying Parameters in the Sinkhorn Algorithm.} We also study how the performance of GEDIOT is impacted as the initial regularization coefficient, denoted by $\varepsilon_0$, and the number of iterations vary in the learnable Sinkhorn layer. The results are presented in Figure~\ref{fig:eps-vary}} and Figure~\ref{fig:iternum}. 
{In Figure~\ref{fig:eps-vary}, we set $\varepsilon_0$ to $0.005$, $0.01$, $0.05$, $0.1$, $0.5$, and $1$ on AIDS and Linux. Both MAE and accuracy are stable with various $\varepsilon_0$, which shows the robustness of the learnable regularization method to $\varepsilon_0$. }
In Figure~\ref{fig:iternum}, we set the number of iterations to $1, 5, 10, 15$, and $20$ on AIDS and Linux. We can see that the MAE decreases and the accuracy increases as the number of iterations increases, but after $15$ (resp.\ $10$) iterations on AIDS (resp.\ Linux), the MAE and accuracy becomes fairly stable as the Sinkhorn algorithm converges.  
Note that the computational time also increases when conducting more iterations. Considering the time-accuracy tradeoff, we set the number of iterations to $5$ by default. 

\vspace{1mm}
\noindent{\textbf{Varying $\lambda$ in the Loss Function.} As presented in Figure~\ref{fig:lambda-vary}, we also discuss the effect of varying $\lambda$ (from $0$ to $1$) that balances the two terms $\mathcal{L}_m$ and $\mathcal{L}_v$ of the loss function in Eq.~\eqref{eq:loss}. The results show that the performance improves with the increase of $\lambda$ in $[0, 1]$ and becomes stable when $\lambda$ is around $0.8$. We set $\lambda = 0.8$ by default. }

\vspace{1mm}
\noindent\textbf{Varying the Size of Training Set.} 
In this experiment, we evaluate the effect of varying the training set size. Concretely, we randomly sample 10\%-100\% of the original training set of AIDS and Linux to retrain GEDIOT. Figure~\ref{fig:trainsize} describes its influence on training time, MAE, and accuracy of GEDIOT. It can be observed that as the training set size increases, the MAE decreases and the accuracy increases, while the training time increases linearly. Furthermore, the observed trends of MAE and accuracy with increasing training set size appear to be flattening, which shows that training set size is sufficient.

\vspace{1mm}
\noindent\textbf{$k$-Best Matching.} We further verify the effect of $k$ in $k$-best matching for GEP generation. As depicted in Figure~\ref{fig:k-best-matching}, the MAE constantly decreases and the accuracy increases as the parameter $k$ increases. Nevertheless, computational time also increases with the increase of $k$ since the search space becomes larger.

\section{More Discussion on Our Methods}

\subsection{GED Computation on Edge-labeled Graphs}
\label{app:edge-labeled}

{We here discuss how to handle the GED computation of edge-labeled graphs with GEDHOT. 
For GEDIOT, GINE~\cite{hu2019strategies} is a modified version of GIN that encodes the edge features, so we can replace GIN with GINE. For GEDGW, we can modify the 4-th order tensor $\mathcal{L}(\mathbf{A}^1,\mathbf{A}^2)$ (recall its definition from Table~\ref{Table:notations} and above Eq.~\eqref{eq:gw}), which is regarding the cost of edge edit operations. Let $\ell(u_i, u_j)$ be the label of edge $(u_i, u_j)$ and $\ell(u_i, u_j) = null$ if edge $(u_i, u_j)$ does not exist. Given $u_i, u_j$ in $G^1$ and $v_k, v_l$ in $G^2$, we set $\mathcal{L}
(\mathbf{A}^1_{i,j},\mathbf{A}^2_{k,l})_{i,j,k,l}=1$ if $\ell(u_i,u_j) \neq \ell(v_k,v_l)$, and $0$ otherwise. This modified formulation can handle edge-labeled graphs.}

\subsection{Sizes of Parameters in GEDIOT}
\label{app:parameters-sizes}

{
Like any machine learning model, we learn the model parameters during training, and these parameters are then directly used during test to provide predictions. Notably, our parameters are independent of the graph sizes $n_1$ and $n_2$, so we do not need to do any hard-coding of $n_1$ and $n_2$. Specifically,
\begin{itemize}[leftmargin=*]
    \item The first network component is graph neural network (GIN in particular), where parameters are the MLP weight matrices that only depend on the input dimension $d$ of node embeddings (see Eq.~\eqref{eq:app1} and Eq.~\eqref{eq:app2} in Section~\ref{ssec:NEC}). 
    \item The second network component is the cost matrix layer, which only has a parameter $\mathbf{W}\in\mathbb{R}^{d\times d}$ (see Eq.~\eqref{eq:app3} in Section~\ref{sec:learnot}).
    \item The third network component is the learnable Sinkhorn layer, where the only learnable parameter is the regularization parameter $\varepsilon$, which is a scalar.
    \item The last network component is the graph discrepancy component described in Section~\ref{sec:ntn}, where there is a weight matrix $\mathbf{W}_1\in\mathbb{R}^{d\times d}$ for graph pooling in Eq.~\eqref{eq:app4}, and parameters $\mathbf{W}_2^{[1:L]}\in\mathbb{R}^{L\times d\times d}$, $\mathbf{W}_3\in\mathbb{R}^{L\times 2d}$ and $\mathbf{b}\in\mathbb{R}^L$ for NTN in Eq.~\eqref{eq:app5}. Here $L$ is also a hyperparameter that is independent of graph size. 
\end{itemize}
}

\end{appendix}
\end{document}